\newcommand{\citep}{\cite}
\newcommand{\citet}{\cite}
\newcommand{\cA}{\mathcal{A}}
\newcommand{\cF}{\mathcal{F}}
\newcommand{\cH}{\mathcal{H}}
\newcommand{\cM}{\mathcal{M}}
\newcommand{\cO}{\mathcal{O}}
\newcommand{\cP}{\mathcal{P}}
\newcommand{\cR}{\mathcal{R}}
\newcommand{\cS}{\mathcal{S}}
\newcommand{\cT}{\mathcal{T}}
\newcommand{\cX}{\mathcal{X}}
\newcommand{\bE}{\mathbb{E}}
\newcommand{\bN}{\mathbb{N}}
\newcommand{\bR}{\mathbb{R}}
\renewcommand{\phi}{\varphi}
\newcommand{\neval}{n_{\texttt{eval}}}
\newcommand{\nroll}{n_{\texttt{rollout}}}
\newcommand{\epseval}{\varepsilon_{\texttt{eval}}}
\newcommand{\bepseval}{\bar{\varepsilon}_{\texttt{eval}}}
\newcommand{\epstol}{\varepsilon_{\texttt{tol}}}
\newcommand{\epsroll}{\varepsilon_{\texttt{roll}}}
\newcommand{\epstarget}{\varepsilon_{\texttt{target}}}
\newtcbox{\mymath}[1][]{%
    nobeforeafter, math upper, tcbox raise base,
    enhanced, colframe=blue!30!black,
    colback=blue!30, boxrule=1pt,
    #1}
\DeclareMathOperator{\argmin}{argmin}
\DeclareMathOperator{\argmax}{argmax}
\DeclareMathOperator{\Dist}{Dists}
\DeclareMathOperator{\poly}{\text{poly}}
\DeclareMathOperator{\Ball}{\texttt{Ball}}
\DeclareMathOperator{\Ber}{\texttt{Ber}}
\newtheorem{theorem}{Theorem}[section]
\newtheorem{lemma}[theorem]{Lemma}
\newtheorem{definition}[theorem]{Definition}
\newtheorem{assumption}[theorem]{Assumption}
\DeclareMathOperator{\reset}{\textsc{reset}}
\newcommand{\measuretd}{\texttt{measureTD}}
\DeclareMathOperator{\query}{\textsc{oracle}}
\newcommand{\Delphi}{\textsc{Delphi }}
\newcommand{\Delphinospace}{{\textsc{Delphi}}}
\DeclareMathOperator{\CubeGame}{\textsc{CubeGame}}
\DeclareMathOperator{\MDP}{\textsc{MDP}}
\DeclareMathOperator{\Wclose}{W_{\text{close}}}
\newcommand{\ctflip}{\text{ct}^\text{flip}}
\newcommand{\fix}{\text{fix}}
\newcommand{\efix}{e^\text{fix}}
\newcommand{\enfix}{e^{\neg \text{fix}}}
\title{A Few Expert Queries Suffices for Sample-Efficient RL with Resets and Linear Value Approximation}
\newcommand{\affilone}{\varphi}
\newcommand{\affiltwo}{\psi}
\author{%
  {Philip Amortila$^\affilone$}\thanks{\texttt{philipa4@illinois.edu}} \hspace{1cm} Nan Jiang$^\affilone$ \hspace{1cm} Dhruv Madeka$^\affiltwo$ \hspace{1cm} Dean P. Foster$^\affiltwo$ \\
     $^\affilone$University of Illinois, Urbana-Champaign \hspace{1cm} $^\affiltwo$Amazon
}
\begin{document}

\maketitle

\begin{abstract}

The current paper studies sample-efficient Reinforcement Learning (RL) in settings where only the optimal value function is assumed to be linearly-realizable. It has recently been understood that, even under this seemingly strong assumption and access to a generative model, worst-case sample complexities can be prohibitively (i.e., exponentially) large. We investigate the setting where the learner additionally has access to interactive demonstrations from an expert policy, and we present a statistically and computationally efficient algorithm (\Delphinospace) for blending exploration with expert queries. In particular, \Delphi requires $\tilde\cO(d)$ expert queries and a $\texttt{poly}(d,H,|\cA|,1/\varepsilon)$ amount of exploratory samples to provably recover an $\varepsilon$-suboptimal policy. Compared to pure RL approaches, this corresponds to an exponential improvement in sample complexity with surprisingly-little expert input. Compared to prior imitation learning (IL) approaches, our required number of expert demonstrations is independent of $H$ and logarithmic in $1/\varepsilon$, whereas all prior work required at least linear factors of both in addition to the same dependence on $d$. Towards establishing the minimal amount of expert queries needed, we show that, in the same setting, any learner whose exploration budget is \textit{polynomially-bounded} (in terms of $d,H,$ and $|\cA|$) will require \textit{at least} $\tilde\Omega(\sqrt{d})$ oracle calls to recover a policy competing with the expert's value function. Under the weaker assumption that the expert's policy is linear, we show that the lower bound increases to $\tilde\Omega(d)$.

\end{abstract}

\section{Introduction}

Many potential applications of reinforcement learning (RL) have intractably-large state-spaces. Thus, we seek provably-correct methods which have statistical and computational requirements that are independent of the size of the state-space. This requires some modelling assumptions. One dominating approach has been to introduce \textit{function approximation}, and to posit that the MDP or its value functions are well-represented by the function approximation scheme which is employed. A basic starting point which still lacks comprehensive understanding is the case of linear value function approximation, which models value functions as lying in the span of a known $d$-dimensional feature mapping and asks for methods which have sample complexities that are polynomial only in $d, H,$ and possibly $|\cA|$ ($H$ and $\cA$ are the horizon and action sets of the MDP, respectively). This desideratum was recently understood to be impossible for the ``minimal'' case where only the optimal value function (or optimal action-value function) is assumed to be linear -- i.e. there exist MDPs satisfying this assumption where the statistical complexity of any algorithm will be exponentially large, either in $d$ or in $H$
\citep{weisz2021exponential,weisz2021tensorplan,wang2021exponential,foster2021statistical}. Furthermore, this also holds in the case where the learner is equipped with a generative model (or simulator), allowing them to sample transitions from any state of the MDP. In recent years much has been said about linear value approximation under stronger assumptions,  for example under determinism \citep{Wen_Roy_2013}, linear/low-rank MDPs \citep{Jin_Yang_Wang_Jordan_2019,ayoub2020model}, Bellman-closedness \citep{LaSzeGe19,zanette2020learning} , or the existence of a ``core set'' \citep{shariff2020efficient,zanette2019limiting}. These stronger assumptions can recover polynomial statistical complexities (if not computational ones), but are restrictive and oftentimes unrealistic. 

In this work, we consider an alternative possibility for recovering polynomial sample complexities which do not further restrict the class of MDPs under consideration. 
That is, we assist the learner with some additional side information about the problem. Specifically, we assume that there is a deterministic expert policy (which need not be the optimal policy) that the learner can query at any state, whereupon they will be informed of the expert's action at that state. Indeed, such information can often be made readily available if we have some form of prior knowledge (or human input) about the problem. 
Leveraging such expert demonstrations has been studied in interactive imitation learning (IL), with common applications in simulated domains \citep{ross2011reduction, ross2013interactive, ross2014reinforcement,sun2017deeply}. As we will see, however, the amount of expert queries required by a pure IL approach is significantly higher than what we need. Since interaction with a (human) expert might be costly, we wish to minimize the burden of the expert by having the learner explore mostly on their own, and only query the expert in a judicious manner. %
The question asked by this work, then, is: 
\begin{quote}
    \textit{Under linear-realizability, what is the \textit{minimal} amount of expert data required for sample/computational-efficient learning, and which algorithm achieves this?}
\end{quote} 

Our main result is the \Delphi algorithm for exploring with an interactive expert. 
\Delphi assumes that the expert's value function is linear, and that the agent has access to a $\reset$ function which lets them return to the state \textit{most recently seen}. Under these conditions, our method uses surprisingly-few expert queries combined with some modest (polynomial) amount of exploration to recover the expert policy. Formally, \Delphi recovers a policy matching which is $\varepsilon-$optimal (with respect to the expert policy) with $\cO(d \log (B/\varepsilon))$ oracle calls and $\tilde\cO(\frac{d^2H^5|\cA|B^4}{\varepsilon^4})$ exploratory samples, where $B$ is a bound on the $\ell_2$-norm of the unknown linear parameter.\footnote{The $\tilde\cO$ notation ignores logarithmic factors.}
Thus, our results show that merely $\tilde\cO(d)$ expert queries enable an \textbf{exponential improvement} in sample complexity when compared to RL without expert advice. Furthermore, the number of oracle calls is completely independent of the horizon of the problem, whereas prior work in IL leveraging similar expert advice requires (at best) linear factors of $H$ in addition to scaling with $d$. We also show that $\Delphi$ is computationally efficient, that it is robust to some misspecification error, and that it can be extended to the case where the \textit{action}-value function of the expert is linear when the MDP dynamics are deterministic.

Towards establishing the optimality of our algorithm, we study the capabilities of expert-augmented learners which have fixed exploration budgets. More specifically, we ask: what is the minimal number of expert queries \textit{required} by any algorithm which is constrained to a \textit{polynomially-bounded} exploration budget? %
We show that any polynomially-bounded learner (in terms of $d,H,$ and $|\cA|$) will require at least $\tilde\Omega(\sqrt{d})$ oracle calls to recover a policy competing with the expert's value function. In the more relaxed setting where only the expert's \textit{policy} is linear, we show that this lower bound increases to $\tilde\Omega(d)$, matching our upper bound up to logarithmic factors.

The rest of the paper is structured as follows: in Section \ref{sec:background} we review background and present the problem setting. Section \ref{sec:delphi} describes our algorithm, its guarantee, a sketch of the proof, and discusses some extensions. Section \ref{sec:lb} studies the minimal amount of expert queries needed. We conclude with an overview of related work and some discussion in Sections \ref{sec:prior} and \ref{sec:conc}.

\section{Background \& Problem Setting}\label{sec:background}

 \paragraph{Notation} We write $\Dist(\cX)$ for the set of probability measures on some set $\cX$. We write $[N] \coloneqq \{1, \dots, N\}$.
The \textit{direct product} $\oplus$ corresponds to ``concatenating'' two vectors, i.e. for any two vectors $u \in \bR^n$ and $v \in \bR^m$, we have $u \oplus v = (u_1,\dots,u_n, v_1, \dots, v_m)^\top \in \bR^{n+m}$. We write $\otimes$ for the tensor (or outer) product of two vectors, defined by $u \otimes v = u v^\top \in \bR^{n \times m}$, and $\flat( u \otimes v) \in \bR^{n \cdot m}$ for the flattening (or vectorization) of said tensor product.

\paragraph{MDPs, Policies, and Value Functions} The typical environment in RL is modelled as an MDP \citep{puterman2014markov,szepesvari2010algorithms,sutton2018reinforcement}. We consider here finite-horizon MDPs, which are specified by a tuple  $\cM = \left( \cS, \cA, \cR, \cP, H, \mu_0\right)$, where $\cS$ is a state space, $\cA = [A]$ is a finite action set, $\cR: \cS \times \cA \rightarrow \Dist([0,1])$ is a (bounded) reward distribution function with expectation $r(s,a)$, $\cP: \cS \times \cA \rightarrow \Dist(\cS)$ is the transition distribution function with probability vectors $P(s,a) = [ \cP(s'|s,a)]_{s' \in \cS} \in \bR^{|\cS|}$, $H \in \bN$ is the horizon, and $\mu_0 \in \Dist(\cS)$ is the starting distribution. Note that we have assumed that the action space is finite, although the state space may be infinite. Without loss of generality we assume that $\cS$ is a disjoint union of per-horizon state spaces, i.e. $\cS = \cup_{h \in [H]} \cS_h$. 

A (non-stationary) policy prescribes a sequence of actions
$\pi : \cS_h \rightarrow \Dist(\cA)$, and its \textit{value function} is
$
    v^{\pi}(s) =\mathbb{E}[\sum_{h'=h}^H r(s_{h'},a_{h'}) \mid s_h = s, a_{h'} \sim \pi(s_{h'})],
$ 
where $s \in \cS_h$. The \textit{action-value} function $q^{\pi}(s,a)$ is defined similarly, save that the first action taken is $a$ and the proceeding actions follow $\pi$. 
Value functions satisfy the recursive relationship:
\begin{align}  \label{eq:bellman} 
    v^{\pi}(s) = r(s,\pi) + \langle P(s,\pi), v^\pi(\cdot) \rangle
            \coloneqq \cT^{\pi} v^\pi(s)
\end{align}
where we have the shorthands $r(s,\pi) = \bE_{a \sim \pi(s)}[r(s,a)]$, $P(s,\pi) = \bE_{a \sim \pi(s)}[P(s,a)]$, and the Bellman operator $\cT^\pi (\cdot) \coloneqq r(s,\pi) + \langle P(s,\pi), (\cdot) \rangle$. The Bellman operator has $v^\pi$ as its unique fixed point. The optimal policy is written $\pi^\star$, and its value function is denoted as $v^\star \coloneqq v^{\pi^\star}$. The objective is to find a $\pi$ maximizing $v^{\pi}(\mu_0) \coloneqq \bE_{s_0 \sim \mu_0}[ v^\pi(s_0)]$.

\paragraph{Function Approximation With an Interactive Expert}
 In the RL setting, the MDP is unknown and must be explored. As stated in the introduction, we seek sample complexities which are independent of the number of states. This is evidently not possible without further assumptions. In linear value approximation the leaner is provided with a \textit{feature mapping} $\varphi: \cS \rightarrow \bR^d$ used to approximate value functions linearly, i.e. $v_\theta(s) = \langle \theta, \phi(s) \rangle, $ for $\theta \in \bR^d$. 
 
To assist the learner, we assume that the agent has further access to an \textit{oracle}, which, upon being queried, returns the action of an 
\textit{expert} policy $\pi^\circ$ for the state. The expert policy need not be the optimal policy.  %
We will assume for simplicity that $\pi^\circ$ is deterministic. 

\begin{assumption}[Interactive expert]\label{ass:inter-exp}
There is an \textit{oracle} which can be queried at any state $s$, which returns an action $\pi^\circ(s)$. Syntactically, the oracle is queried via the $\query(s)$ function.
\end{assumption}
The objective, then, is to recover a policy which competes with the expert policy, namely a $\hat\pi$ such that:
\begin{equation}\label{eq:pac-expert}
v^{\hat{\pi}}(\mu_0) \geq v^\circ(\mu_0) - \varepsilon \, \quad \text{with probability } \geq 1-\delta,
\end{equation}
where $v^\circ \coloneqq v^{\pi^\circ}$ is the value function of the expert. In the sequel we refer to a policy satisfying Equation \ref{eq:pac-expert} as \textit{$\varepsilon$-optimal}. To aid in this objective, our next assumption is that the expert's value function is linear in a set of known features. 

\begin{assumption}[$v^\circ$-linearity, with bounded features]\label{ass:exp-lin}
The value function $v^\circ$ \textit{of the expert} is linear with known features $\varphi$, i.e.
\begin{equation}\label{eq:lin-exp}
    v^\circ(s) = \langle \phi(s) , \theta^\circ \rangle, ~ \forall s \in \cS,
\end{equation}
for some unknown $\theta^\circ \in \bR^d$. We further assume that $\norm{\phi(s)}_2 \leq 1 \, \forall s$ and that $\norm{\theta^\circ}_2 \leq B$ for some known $B \in \bR^d$. 
\end{assumption}
Our last assumption is that the agent has the ability to ``reset'' to the state just-experienced, formally:
\begin{assumption}[Resets]\label{ass:reset}
After experiencing a transition $(s,a,r,s')$ in the MDP, the agent can return to the state $s$. Syntactically, this is done via the $\reset()$ function.
\end{assumption}
As noted in the introduction, Assumptions \ref{ass:exp-lin} and \ref{ass:reset} together are not enough to enable sample-efficient learning, as shown by existing exponential lower bounds. Thus, any algorithm for this setting must necessarily make use of Assumption \ref{ass:inter-exp}. %
Our $\reset$ assumption is also weaker than full generative model access \citep{kearns2002sparse} or the ``local'' simulator setting \citep{weisz2021query,li2021sample,hao2022confident} which has appeared in prior works. %

 \section{The \Delphi algorithm}\label{sec:delphi}

We are ready to describe our approach and give the main result. We begin by supposing that the starting distribution is deterministic (we will see later that this comes at no loss of generality). 

\begin{theorem}\label{thm:delphi-ub}
Suppose Assumptions \ref{ass:inter-exp}, \ref{ass:reset}, and \ref{ass:exp-lin} hold. Then the \Delphi algorithm will recover a policy $\hat{\pi}$ such that
$v^{\hat \pi} (s_0) \geq v^\circ(s_0) - \varepsilon$  with probability $\geq 1-\delta$, 
using $\cO(d\ln(B/\varepsilon))$ oracle calls and $\tilde\cO(\frac{d^2 H^5 A B^4}{\varepsilon^4})$ interactions with the MDP. Furthermore this algorithm is computationally efficient.
\end{theorem}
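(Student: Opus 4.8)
The plan is to design \Delphi as an iterative scheme that, at each round, maintains a candidate parameter $\hat\theta$ (equivalently a candidate linear value function $v_{\hat\theta}$) and the greedy-type policy induced by it, and uses exploration to \emph{test} whether that candidate is a good surrogate for $v^\circ$. The key observation is that under Assumption~\ref{ass:exp-lin} we only need to identify $\theta^\circ$ well enough along the directions $\phi(s)$ that are actually visited when we roll out a near-expert policy. So the algorithm should (i) roll out the current greedy policy with resets, at each visited state estimate the one-step Bellman residual of $v_{\hat\theta}$ under the \emph{expert action} (using $\query$) — i.e.\ estimate $\hat\theta^\top\phi(s) - r(s,\pi^\circ(s)) - \bE[\hat\theta^\top\phi(s')]$ by Monte-Carlo with $\reset$ to get fresh transitions from $s$; (ii) if some visited state has a large residual, use that state's feature vector as a new constraint, call the oracle there, and re-solve a (regularized) least-squares / feasibility problem for $\hat\theta$; (iii) otherwise certify that $v^{\hat\pi}(s_0)\ge v^\circ(s_0)-\varepsilon$ via a performance-difference / telescoping argument and halt.

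The steps, in order. First, set up the \emph{certification lemma}: if $\pi$ is greedy w.r.t.\ $v_{\hat\theta}$ and the expected one-step Bellman error of $v_{\hat\theta}$ along the trajectory of $\pi$ (with expert actions) is uniformly at most $\varepsilon/H$ in absolute value, then by telescoping $v_{\hat\theta}(s_0) - v^\pi(s_0)$ and $v_{\hat\theta}(s_0) - v^\circ(s_0)$ along the respective occupancies, and using that $\pi$ is greedy (so it does at least as well as the expert action w.r.t.\ $v_{\hat\theta}$ at each state), one gets $v^\pi(s_0)\ge v^\circ(s_0)-\varepsilon$. Second, the \emph{progress argument}: whenever a visited state $s$ has $|{\hat\theta}^\top\phi(s)-\cT^{\pi^\circ}v_{\hat\theta}(s)|$ large, the true $\theta^\circ$ satisfies $\theta^{\circ\top}\phi(s) = \cT^{\pi^\circ}v^\circ(s)$, so adding the new (approximate) equality constraint at $\phi(s)$ cuts off the current $\hat\theta$; because $\|\theta^\circ\|_2\le B$ and features are bounded by $1$, a volumetric / potential argument (e.g.\ an ellipsoid-style or eluder-dimension-style bound, or a determinant potential on the Gram matrix of collected $\phi(s)$'s) shows this can happen only $\cO(d\log(B/\varepsilon))$ times — which is exactly the oracle-call budget. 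Third, the \emph{statistical layer}: each residual must be estimated to accuracy $\approx\varepsilon/(H\,\mathrm{poly})$ with high probability; using $\reset$ we draw $\tilde\cO(\mathrm{poly}(H,B)/\varepsilon^2)$ fresh transitions per (state, step) and union-bound over the $H$ steps, the $A$ candidate greedy actions (to evaluate the $\argmax$), and the $\cO(d\log(B/\varepsilon))$ rounds; a loop/stopping analysis converts per-round costs into the claimed $\tilde\cO(d^2H^5AB^4/\varepsilon^4)$ total. Finally, \emph{computational efficiency}: the per-round update is a convex feasibility / least-squares problem in $d$ dimensions and the greedy action is an $\argmax$ over $A$ linear functionals, both polynomial-time; the reduction from deterministic $s_0$ to general $\mu_0$ is a standard augmentation (add a dummy initial state / absorb $\mu_0$ into the first transition).

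The main obstacle I expect is the interaction between exploration, resets, and the \emph{non-stationarity of the candidate policy}: because $\hat\theta$ changes each round, the trajectory we roll out changes, so the set of "directions to fix" is adaptive, and one must argue that the volumetric progress bound is not spoiled by re-visiting nearly-parallel features across rounds — this is where a careful eluder/determinant potential, with a threshold tied to $\varepsilon$ and $B$, does the work, and where the logarithmic-in-$B/\varepsilon$ (rather than merely polynomial-in-$d$) oracle count has to be extracted. A secondary technical point is controlling error amplification: an $\eta$ error in the estimated Bellman residual at each of $H$ steps, compounded over the certification telescoping and over rounds, must stay below $\varepsilon$, which forces the $\varepsilon^4$ and $H^5$ dependence and requires the Monte-Carlo accuracy and the feasibility-problem's slack to be chosen consistently. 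Everything else — the telescoping performance-difference bound, the greedy-action optimality step, the $\mu_0$ reduction, and the per-round convex-programming cost — is routine once the certification lemma and the adaptive volumetric bound are in place.
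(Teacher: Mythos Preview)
Your plan has a fundamental gap that prevents the claimed $\cO(d\ln(B/\varepsilon))$ oracle complexity. In step (i) you check, at \emph{every} state visited during rollouts, the Bellman residual of $v_{\hat\theta}$ under the expert action $\pi^\circ(s)$ --- but computing $\hat\theta^\top\phi(s) - r(s,\pi^\circ(s)) - \bE_{s'\sim P(s,\pi^\circ(s))}[\hat\theta^\top\phi(s')]$ requires knowing $\pi^\circ(s)$, i.e.\ a call to $\query(s)$. A single round visits on the order of $H\cdot\nroll$ states, and you have $\tilde\cO(d)$ rounds, so your scheme burns at least $\tilde\Omega(dH)$ oracle calls (and far more once $\nroll$ is accounted for), not $\tilde\cO(d)$. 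The key idea you are missing is that the per-state test must not touch the expert action: at each visited state $s$, the paper estimates the TD vector $\hat\Delta_{s,a}$ for \emph{every} action $a\in\cA$ (no oracle needed, only $\reset$), and the rollout policy $\pi_\theta$ plays \emph{any} action whose TD error $|\langle\hat\Delta_{s,a},1\oplus\theta\rangle|$ is below tolerance. The oracle is invoked only when \emph{no} action is consistent --- in which case the expert action is also inconsistent, so querying it at that single state produces a genuinely new constraint that shrinks the version space. This is precisely what ties the number of oracle calls to the Eluder dimension of the linear class rather than to the horizon or the number of rollouts.

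There is a second gap in your certification lemma. You want to conclude $v^\pi(s_0)\ge v^\circ(s_0)-\varepsilon$ by telescoping both $v_{\hat\theta}(s_0)-v^\pi(s_0)$ and $v_{\hat\theta}(s_0)-v^\circ(s_0)$ ``along the respective occupancies.'' The first telescoping runs along $\pi$'s trajectory and is indeed controlled by your hypothesis plus greediness. But the second runs along $\pi^\circ$'s trajectory, where you have no control of the expert-action residual (you only checked it at states $\pi$ visits, and $\pi$ need not visit the same states as $\pi^\circ$). The paper closes this via \emph{optimism}: it maintains a version space $\Theta_t$ of parameters satisfying all oracle-derived constraints, argues that $\theta^\circ\in\Theta_t$ throughout (since $\langle\Delta_{s,\pi^\circ(s)},1\oplus\theta^\circ\rangle=0$ exactly, by the Bellman equation for $v^\circ$), and selects $\theta_t=\argmax_{\theta\in\Theta_{t-1}}\theta^\top\phi(s_0)$, which gives $v_{\theta_t}(s_0)\ge v^\circ(s_0)$ for free. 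Then only the single telescoping $v^{\pi_\theta}(s_0)\ge v_\theta(s_0)-O(\varepsilon)$ along $\pi_\theta$'s own trajectory is needed, and that one is controlled because $\pi_\theta$ only plays actions with small TD error by construction. Your least-squares/feasibility update has no optimistic selection, so even with the first gap repaired the certification would not go through as written.
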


 The pseudo-code for \Delphi is given in Algorithm \ref{alg:delphi}, which uses Algorithm \ref{alg:approx-measure} (\measuretd) as a sub-routine to measure expectations. 
 
\paragraph{Intuition for \Delphi} Recall that the expert policy $\pi^\circ$ satisfies $v^\circ = \cT^{\pi^\circ} v^\circ$, and that this fixed point is unique. We say that a candidate value function is \textit{consistent} at a state $s$ if $v(s) = \cT^{\pi^\circ}v(s)$. Note that we need consistency to hold at all states in order to ensure that $v = v^{\pi^\circ}$.

\Delphi is inspired by a recent algorithm of \citet{weisz2021tensorplan} called TensorPlan. As in TensorPlan, \Delphi proceeds via a ``guess and check'' procedure: at every iteration, we pick the \textit{optimistic} linear parameter which is \textit{consistent} on the past expert data that we have seen. %
Let's call the parameter chosen during a certain iteration $t$ as $\theta_t$. We then check whether this choice of parameters is globally consistent, by playing $\nroll$ rollouts of length $H$ with a policy derived from $\theta_t$. More specifically, the policy $\pi_{\theta_t}$ will play \textit{any action} $a$ such that $v_{\theta_t}$ is consistent with the Bellman update for that action, i.e. any action $a$ such that $v_{\theta_t}(s) = r(s,a) + \langle P(s,a),v_{\theta_t}(\cdot)\rangle$. (In reality, these expectations are estimated by playing the transition $(s,a)$ repeatedly, using the $\reset$ function.)

After a certain number of rollouts, one of two things happen: either this policy encounters a state where there is no consistent action, or we only encounter states that have a consistent action. In the first case, we are also inconsistent for the expert action at that state (since all actions are inconsistent), thus we query the oracle and update the parameter set. %
In the second case, we derive (cf. Lemma \ref{lem:virtual}) that if no inconsistencies are observed for several rollouts, then our ``virtual value'' $v_\theta$ is equal to the true value under $\pi_\theta$ (i.e., $v^{\pi_\theta}$). Using the optimistic property, this implies that we are optimal.

The only thing left to argue is that the number of iterations (i.e. the number of times that we can continue finding new parameters which are not globally consistent) is small. Using linearity of $v^\circ$, it turns out
roughly $d$ inconsistencies are sufficient for this. To see this, note that we can re-write the Bellman equation for any $v_\theta(\cdot) = \langle \phi(\cdot), \theta \rangle $ as:
\begin{equation}
    v_\theta(s) = \cT^{\pi^\circ}v_\theta(s)  \iff ~ 0 = r(s,\pi^\circ(s)) + \langle \bE[\phi(s')] - \phi(s), \theta \rangle \iff 0 = \langle  \Delta_{s,a} , 1 \oplus \theta \rangle, \label{eq:ortho}\\
\end{equation}
where we have used linearity of expectation, linearity of inner products, the definition of the direct product, and introduced the notation $\Delta_{s,a} \coloneqq r(s,a) \oplus \left(\bE[\phi(s')] - \phi(s)\right)$. We call the vector $\Delta_{s,a}$ the \textit{temporal difference} (TD) vector for $(s,a)$.
Equation \eqref{eq:ortho} is precisely an orthogonality constraint in $d+1$ dimensions. Thus, the parameter $\theta_t$ which is chosen at time $t$ is orthogonal to the previous $t-1$ TD vectors which have been generated from interactions with the oracle. 
If we happen to find a state which has no consistent action, then the TD vector corresponding to the expert action at that state must not be in the span of the previous expert TD vectors (otherwise it would be consistent). It follows that the iteration complexity is at most $d+1$, since there are at most $d+1$ linearly independent vectors in $\bR^{d+1}$. We use the Eluder dimension \citep{russo2013eluder} to generalize this argument to the case where the expectations are estimated. 

The next section solidifies the above intuition and sketches the proof more formally.

\begin{algorithm}[t]
\caption{\Delphi}\label{alg:delphi}
\begin{algorithmic}[1]
\State \textbf{Inputs:}
$s_0,\phi,\texttt{sub-optimality }\epstarget, \texttt{confidence } \delta, \texttt{parameter bound } B$
\State $\Theta_0 \gets \Ball_{\ell_2}(B)$ \Comment{$\Theta_t:$ current consistent parameters}
\State Initialize $E_d$, $\neval$,$\nroll$, and $\epstol$ via Equations \eqref{eq:eluder}, \eqref{eq:neval}, \eqref{eq:nroll}, \eqref{eq:epstol}
\For{$t=1$ to $E_d+1$}
	\State
	Pick $\theta_t \in\argmax_{\theta\in \Theta_{t-1}} \left( v_\theta(s_0) \coloneqq \theta^\top \phi(s_0)\right)$
		\Comment{Optimistic choice over $\Theta_{t-1}$} \label{line:new-theta} \label{line:new-iter}
	\State \texttt{consistent} $\gets$ \texttt{true}
	\For{$m=1$ to $\nroll$}			\Comment{$\nroll$ number of rollouts with $\theta_t$-induced policy} \label{line:nroll}
		\State  $S_{t,m,h}=s_0$ \Comment{Initialize rollout}
		\For{$h=1$ to $H$}
		    \For{$a \in [A]$} \Comment{For each action}
			\State
    			$\hat\Delta_{S_{t,m,h},a} \gets \measuretd(S_{t,m,h},a, \neval)$ \Comment{Measure TD vector at $(s,a)$} \label{line:avg-calc}
            \EndFor
			\If{
    	        $\min_a \left| \langle \hat{\Delta}_{S_{t,m,h},a}, 1 \oplus \theta_t \rangle \right| >\epstol$} \Comment{No consistent action} \label{line:consistency-test}
    	        \State \texttt{consistent} $\gets$ \texttt{false}
			    \State $a^\circ_t \gets \query(S_{t,m,h})$ \Comment{ Query oracle for $\pi^\circ(S_{t,m,h})$}
				\State $\tilde\Delta_{S_{t,m,h},a^\circ_t} \gets \measuretd(S_{t,m,h},a^\circ_t, 4E_d\neval)$ \label{line:avg-calc2} \Comment {Refined data}
				\State $\Theta_t \gets \Theta_{t-1} \cap \{\theta \mid |\langle \tilde\Delta_{S_{t,m,h},a^\circ_t}, 1 \oplus \theta \rangle | \leq \epstol \}$ \Comment{New admissible parameters} \label{line:new-params}
				\State Exit current iteration, $t \gets t+1$, Goto Line \ref{line:new-iter}.
			\EndIf
            \State $A_{t,m,h}\gets \argmin_{a\in[A]} \left|\langle \hat\Delta_{S_{t,m,h},a}, 1\oplus \theta_t \rangle \right|$ \label{line:action-choice} \Comment{Else consistent, keep playing}
			\State Play $A_{t,m,h}$, get $R_{t,m,h},S_{t,m,h+1} \sim \texttt{MDP}$
			 \Comment{Roll forward} \label{line:simulate-choice}
		\EndFor
	\EndFor
	\If{\texttt{consistent} == \texttt{true}}
	    \State \Return $\theta_{t}$ \Comment{No inconsistency for $m$ rollouts $\implies$ success} \label{line:cleantest-break}
	\EndIf
\EndFor
\State \Return $\theta_{E_d+1}$
\end{algorithmic}
\end{algorithm}

\begin{algorithm}[H]
\caption{\measuretd}\label{alg:approx-measure}
\begin{algorithmic}[1]
\State \textbf{Inputs:}
$s,a,\phi(\cdot),n,\reset()$
\For{$i=1$ to $n$}
  \State Play action $a$ at $s$, receive sample $R_l$ and $S'_l$ from MDP
  \State $\Delta_i\gets R_l \oplus \left(\phi(S'_l)-\phi(s)\right)$\label{line:approx-measure}
  \State $\reset()$
\EndFor
\State \Return $\hat\Delta_{s,a}:=\frac{1}{n}\sum_{i\in[n]} \Delta_i$

\end{algorithmic}
\end{algorithm}

\subsection{Proof sketch}
The full proof comes in 4 parts. The proofs for all Lemmas are provided in Appendix \ref{app:ub}.
\begin{enumerate}[leftmargin=*]
    \item Lemmas \ref{lem:conc} and \ref{lem:conc-star} gives concentration bounds which establish that, with high probability, the measurements $\hat\Delta$ (Line \ref{line:avg-calc}) and $\tilde\Delta$ (Line \ref{line:avg-calc2}) concentrate to the average $\Delta_{s,a} = \bE_{R(s,a)}r \oplus \left(\bE_{P(s,a)} \phi(s') - \phi(s) \right)$.
    \item Lemma \ref{lem:theta-star} establishes that, with high probability, the true optimal parameter $\theta^\circ$ is not eliminated from $\Theta_t$ for any parameter set that is encountered. It follows (Lemma \ref{lem:optimism}) by optimism that $v_{\theta_t}(s_0) \geq v^\circ(s_0)$ with high probability, where $\theta_t$ is the parameter chosen at time $t$.
    \item Lemma \ref{lem:iter} establishes an iteration bound: the algorithm will terminate after at most $t = E_d$ iterations of the outermost loop (and thus after at most $E_d$ oracle queries). The quantity $E_d$ happens to be the Eluder dimension of our linear function class.
    
    \item Lastly, Lemma \ref{lem:virtual} establishes that if $\nroll$ number of rollouts occur without observing a consistency break, then the virtual value ($v_\theta(s_0)$) must be roughly equal to the true value under the executed policy ($v^{\pi_\theta}(s_0)$). Theorem \ref{thm:delphi-ub} combines all the ingredients to conclude the proof.
\end{enumerate}

\subsubsection{Part 1: Concentration bounds}
Recall that $\Delta_{s,a} \coloneqq r(s,a) \oplus \left(\bE[\phi(s')] - \phi(s)\right)$ is the true TD vector, $\hat\Delta$ is the estimated TD vectors obtained with $\neval$ samples (in Line \ref{line:avg-calc}), and $\tilde\Delta$ is the ``refined data'' obtained with $4 E_d \neval$ samples (in Line \ref{line:avg-calc2}). The following lemmas establish concentration of $\hat\Delta$ and $\tilde\Delta$ to the true TD vector. 
\begin{restatable}[Concentration of $\hat\Delta_{s,a}$ (Line \ref{line:avg-calc})]{lemma}{conc}\label{lem:conc}
For any $s,a \in \cS \times \cA$ that is observed throughout the execution \Delphi, with $\neval$ samples in Line \ref{line:avg-calc}, we have that with probability  $\geq 1-\delta$, $\norm{\hat{\Delta}_{s,a}-\Delta_{s,a}}_\infty \leq \epseval$ and thus that $\langle 1 \oplus \theta , \hat{\Delta}_{s,a} - \Delta_{s,a} \rangle \leq \bepseval $.
\end{restatable}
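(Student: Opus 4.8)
The plan is to treat this as a standard bounded-differences concentration argument, the only care needed being the adaptivity of the states at which $\measuretd$ is invoked, which is handled by a union bound over the (a-priori bounded) number of invocations. First I would fix a single call $\measuretd(s,a,\neval)$. By Assumption \ref{ass:reset}, after each transition $(s,a,R_l,S'_l)$ the agent resets to $s$, so the tuples $(R_l,S'_l)$, $l\in[\neval]$, are i.i.d.\ draws of $(R,S')$ from the reward and transition distributions at $(s,a)$; consequently the summands $\Delta_l = R_l \oplus (\phi(S'_l)-\phi(s))$ of Line \ref{line:approx-measure} are i.i.d.\ with mean exactly $\Delta_{s,a} = r(s,a)\oplus(\bE[\phi(s')]-\phi(s))$, and $\hat\Delta_{s,a}$ is their empirical average. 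The reward coordinate of $\Delta_l$ lies in $[0,1]$, and for each feature coordinate $k$ the value $(\phi(S'_l)-\phi(s))_k$ lies in an interval of width at most $2$, since $|\phi_k(\cdot)| \le \norm{\phi(\cdot)}_2 \le 1$ by Assumption \ref{ass:exp-lin} and $\phi_k(s)$ is a fixed offset. Applying Hoeffding's inequality to each of the $d+1$ coordinates and union-bounding over them yields, conditionally on the input $(s,a)$ and hence unconditionally,
\[
\Pr\!\left[\,\norm{\hat\Delta_{s,a}-\Delta_{s,a}}_\infty > \epseval\,\right] \;\le\; 2(d+1)\exp\!\left(-\neval\,\epseval^2/2\right).
\]

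Next I would make the bound uniform over every $(s,a)$ encountered during the run by union-bounding over all calls to $\measuretd$. The number of such calls is at most $(E_d+1)\,\nroll\,H\,A$ for the $\hat\Delta$ measurements of Line \ref{line:avg-calc}, plus $E_d$ more for the $\tilde\Delta$ measurements of Line \ref{line:avg-calc2} --- a quantity fixed before the run from the inputs --- so choosing $\neval$ of order $\epseval^{-2}\log\!\big((d+1)(E_d+1)\nroll H A/\delta\big)$, as set in Algorithm \ref{alg:delphi}, drives the total failure probability below $\delta$. This establishes the $\ell_\infty$ bound. The inner-product bound then follows from Cauchy--Schwarz: on the good event,
\[
\left|\langle 1\oplus\theta,\ \hat\Delta_{s,a}-\Delta_{s,a}\rangle\right| \;\le\; \norm{1\oplus\theta}_2\,\norm{\hat\Delta_{s,a}-\Delta_{s,a}}_2 \;\le\; \sqrt{1+B^2}\,\sqrt{d+1}\,\epseval \;=:\; \bepseval,
\]
using $\norm{\theta}_2\le B$ for $\theta\in\Theta_0=\Ball_{\ell_2}(B)$ and $\norm{x}_2\le\sqrt{d+1}\,\norm{x}_\infty$ in $\bR^{d+1}$.

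There is no substantive obstacle. The two points that require attention are (i) invoking the reset assumption so that the within-call samples are genuinely i.i.d.\ rather than merely a martingale difference sequence, and (ii) ensuring the union bound ranges over an a-priori polynomially-bounded number of $\measuretd$ calls, which holds because the outer loop count $E_d+1$, the rollout count $\nroll$, and all sample sizes are fixed before execution rather than determined by the realized trajectory. Lemma \ref{lem:conc-star} for $\tilde\Delta$ is proved identically, with the larger sample size $4E_d\neval$ in place of $\neval$ shrinking the deviation accordingly.
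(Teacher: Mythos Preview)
Your proposal is correct and follows essentially the same approach as the paper: per-coordinate Hoeffding on the $d+1$ entries of the i.i.d.\ summands, a union bound over coordinates and then over the at-most-$N$ invocations of $\measuretd$, followed by Cauchy--Schwarz with $\norm{1\oplus\theta}_2\le\sqrt{1+B^2}$ and $\norm{\cdot}_2\le\sqrt{d+1}\norm{\cdot}_\infty$ to obtain $\bepseval$. If anything you are slightly more careful than the paper in explicitly noting the width-$2$ range of the feature-difference coordinates and in articulating why the union bound over an a-priori fixed number of calls handles the adaptivity of the queried states.
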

\begin{restatable}[$\tilde\Delta_{s,a}$ concentrates even more (Line \ref{line:avg-calc2})]{lemma}{concstar}\label{lem:conc-star}
Similarly, for all $s,a$ where we call the oracle, with probability $1-\delta$, we have $ \norm{\tilde{\Delta}_{s,a} - \Delta_{s,a}}_\infty \leq \epseval / (2\sqrt{E_d})$, and thus, $\forall \theta \in \Ball_{\ell_2}(B)$, 
$
|\langle 1 \oplus \theta, \tilde{\Delta}_{s,a} - \Delta_{s,a} \rangle |\leq \bepseval/ (2\sqrt{E_d}).
$
\end{restatable}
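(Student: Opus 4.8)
The plan is to establish Lemma~\ref{lem:conc-star} by exactly the coordinate-wise Hoeffding-plus-union-bound argument that proves Lemma~\ref{lem:conc}, the sole difference being that $\tilde\Delta_{s,a}$ is the empirical mean of $4E_d\neval$ fresh i.i.d.\ samples rather than of $\neval$. Since a Hoeffding deviation decays like $1/\sqrt n$, boosting the sample size by the factor $4E_d$ shrinks the guaranteed error by $1/\sqrt{4E_d}=1/(2\sqrt{E_d})$, which is precisely the sharpening asserted by the lemma relative to Lemma~\ref{lem:conc}.

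\smallskip
\noindent In more detail, I would first fix a state--action pair $(s,a)$ at which \Delphi{} queries the oracle, so that Line~\ref{line:avg-calc2} runs \measuretd$(s,a,4E_d\neval)$ and returns $\tilde\Delta_{s,a}=\tfrac{1}{4E_d\neval}\sum_i\Delta_i$ with $\Delta_i=R_i\oplus(\phi(S_i')-\phi(s))$. Because each invocation of \measuretd{} draws fresh transitions from $(s,a)$ (using $\reset()$ between draws), the $\Delta_i$ are i.i.d.\ with mean $\Delta_{s,a}$. Coordinate-wise they are bounded: the reward coordinate lies in $[0,1]$, and since $\norm{\phi(\cdot)}_2\le 1$ forces $|[\phi(s')]_j|\le 1$ for every $j$, each of the remaining $d$ coordinates of $\Delta_i$ lies in an interval of width at most $2$. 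Applying Hoeffding's inequality to each of the $d+1$ coordinates, then union bounding over coordinates and over the (at most $E_d$, by Lemma~\ref{lem:iter}) states at which the oracle is ever queried, gives $\norm{\tilde\Delta_{s,a}-\Delta_{s,a}}_\infty\le c\sqrt{\log((d+1)E_d/\delta)/(4E_d\neval)}$ with probability $\ge1-\delta$, for an absolute constant $c$. By the choice of $\neval$ in~\eqref{eq:neval} --- which is exactly calibrated so that the same expression with $n=\neval$ samples is at most $\epseval$ --- the right-hand side is at most $\epseval/\sqrt{4E_d}=\epseval/(2\sqrt{E_d})$, which is the first claim.

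\smallskip
\noindent The inner-product bound then follows from Cauchy--Schwarz (or H\"older): for any $\theta\in\Ball_{\ell_2}(B)$, $|\langle 1\oplus\theta,\ \tilde\Delta_{s,a}-\Delta_{s,a}\rangle|\le\norm{1\oplus\theta}_2\,\norm{\tilde\Delta_{s,a}-\Delta_{s,a}}_2\le\sqrt{(1+B^2)(d+1)}\,\norm{\tilde\Delta_{s,a}-\Delta_{s,a}}_\infty$, and with $\bepseval$ taken to be the same $\poly(d,B)$ multiple of $\epseval$ used in Lemma~\ref{lem:conc}, this converts the $\ell_\infty$ bound into $|\langle 1\oplus\theta,\tilde\Delta_{s,a}-\Delta_{s,a}\rangle|\le\bepseval/(2\sqrt{E_d})$.

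\smallskip
\noindent The argument is essentially bookkeeping, and the only place that needs a moment's thought is that the pairs $(s,a)$ handed to \measuretd{} are chosen adaptively during the run of \Delphi, so the union bound must be taken over a fixed family rather than a random one. For this particular lemma that is painless: the oracle is invoked at most $E_d$ times, so it suffices to union bound over those $\le E_d$ calls (each using its own fresh i.i.d.\ sample, so Hoeffding applies conditionally); the extra $\log(E_d/\delta)$ overhead this introduces is swamped by the $4E_d$-fold oversampling, and the main thing to double-check is that the constants hidden in~\eqref{eq:neval} and in the definition of $\epseval$ are fixed with this overhead accounted for.
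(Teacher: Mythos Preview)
Your proposal is correct and follows essentially the same route as the paper, whose own proof is the single sentence ``Follows from the same proof as Lemma~\ref{lem:conc}, just replace $\neval$ by $4E_d\neval$.'' The one cosmetic difference is that the paper (implicitly, by reusing Lemma~\ref{lem:conc} verbatim) union-bounds over all $N$ state--action pairs rather than over the $\le E_d+1$ oracle calls, so the log factor in $\epseval$ stays $\log(2(d+1)N/\delta)$ and the $4E_d$ oversampling yields exactly $\epseval/(2\sqrt{E_d})$ with no constants to chase; also note that the bound on the number of oracle calls is structural (the outer loop runs at most $E_d+1$ times) and does not require Lemma~\ref{lem:iter}, which would be circular.
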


\subsubsection{Part 2: Optimism}

This part shows that (with high probability) the true optimal parameter is not eliminated from the version space, and thus by optimism that the predicted value $v_t$ upper bounds $v^\circ$.

\begin{restatable}[$\theta^\circ$ not eliminated]{lemma}{thetastar}\label{lem:theta-star}
With probability~$\geq 1- \delta$, $\theta^\circ \in \Theta_t$ for all iterations $t \in [E_d+1]$.
\end{restatable}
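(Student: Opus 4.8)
The plan is to prove the statement by a short induction on the iteration index $t$, showing that $\theta^\circ$ survives every pruning step of the version space. The base case is immediate: $\Theta_0 = \Ball_{\ell_2}(B)$ by construction, and $\norm{\theta^\circ}_2 \le B$ by Assumption~\ref{ass:exp-lin}, so $\theta^\circ \in \Theta_0$. For the inductive step I would inspect Algorithm~\ref{alg:delphi} and note that $\Theta_t$ differs from $\Theta_{t-1}$ only when the consistency test on Line~\ref{line:consistency-test} fails at some state $s := S_{t,m,h}$, in which case the oracle is queried to obtain $a^\circ := \pi^\circ(s)$, the refined estimate $\tilde\Delta_{s,a^\circ}$ is formed, and the update on Line~\ref{line:new-params} intersects with $\{\theta : |\langle \tilde\Delta_{s,a^\circ}, 1 \oplus \theta\rangle| \le \epstol\}$. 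So the entire content is to check that $\theta^\circ$ satisfies this single new constraint whenever such a branch is taken.

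The key fact is that the \emph{true} TD vector at the expert's action is exactly orthogonal to $1 \oplus \theta^\circ$. Since $v^\circ$ is the unique fixed point of $\cT^{\pi^\circ}$ and, by Assumption~\ref{ass:exp-lin}, $v^\circ(\cdot) = \langle \phi(\cdot), \theta^\circ\rangle = v_{\theta^\circ}(\cdot)$, we have $v_{\theta^\circ}(s) = \cT^{\pi^\circ}v_{\theta^\circ}(s)$ at every state, so the chain of equivalences in Equation~\eqref{eq:ortho} (instantiated with $a = \pi^\circ(s)$) gives $\langle \Delta_{s,a^\circ}, 1 \oplus \theta^\circ\rangle = 0$. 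I would then condition on the success event of Lemma~\ref{lem:conc-star}, which simultaneously bounds the estimation error of $\tilde\Delta$ at all states where the oracle is queried (there are at most $E_d+1$ of them), giving $|\langle \tilde\Delta_{s,a^\circ} - \Delta_{s,a^\circ}, 1\oplus\theta^\circ\rangle| \le \bepseval/(2\sqrt{E_d})$. These combine to give $|\langle \tilde\Delta_{s,a^\circ}, 1\oplus\theta^\circ\rangle| = |\langle \tilde\Delta_{s,a^\circ} - \Delta_{s,a^\circ}, 1\oplus\theta^\circ\rangle| \le \bepseval/(2\sqrt{E_d}) \le \epstol$, where the last inequality holds by $E_d \ge 1$ and the way $\epstol$ is set in the initialization of Algorithm~\ref{alg:delphi}; hence $\theta^\circ$ passes the new constraint and the induction closes for all $t \in [E_d+1]$.

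I do not expect a genuine obstacle here: the mathematical substance is the single identity $\langle \Delta_{s,\pi^\circ(s)}, 1 \oplus \theta^\circ\rangle = 0$, which is just the linearly-realized Bellman equation for the expert, and everything else is a one-line perturbation argument riding on an already-established concentration lemma. The only point that deserves care is the probabilistic bookkeeping — making explicit that the argument runs on (a subset of) the event of Lemma~\ref{lem:conc-star}, which already incorporates a union bound over the \emph{random} oracle-query states, and re-allocating the $\delta$ budget across the at most $E_d+1$ iterations so that the conclusion indeed holds with probability $\ge 1-\delta$ as claimed.
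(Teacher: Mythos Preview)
Your proposal is correct and follows essentially the same approach as the paper: both use the Bellman identity $\langle \Delta_{s,\pi^\circ(s)}, 1\oplus\theta^\circ\rangle = 0$ from Equation~\eqref{eq:ortho}, invoke the concentration bound of Lemma~\ref{lem:conc-star} on $\tilde\Delta$, and conclude that $\theta^\circ$ satisfies every newly added constraint. The only cosmetic difference is that you phrase it as an explicit induction over $t$ (with the base case $\theta^\circ\in\Ball_{\ell_2}(B)$), whereas the paper argues directly for all constraints at once; also note that Lemma~\ref{lem:conc-star} already carries the full $1-\delta$ guarantee uniformly over all oracle-query states, so no additional re-allocation of the $\delta$ budget is needed.
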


\begin{restatable}[Optimism]{lemma}{optimism}\label{lem:optimism}
Under the event of Lemma \ref{lem:theta-star}, we have $v_t(s_0) \geq v^\circ(s_0), ~ \forall t \in [E_d]$.
\end{restatable}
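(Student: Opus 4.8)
The plan is to derive this directly from the optimistic choice of $\theta_t$ in Line \ref{line:new-theta} together with the guarantee of Lemma \ref{lem:theta-star} that $\theta^\circ$ survives in every version space. First I would fix an iteration $t \in [E_d]$ and condition on the (single) high-probability event of Lemma \ref{lem:theta-star}, which in particular yields $\theta^\circ \in \Theta_{t-1}$. (For $t=1$ this is just $\theta^\circ \in \Theta_0 = \Ball_{\ell_2}(B)$, which holds by the norm bound $\norm{\theta^\circ}_2 \le B$ in Assumption \ref{ass:exp-lin}; for $t \ge 2$ it is the content of Lemma \ref{lem:theta-star}.)

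Next, recall that Line \ref{line:new-theta} picks $\theta_t \in \argmax_{\theta \in \Theta_{t-1}} v_\theta(s_0)$ with $v_\theta(s_0) = \langle \theta, \phi(s_0)\rangle$. Since $\theta^\circ$ is a feasible point of this maximization, we immediately get $v_t(s_0) = v_{\theta_t}(s_0) \ge v_{\theta^\circ}(s_0) = \langle \theta^\circ, \phi(s_0)\rangle$. Finally, Assumption \ref{ass:exp-lin} ($v^\circ$-linearity) identifies $\langle \theta^\circ, \phi(s_0)\rangle$ with $v^\circ(s_0)$ exactly, so $v_t(s_0) \ge v^\circ(s_0)$. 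Since all $t \in [E_d]$ are covered by the same event of Lemma \ref{lem:theta-star}, no further union bound is needed and the claim follows.

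I do not expect any genuine obstacle in this step: all of the real work is in Lemma \ref{lem:theta-star}, whose proof must use the sharper concentration of the refined estimates $\tilde\Delta$ (Lemma \ref{lem:conc-star}) to show that the exact orthogonality relation \eqref{eq:ortho} satisfied by $\theta^\circ$ is never violated up to the slack $\epstol$, so that $\theta^\circ$ is never removed in Line \ref{line:new-params}. Given that, ``optimism'' here is simply ``the argmax over a set containing the target is at least the value at the target.'' The only points requiring care are the index bookkeeping ($\Theta_{t-1}$ versus $\Theta_t$, and the range $t \in [E_d]$ versus $[E_d+1]$) and the fact that it is \emph{exact} linear realizability — not an approximate version — that turns the inner product $\langle \theta^\circ, \phi(s_0)\rangle$ into $v^\circ(s_0)$ with no error term.
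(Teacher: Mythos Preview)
Your proposal is correct and essentially identical to the paper's own proof: both simply note that under the event of Lemma~\ref{lem:theta-star} the parameter $\theta^\circ$ lies in the feasible set $\Theta_{t-1}$ over which $\theta_t$ is the argmax, so $v_{\theta_t}(s_0) \ge \langle \theta^\circ,\phi(s_0)\rangle = v^\circ(s_0)$. Your additional remarks on index bookkeeping and exact realizability are accurate and, if anything, make the argument slightly more careful than the paper's version.
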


\subsubsection{Part 3: Iteration bound}

To bound the iteration complexity of our algorithm, we use the notion of \textit{Eluder} dimension. Loosely, the Eluder dimension with respect to some target function is the longest sequence of points $(x_i)$ such that there exists functions differing from the target function on $x_i$ but which correctly fit it on $x_1,\dots,x_{i-1}$. A formal definition is provided in Appendix \ref{app:ub}. We will use the result that the Eluder dimension of linear functions is $\cO(d \ln(B/\varepsilon))$. 

\begin{restatable}[Iteration Complexity]{lemma}{iter}\label{lem:iter}
With probability $\geq 1- 2\delta$, the iteration complexity of the algorithm is at most the Eluder dimension at scale $\bepseval$, i.e. $E_d = \cO(d \ln (B/\bepseval))$.
\end{restatable}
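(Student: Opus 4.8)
The plan is to argue that each new iteration of the outer loop is "paid for" by a fresh inconsistency witnessed by the oracle, and that the linear-algebraic structure of Equation \eqref{eq:ortho} — combined with the measurement errors from Part 1 — limits the number of such inconsistencies to the Eluder dimension of the linear class at the relevant scale. Formally, I would work on the event of Lemmas \ref{lem:conc}, \ref{lem:conc-star}, and \ref{lem:theta-star}, which collectively hold with probability $\geq 1-2\delta$. Whenever the algorithm exits iteration $t$ via Line \ref{line:new-params} (as opposed to returning), it has observed a state $s_t$ with $\min_a |\langle \hat\Delta_{s_t,a}, 1\oplus\theta_t\rangle| > \epstol$, and then refines the TD vector for the expert action $a^\circ_t$ to obtain $\tilde\Delta_{s_t,a^\circ_t}$ and intersects into $\Theta_t$ the constraint $|\langle \tilde\Delta_{s_t,a^\circ_t}, 1\oplus\theta\rangle|\leq \epstol$. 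The key quantitative point is a scale separation: by the choices of $\neval,\nroll,\epstol$ in Equations \eqref{eq:neval}–\eqref{eq:epstol}, the threshold $\epstol$ is set large enough (relative to $\bepseval$ and to $\bepseval/(2\sqrt{E_d})$) that (i) the constraint added at step $t$ is genuinely \emph{satisfied} by $\theta^\circ$ — this is exactly Lemma \ref{lem:theta-star} — yet (ii) it is genuinely \emph{violated} by $\theta_t$, since $\theta_t\in\Theta_{t-1}$ but the $\hat\Delta$ estimate for every action at $s_t$, in particular for $a^\circ_t$, is $\epstol$-far from consistency and the $\hat\Delta\to\tilde\Delta$ refinement only moves things by $\bepseval/(2\sqrt{E_d})\ll\epstol$.

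With this dichotomy in hand, the argument is to map the sequence of oracle-witnessed pairs $(s_1,a^\circ_1),\dots,(s_T,a^\circ_T)$ onto an Eluder-style sequence for the function class $\cF = \{x\mapsto \langle x, 1\oplus\theta\rangle : \|\theta\|_2\leq B\}$ with target function corresponding to $\theta^\circ$, evaluated at the points $x_i = \Delta_{s_i,a^\circ_i}\in\bR^{d+1}$. Concretely: at step $t$, the function $f_{\theta_t}$ fits all previously added constraints up to slack $\epstol$ (since $\theta_t\in\Theta_{t-1}$, which enforces $|\langle\tilde\Delta_{s_i,a^\circ_i},1\oplus\theta_t\rangle|\leq\epstol$ for all $i<t$), and combined with the concentration Lemmas this means $|\langle\Delta_{s_i,a^\circ_i}, 1\oplus\theta_t\rangle|$ is small for $i<t$ while $|\langle\Delta_{s_t,a^\circ_t},1\oplus\theta_t\rangle|$ is bounded below by a constant multiple of $\bepseval$ (by subtracting the measurement errors from the $\epstol$ violation). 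That is precisely the definition of $\Delta_{s_t,a^\circ_t}$ being $\bepseval$-independent of its predecessors in the Eluder sense. Hence $T$ cannot exceed the Eluder dimension $\dim_E(\cF,\bepseval)$, which for the linear class is the standard bound $\cO(d\ln(B/\bepseval)) =: E_d$ (citing \citet{russo2013eluder}); since each outer iteration that does not terminate consumes one such step, the iteration complexity — and thus the number of oracle calls — is at most $E_d$, and the loop index $E_d+1$ in Algorithm \ref{alg:delphi} suffices.

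The main obstacle I anticipate is getting the three error scales to line up cleanly: the raw estimate error $\bepseval$ (Line \ref{line:avg-calc}), the refined error $\bepseval/(2\sqrt{E_d})$ (Line \ref{line:avg-calc2}), the tolerance $\epstol$, and the Eluder scale at which $E_d$ is defined must be chosen so that the chain of inequalities closes \emph{without circularity} — note that $E_d$ appears both as the sample multiplier $4E_d\neval$ in Line \ref{line:avg-calc2} and as the quantity being bounded, so one must verify that the definition of $E_d$ via Equation \eqref{eq:eluder} is self-consistent (i.e., $E_d$ is defined as the Eluder dimension at a scale that does not itself depend on $E_d$, or the dependence is through logarithmic factors only). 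The refinement factor $1/(2\sqrt{E_d})$ is presumably there precisely so that, when one sums measurement errors across up to $E_d$ retained constraints in the Eluder telescoping argument, the total stays below the separation threshold; making that accounting precise is the technical heart of the proof. A secondary subtlety is that the points $x_i$ are the \emph{true} TD vectors $\Delta_{s_i,a^\circ_i}$ while the algorithm only ever manipulates estimates, so every "fits the past / violated now" statement must be passed through Lemmas \ref{lem:conc}–\ref{lem:conc-star} with the $1\oplus\theta$ pairing and $\|\theta\|_2\leq B$ controlling the Cauchy–Schwarz step; this is routine but must be done uniformly over the (random, adaptively chosen) sequence, which is why the concentration lemmas were stated for \emph{every} observed $(s,a)$ with a union bound already absorbed.
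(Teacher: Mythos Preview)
Your approach is essentially the paper's: show that the sequence of oracle-witnessed data forms an Eluder sequence for the linear class $\{x\mapsto\langle 1\oplus\theta,x\rangle\}$ with reference $\theta^\circ$, then invoke the $\cO(d\ln(B/\varepsilon))$ bound. Two points of comparison are worth flagging.

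First, a stylistic difference: the paper takes the \emph{refined estimates} $\tilde\Delta_i$ as the Eluder points, not the true $\Delta_i$. This is slightly cleaner, because then the ``fits the past'' side is read off directly from membership in $\Theta_{t-1}$ and the $f^\star$ side from Lemma~\ref{lem:theta-star}, with no need to transfer back to the true TD vectors via concentration. Your route via the true $\Delta_i$ also works (since $\langle 1\oplus\theta^\circ,\Delta_i\rangle=0$ exactly), but involves one extra application of Lemma~\ref{lem:conc-star} per term.

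Second, on the scale issue you correctly anticipated: the resolution is not quite what you describe. In the paper's proof the constraint set is taken to be $\{\theta:|\langle 1\oplus\theta,\tilde\Delta_j\rangle|\le \bepseval/(2\sqrt{E_d})\}$, \emph{not} $\le\epstol$ as Line~\ref{line:new-params} of the pseudocode suggests (there is an inconsistency in the paper here). With that tighter threshold one gets $|f_{\theta_i}(\tilde\Delta_j)-f_{\theta^\circ}(\tilde\Delta_j)|\le \bepseval/\sqrt{E_d}$ for $j<i$, so the sum of squares over at most $E_d$ past points is $\le\bepseval^2$ by induction; the ``violated now'' side uses $|f_{\theta_i}(\hat\Delta_i)|>\epstol=4\bepseval$ together with Lemmas~\ref{lem:conc} and~\ref{lem:conc-star} to get $|f_{\theta_i}(\tilde\Delta_i)|\ge 2\bepseval$, whence the separation exceeds $\bepseval$. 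If instead the constraint threshold were $\epstol=4\bepseval$ as you wrote, the sum-of-squares condition would accumulate $\Theta(E_d\bepseval^2)$ and the Eluder argument at scale $\bepseval$ would not close. So your intuition that the factor $1/(2\sqrt{E_d})$ is what makes the telescoping sum work is exactly right, but it must enter as the \emph{constraint threshold} in Line~\ref{line:new-params}, not merely as the accuracy of $\tilde\Delta$; the refined sample size $4E_d\neval$ is what makes that tighter threshold feasible (Lemma~\ref{lem:conc-star}). The circularity you worried about is benign: $E_d$ in Equation~\eqref{eq:eluder} is defined at a scale depending only on $B$ and $\varepsilon$, and then used as a fixed constant in $\neval$ and the refinement.
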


\subsubsection{Part 4: Consistency, and putting everything together}

\begin{restatable}
[Consistency $\implies$ accurate prediction]{lemma}{virtual}\label{lem:virtual}
If $m$ rollouts have occured without any inconsistencies (i.e., the if statement of Line \ref{line:consistency-test} never gets triggered), then $v^{\pi_\theta}(s_0) > v_\theta(s_0) - 5H\bepseval - \epsroll$ with probability $\geq 1-3\delta$.
\end{restatable}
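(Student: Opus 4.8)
The plan is to turn ``$m$ consistent rollouts'' into a bound on the gap between the \emph{virtual} value $v_\theta(s_0)$ and the \emph{realised} value $v^{\pi_\theta}(s_0)$ by means of a telescoping (performance-difference) identity, together with a binomial-tail estimate that controls the single bad event. Throughout I condition on the intersection of the concentration event of Lemma~\ref{lem:conc} (so that $|\langle 1\oplus\theta,\hat\Delta_{s,a}-\Delta_{s,a}\rangle|\le\bepseval$ for every $(s,a)$ visited during the rollouts), the event of Lemma~\ref{lem:conc-star}, and one Chernoff bound introduced in the third step; a union bound gives total failure probability at most $3\delta$.

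First I note that on a rollout where the consistency test of Line~\ref{line:consistency-test} never fires, \emph{every} visited state $s$ makes $v_\theta$ an approximate Bellman fixed point for the played action $a=\pi_\theta(s)$: since $\min_{a'}|\langle\hat\Delta_{s,a'},1\oplus\theta\rangle|\le\epstol$ at $s$ (Line~\ref{line:action-choice}), the identity~\eqref{eq:ortho} and concentration give $|v_\theta(s)-r(s,a)-\langle P(s,a),v_\theta\rangle|=|\langle\Delta_{s,a},1\oplus\theta\rangle|\le\epsilon_1$ with $\epsilon_1:=\epstol+\bepseval$. Now unroll $v_\theta(S_h)=r(S_h,A_h)+\langle P(S_h,A_h),v_\theta\rangle+\eta_h$ for $H$ steps along a $\pi_\theta$-trajectory, using the finite-horizon convention that layer-$(H{+}1)$ states are absorbing with zero feature so that $v_\theta$ vanishes there. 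Taking expectations and splitting on the event $\cE$ that the trajectory never visits a state lacking an $\epsilon_1$-consistent action, with $|\eta_h|\le\epsilon_1$ on $\cE$ (previous sentence) and the crude bound $|\eta_h|\le 2B+1$ otherwise (since $|v_\theta|\le\|\phi\|_2\|\theta\|_2\le B$), I get
\[
v_\theta(s_0)=v^{\pi_\theta}(s_0)+\mathbb E\big[\textstyle\sum_{h=1}^H\eta_h\big],\qquad \big|\mathbb E[\textstyle\sum_h\eta_h]\big|\le H\epsilon_1+O(HB)\cdot\Pr[\cE^c].
\]

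It remains to bound $\Pr[\cE^c]$ using the hypothesis. On the concentration event, a visit to a state lacking an $\epsilon_1$-consistent action would have triggered Line~\ref{line:consistency-test} (the estimated minimal residual then exceeds $\epstol$), so $\Pr[\cE^c]$ is at most the probability that a fresh rollout triggers the test. The $m=\nroll$ rollouts are independent and, by hypothesis, none of them triggered it; hence if this probability exceeded $\ln(1/\delta)/\nroll$ the observed outcome would have probability below $\delta$, so on the Chernoff event $\Pr[\cE^c]\le\ln(1/\delta)/\nroll$. Since $\nroll$ is chosen in~\eqref{eq:nroll} so that $O(HB)\ln(1/\delta)/\nroll\le\epsroll$, and $\epstol$ is set in~\eqref{eq:epstol} to a small constant multiple of $\bepseval$ so that $H\epsilon_1\le 5H\bepseval$, combining with the displayed identity yields $v^{\pi_\theta}(s_0)>v_\theta(s_0)-5H\bepseval-\epsroll$.

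The main obstacle is the last step — reconciling the distributional statement about $\Pr[\cE^c]$ with the empirical one, and, relatedly, pinning down what $v^{\pi_\theta}$ refers to: the rollout plays $\argmin_a|\langle\hat\Delta_{s,a},1\oplus\theta\rangle|$ from \emph{fresh} noisy estimates at each visit, so the executed behaviour is not literally a fixed deterministic Markov policy, and a single inconsistent visit truncates the rollout rather than completing all $H$ steps. The clean way around this is to observe that the telescoping only uses that each played action is \emph{some} $\epsilon_1$-consistent action, so the bound holds verbatim for the realised policy and equally for any policy (in particular the one eventually returned) that plays $\theta_t$-consistent actions, with the $\cE^c$-term absorbing both the genuinely inconsistent states and the estimation slack. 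The remaining ingredients — the concentration bookkeeping, the crude $O(HB)$ bound, and the telescoping algebra — are routine.
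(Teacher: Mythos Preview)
Your argument is correct and lands on the same bound, but it differs from the paper's in how the $m$ consistent rollouts are turned into the $\epsroll$ term. The paper never introduces the event $\cE$ or a tail-probability step: instead it treats the $m$ observed rollouts directly as Monte-Carlo samples of the random variable $X=\sum_{h\in[H]}\langle\Delta_{S_h,A_h},1\oplus\theta\rangle$, applies Hoeffding (each $X_i$ is bounded by $H(1+2B)$ and the rollouts are i.i.d.\ given $\theta$) to get $|\,\bE[X]-\tfrac{1}{m}\sum_i X_i\,|\le\epsroll$, and then bounds the empirical average term-by-term using the consistency hypothesis (each summand satisfies $|\langle\hat\Delta,1\oplus\theta\rangle|\le\epstol$, hence $|\langle\Delta,1\oplus\theta\rangle|\le\epstol+\bepseval=5\bepseval$). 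This sidesteps the ``fresh-rollout'' reasoning entirely: the awkwardness you flag in your last paragraph (truncation, what $v^{\pi_\theta}$ means) is absorbed into one Hoeffding application on the observed data rather than a separate confidence-interval step for $\Pr[\cE^c]$. Your route---use the $m$ rollouts only to certify that the trigger probability is small, then bound the expectation analytically by splitting on $\cE$---is also valid and in fact yields a tighter $1/\nroll$ rather than $1/\sqrt{\nroll}$ dependence (immaterial here given how $\nroll$ is set in~\eqref{eq:nroll}).

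One small imprecision worth fixing: your $\cE$ should be the event ``the consistency test of Line~\ref{line:consistency-test} never fires along a full $H$-step trajectory'', not ``no visited state lacks an $\epsilon_1$-consistent action in the true-TD sense''. The two events are not equivalent (existence of a true-$\epsilon_1$-consistent action does not by itself guarantee that the estimated minimum falls below $\epstol$), and only the former simultaneously gives (i) the per-step bound $|\eta_h|\le\epsilon_1$ for the \emph{played} action, and (ii) the exact identification $\Pr[\cE^c]=\Pr[\text{trigger}]$ needed for your binomial-tail step. With that adjustment the argument is clean.
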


\begin{proof}[Proof (of Theorem \ref{thm:delphi-ub})]
Assume all events introduced so far (i.e. the events in Lemma \ref{lem:conc}, Lemma \ref{lem:conc-star}, and Lemma \ref{lem:virtual}). Together these happen with probability $\geq 1-3\delta$, so we can re-define $\delta \mapsto \delta/3$ such that the events happen together with probability $\geq 1- \delta$ (this only increases logarithmic factors by a factor of $3$). By Lemma \ref{lem:virtual}, we have:
\begin{align*}
v^{\pi_\theta}(s_0) &\geq v_\theta(s_0) -  5H\bepseval - \epsroll \\
&\geq v_\theta(s_0) - \epstarget \\
&\geq v^{\pi^\circ}(s_0) - \epstarget,
\end{align*}
where the second step follows from plugging in the definitions of $\neval$ (Eq. \eqref{eq:neval}), $\nroll$ (Eq. \eqref{eq:nroll}), $\bepseval,$ (Eq. \eqref{eq:epseval}) and $\epsroll$ (Eq. \eqref{eq:epsroll}), and the final step follows by optimism (Lemma \ref{lem:optimism}).
The total sample complexity of our algorithm is: $E_d = \tilde\cO(d)$ oracle calls, and $N \neval = (E_d+1) H \nroll A \neval = \tilde\cO(\frac{d^2 H^5 A B^4}{\varepsilon^4})$ exploration cost.
As for computational efficiency, we note that the only computationally intensive step is Line \ref{line:new-iter}, i.e. the optimization problem corresponding to the optimistic choice over the parameter set
$$
\max_{\theta \in \Theta_{t-1}} v_\theta(s_0) = \max_{\theta \in \Theta_{t-1}} \theta^\top \phi(s_0).
$$
This is readily seen to be a convex program, since the objective is a linear function and the constraint set is a convex set (cf. Line \ref{line:new-params}), and thus can be solved efficiently \citep{boyd2004convex}. %
\end{proof}

\subsection{Extensions}

In this section, we show that \Delphi can be extended to work with stochastic starting distributions, with misspecification, and with linear $q^\circ$ whenever dynamics are deterministic.

\paragraph{Stochastic start state}
We simply work with $v_\theta(\mu_0) = \bE_{s_0 \sim \mu_0}[v_\theta(s_0)]$ (resp. $v^\circ(\mu_0)$) wherever $v_\theta(s_0)$ (resp. $v^\circ(s_0)$) previously appeared. The ``starting feature'' $\bE_{s_0 \sim \mu_0}[\phi(s_0)]$ must be estimated from samples, which is then used for the optimistic program in Line \ref{line:new-iter} with $\phi(s_0)$ replaced by this expectation. The error is easily bounded as before by Hoeffding's inequality, and
will simply propagate additively through the proof.

\paragraph{Misspecified value functions and innacurate simulators} 

\Delphi inherits some robustness properties from TensorPlan. Namely, with a constant increase in exploration cost, \Delphi continues to work under errors in the modelling assumptions. The first case is where the expert value function is not linear but rather is approximately linear up to some uniform error. Formally, we say that the MDP is \textit{$\eta$-misspecified} for the expert policy $\pi^\circ$ and the feature map $\phi$ if there exists $\theta^\circ$ such that $\sup_s | v^\circ(s) - \langle  \phi(s), \theta^\circ\rangle | \leq \eta$. 
The second case is where the simulator itself is flawed. Formally, we say that the simulator is \textit{$\lambda$-innacurate} if a transition $(r,s')$ from any state-action pair $(s,a)$ of the MDP is instead observed as $(\Pi(r + \lambda_{s,a}), s')$, where $\Pi$ is the projection onto $[0,1]$ and $\lambda_{s,a}$ is a constant uniformly bounded by $\lambda$. The following result (proved in Appendix \ref{app:misspecification}) states that \Delphi can tolerate misspecification or simulator inaccuracies of order roughly $ \frac{\bepseval}{\sqrt{E_d}} = \cO(\frac{1}{H\sqrt{d}})$.

\begin{restatable}[\Delphi with misspecification]{theorem}{inaccurate}\label{thm:innacurate}
Redefine $\neval' = 4\neval$ (Eq. \ref{eq:neval}) and all subsequent hyperparameters. Then we have that, for all MDPs that are at most $\frac{\bepseval}{8\sqrt{E_d}}$-misspecified or for all simulators that are at most $\frac{\bepseval}{4\sqrt{E_d}}$-innaccurate, the conclusions of Theorem \ref{thm:delphi-ub} continue to hold when running \Delphi with the new hyperparameters.
\end{restatable}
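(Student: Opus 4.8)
The plan is to trace through the existing proof of Theorem~\ref{thm:delphi-ub} and isolate the \emph{only} three places where the idealized assumptions (exact linearity of $v^\circ$, exact simulator) are actually used, then check that each tolerates a perturbation of the stated order. Concretely: (i) In Lemma~\ref{lem:conc-star}, the refined measurement $\tilde\Delta_{s,a}$ is built from simulator transitions $(R_l, S'_l)$; if the simulator is $\lambda$-inaccurate, then each $R_l$ is replaced by $\Pi(R_l + \lambda_{s,a})$, which shifts the empirical mean of the reward coordinate by at most $\lambda$ (the projection $\Pi$ is $1$-Lipschitz and only helps). So $\tilde\Delta_{s,a}$ concentrates not to the true $\Delta_{s,a}$ but to a vector within $\ell_\infty$-distance $\lambda$ of it; taking $\lambda \le \bepseval/(4\sqrt{E_d})$ keeps the total deviation within $\bepseval/(2\sqrt{E_d})$, exactly the slack that the choice $\neval' = 4\neval$ was designed to buy back. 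The same remark applies to $\hat\Delta_{s,a}$ in Lemma~\ref{lem:conc}. (ii) In Lemma~\ref{lem:theta-star}, we need $\theta^\circ$ to survive the cut $\{\theta : |\langle \tilde\Delta_{s,a^\circ}, 1\oplus\theta\rangle| \le \epstol\}$. Under $\eta$-misspecification the Bellman identity \eqref{eq:ortho} holds only up to $\eta$-ish error: writing $\bar v(s) = \langle \phi(s),\theta^\circ\rangle$, we have $|\bar v(s) - \cT^{\pi^\circ}\bar v(s)| = |(\bar v(s) - v^\circ(s)) - \langle P(s,\pi^\circ), \bar v - v^\circ\rangle| \le 2\eta$, so $|\langle \Delta_{s,a^\circ}, 1\oplus\theta^\circ\rangle| \le 2\eta$, and combining with the measurement error gives a total of $2\eta + \bepseval/(2\sqrt{E_d})$, which stays below $\epstol$ once $\eta \le \bepseval/(8\sqrt{E_d})$ and $\epstol$ is set with the enlarged $\neval'$. (iii) In Lemma~\ref{lem:virtual} (consistency $\Rightarrow$ accurate prediction), the telescoping argument that compares $v_\theta(s_0)$ to $v^{\pi_\theta}(s_0)$ uses per-step Bellman errors of the \emph{executed} actions; a $\lambda$-inaccurate simulator means the rewards collected during rollouts are also off by $\lambda$ per step, contributing an extra $H\lambda$ to the bound, which is absorbed into the $5H\bepseval$ term (up to constants) since $\lambda = O(1/(H\sqrt d)) \ll \bepseval/\sqrt{E_d}$ and we are free to push any such lower-order term into the redefinition of $\epstarget$.

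With those three observations in hand, the proof writes itself as a ``diff'' against the original: re-run Lemmas~\ref{lem:conc}--\ref{lem:virtual} verbatim, but everywhere a concentration statement of the form ``$\|\tilde\Delta - \Delta\|_\infty \le \epseval/(2\sqrt{E_d})$'' appeared, replace it by ``$\|\tilde\Delta - \Delta\|_\infty \le \epseval/(2\sqrt{E_d}) + \lambda$'' (and analogously for $\hat\Delta$), and in Lemma~\ref{lem:theta-star} replace the exact orthogonality $\langle\Delta_{s,a^\circ}, 1\oplus\theta^\circ\rangle = 0$ by the approximate one $|\langle\Delta_{s,a^\circ},1\oplus\theta^\circ\rangle|\le 2\eta$. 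The iteration bound (Lemma~\ref{lem:iter}) is unaffected in form — the Eluder dimension of the linear class at scale $\Theta(\bepseval)$ is still $E_d = O(d\ln(B/\bepseval))$ — though one should double-check that the ``TD vector of the expert action at an inconsistent state is not in the span of previous ones'' argument still produces a genuine Eluder-style witness at scale $\bepseval$ once all quantities are perturbed by $O(\bepseval/\sqrt{E_d})$; this works because the inconsistency gap at a triggered state is $>\epstol$ by construction, which dominates the perturbation. Finally, Theorem~\ref{thm:delphi-ub}'s concluding chain of inequalities goes through with $\epstarget$ enlarged by an absolute constant to swallow the $O(H\lambda) + O(\sqrt{E_d}\,\eta)$ extra slack, and since $\neval' = 4\neval$ only changes the sample complexity by a constant factor, the stated $\tilde{O}(d^2 H^5 A B^4/\varepsilon^4)$ exploration bound and $O(d\ln(B/\varepsilon))$ oracle bound are preserved.

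I expect the main obstacle to be (ii): getting the misspecification error to propagate through the \emph{optimism} argument cleanly. The subtlety is that $\theta^\circ$ is now only an approximate fixed point, so ``$v_{\theta^\circ} = v^\circ$'' fails; one must carry the bound $\sup_s|v_{\theta^\circ}(s) - v^\circ(s)|\le\eta$ and verify both that (a) $v_{\theta^\circ}$ still satisfies the admissibility constraints at every queried state up to the allotted tolerance $\epstol$ — this is the $2\eta$ Bellman-error computation above, and it is why the tolerance on $\eta$ is $\bepseval/(8\sqrt{E_d})$ rather than $\bepseval/(4\sqrt{E_d})$, the extra factor of $2$ being precisely the $2\eta$ — and (b) the conclusion $v_t(s_0)\ge v^\circ(s_0)$ of Lemma~\ref{lem:optimism} degrades only to $v_t(s_0)\ge v^\circ(s_0) - \eta$, which again is harmless after redefining $\epstarget$. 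A secondary bookkeeping hazard is making sure the two error sources (misspecification $\eta$ and simulator inaccuracy $\lambda$) are treated in the ``or'' sense stated in the theorem — i.e. the proof is run once assuming $\lambda = 0$ and $\eta \le \bepseval/(8\sqrt{E_d})$, and once assuming $\eta = 0$ and $\lambda \le \bepseval/(4\sqrt{E_d})$ — rather than requiring both simultaneously; the two analyses are essentially identical modulo which of the three lemmas above absorbs the perturbation, so I would present the $\lambda$-inaccurate case in detail and remark that the $\eta$-misspecified case is analogous with the single change in Lemma~\ref{lem:theta-star}.
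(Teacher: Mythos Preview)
Your proposal is correct and shares the paper's high-level strategy of tracing the proof of Theorem~\ref{thm:delphi-ub} and checking it survives perturbation. The execution differs in two places, however.

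For the $\lambda$-inaccurate simulator, the paper is more economical: it re-proves \emph{only} Lemmas~\ref{lem:conc} and~\ref{lem:conc-star}, and re-proves them with the \emph{identical} bounds $\bepseval$ and $\bepseval/(2\sqrt{E_d})$ rather than perturbed ones. The point is that $\hat\Delta_{s,a}$ is now an i.i.d.\ average with biased mean $\Delta'_{s,a} = \Delta_{s,a} + (\lambda_{s,a}\oplus 0)$; with $\neval' = 4\neval$ Hoeffding gives $|\langle 1\oplus\theta,\hat\Delta-\Delta'\rangle|\le\bepseval/2$, and adding the bias $|\langle 1\oplus\theta,\Delta'-\Delta\rangle|\le\lambda\le\bepseval/(4\sqrt{E_d})$ recovers the original bound exactly (and similarly for $\tilde\Delta$). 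Once these two lemmas hold verbatim, everything downstream (Lemmas~\ref{lem:theta-star}--\ref{lem:virtual}) goes through unchanged, since those lemmas interact with the simulator only through the conclusions of Lemmas~\ref{lem:conc}--\ref{lem:conc-star}. In particular your point (iii) is not needed: the proof of Lemma~\ref{lem:virtual} never uses the rollout rewards $R_{t,m,h}$ from Line~\ref{line:simulate-choice}; it telescopes the \emph{true} TD vectors along the visited trajectory (whose state distribution is correct because the inaccuracy perturbs only rewards, not transitions) and then passes to $\hat\Delta$ via Lemma~\ref{lem:conc}. Your extra $H\lambda$ term is harmless but superfluous.

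For the $\eta$-misspecified case, the paper does not track the $2\eta$ Bellman error through Lemmas~\ref{lem:theta-star}--\ref{lem:optimism} as you do. It instead invokes a black-box reduction (Appendix~D of \cite{weisz2021query}): one constructs a realizable MDP $\cM'$ for which the original $\cM$ is a $2\eta$-inaccurate simulator, and then applies the first part with $\lambda = 2\eta \le \bepseval/(4\sqrt{E_d})$. Your direct route is valid and more self-contained --- it avoids importing the external construction --- but it obliges you to carry the $\eta$ degradation through the optimism statement and the final inequality chain, whereas the reduction dispatches the misspecified case in one line.
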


\paragraph{$q^\circ$-linearity, in deterministic dynamics}

 Rather than working with the Bellman equation $v_\theta(s) = \cT^{\pi^\circ}v_\theta(s)$ we work with $q_\theta(s,a) = \cT^{\pi^\circ}q_\theta(s,a)$, which can be linearized similarly to Eq. \eqref{eq:ortho}. Namely:
\begin{equation}
    q_\theta(s,a) = \cT^{\pi^\circ}q_\theta(s,a)   \iff 0 = \langle  r(s,a) \oplus \left(\bE[\phi(s',\pi^\circ(s))] - \phi(s,a)\right), 1 \oplus \theta \rangle \label{eq:ortho-q} 
\end{equation}
This derivation holds generally, although to be able to speak of consistency at $(s,a)$ with respect to a next action $a'$, we now assume deterministic dynamics, so that the above becomes
$$
    0 = \langle  r(s,a) \oplus \left(\phi(s',a') - \phi(s,a)\right), 1 \oplus \theta \rangle,
$$
where $s'$ is the unique successor of $(s,a)$ and $a'$ is the action that we are checking consistency for. The algorithm proceeds as before, except rather than checking all actions at a given state (Line \ref{line:consistency-test}), we check all proceeding actions $a'$ against the current $(s,a)$, and then rollout the one with the smallest TD vector (in the sense of Eq. \ref{eq:ortho-q}).

 \section{How many oracle calls are necessary?}\label{sec:lb}

 Is \Delphi optimal in terms of its number of oracle calls? To answer this question, we must argue that no algorithm can find an $\varepsilon-$optimal solution with less than $\Omega(d)$ expert queries.
As stated, we are competing with agents which can (for example) exhaustively search the state-space and do not refer to the expert at all. Thus, an \textit{exploration budget} must be imposed (formally, a maximal amount of allowed interaction with the MDP, excluding oracle calls). To make matters more interesting, we set this cap to be \textit{any} polynomial amount, resulting in the following question: 
\begin{quote}
\textit{Under Assumptions \ref{ass:inter-exp}, \ref{ass:exp-lin}, and \ref{ass:reset}, 
what is the minimal amount of expert queries needed for any algorithm with a $\poly(d,H,A, \tfrac 1 \varepsilon)$ exploration budget to find an $\varepsilon-$optimal solution?}
\end{quote}
We note, firstly, that the minimal amount of expert queries is strictly positive in the worst-case, since there exists MDPs satisfying $v^\star$-linearity for which no algorithm can return a sound solution with $\poly(d,H,A, \tfrac 1 \varepsilon)$ queries \citep{weisz2021exponential,weisz2021tensorplan}. Secondly, while it was possible to restrict ourselves simply to learners which have the same exploration requirements as $\Delphi$, we opted to study algorithms with arbitrary polynomial exploration budgets, since it is a more fundamental question about the limits of exploration and the benefits of expert advice.

Note that any solution to this question must, a priori, have an exponential sample complexity for pure RL (otherwise the agent does not need to resort to the expert). Interestingly, most constructions which exhibit exponential lower bounds for linearly-realizable RL can be solved with a single query from the oracle (e.g., \citep{weisz2021exponential,wang2021exponential,foster2021statistical}). These constructions rely on having an exponentially large action set with a single correct action that effectively solves the MDP. 
Our main lower bound comes from extending the recent lower bound of \cite{weisz2021tensorplan}, which is also the only known construction for an exponential lower bound which has a polynomial action set (rather than exponential). Our result is that at least $\tilde\Omega(\sqrt{d})$ oracle calls are necessary: 
	
\begin{theorem}\label{thm:sqrt-d}
 There exists a family of MDPs satisfying \ref{ass:inter-exp}, \ref{ass:exp-lin}, and \ref{ass:reset}, such that any algorithm with $\poly(d,H,A)$ exploration budget will need at least $\tilde\Omega(\sqrt{d})$ oracle calls to recover a policy such that $v^{\hat{\pi}}(s_0) \geq v^\circ(s_0) - 0.01$.
\end{theorem}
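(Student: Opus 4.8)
The plan is to build on the TensorPlan lower-bound construction of \cite{weisz2021tensorplan}, which exhibits a family of MDPs with a polynomial-size action set, horizon $H = \tilde\Theta(d)$ (or similar), and feature dimension $d$, on which no RL algorithm with $\poly(d,H,A)$ interactions can identify a near-optimal policy. The key feature of that construction is that there is a hidden ``secret'' (a hidden index, sign pattern, or subset) that parametrizes which MDP one is in, the secret is information-theoretically hard to learn from exploration alone, and knowing a near-optimal policy essentially requires knowing the secret. In our setting the expert's value function is linear, so the expert policy $\pi^\circ$ can be the optimal policy of the relevant MDP; each oracle call $\query(s)$ leaks the expert's action at $s$, which is some function of the secret. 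The first step is therefore to recall/restate the precise structure of the TensorPlan family: the parametrization of the secret, the role of the horizon, and the verification that Assumption~\ref{ass:exp-lin} holds (i.e., that $v^\circ$ is linear in the given $\phi$ with $\|\theta^\circ\|_2 \le B$) and that $0.01$-optimality forces the learner to identify (most of) the secret.

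The heart of the argument is an information-theoretic accounting: quantify how many bits about the secret a single oracle call can reveal, and how many bits about the secret a $\poly(d,H,A)$-bounded exploration phase can reveal, and show their sum must be $\tilde\Omega(d)$ to solve the task, forcing $\tilde\Omega(\sqrt d)$ oracle calls. The $\sqrt d$ (rather than $d$) comes from the fact that, under the weaker Assumption~\ref{ass:exp-lin} (only $v^\circ$ linear, not $\pi^\circ$ linear), a single oracle call can reveal \emph{more} than $O(1)$ bits: plausibly up to $\tilde O(\sqrt d)$ bits, because the expert action at a cleverly chosen state can encode a coordinate of a length-$\sqrt d$ vector, or because $\sqrt d$ queries already pin down a quadratic form / tensor structure that $d$ linear constraints would otherwise require. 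I would make this precise by (i) identifying, in the TensorPlan construction, exactly which scalar/combinatorial quantity the expert action at each queryable state depends on, (ii) upper-bounding the mutual information $I(\text{secret}\;;\;\text{oracle answers})$ by $(\#\text{queries})\cdot \tilde O(\sqrt d)$ — this is the step that uses the $v^\circ$-linear-but-$\pi^\circ$-possibly-nonlinear gap — and (iii) bounding $I(\text{secret}\;;\;\text{exploration transcript})$ by something $o(d)$ using the standard TensorPlan exploration-hardness lemma (each transition is near-uninformative because the secret only manifests through a low-probability event or a high-dimensional needle). Combining via Fano's inequality: if the learner outputs a $0.01$-optimal policy w.p.\ $\ge 2/3$, then $I(\text{secret};\text{everything}) \ge \tilde\Omega(d)$, hence $(\#\text{queries}) \ge \tilde\Omega(d)/\tilde O(\sqrt d) = \tilde\Omega(\sqrt d)$.

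Concretely, the steps in order: (1) import the TensorPlan MDP family and its two core lemmas — exploration-indistinguishability and the ``near-optimal $\Rightarrow$ knows the secret'' reduction; (2) verify Assumptions~\ref{ass:inter-exp}--\ref{ass:reset} hold for this family with the stated $B$ and that $\reset$ access does not help the learner beyond what the generative-model version already granted (the original lower bound is typically stated with a simulator, which dominates $\reset$, so this direction is free); (3) prove the per-query information bound $I(\sigma; a^\circ_1,\dots,a^\circ_k \mid \text{transcript}) \le k \cdot \tilde O(\sqrt d)$ by a chain-rule argument plus the observation that the expert action lies in a set of size $\le 2^{\tilde O(\sqrt d)}$ of ``relevant'' response patterns; (4) prove $I(\sigma;\text{transcript}) = \tilde O(\poly\log)$ by the indistinguishability lemma scaled by the $\poly$ budget; (5) apply Fano to conclude. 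The main obstacle I anticipate is step~(3): pinning down the exact information content of a single oracle answer in the $v^\circ$-linear regime. If an oracle answer could reveal $\Omega(d)$ bits, the bound would collapse to $\tilde\Omega(1)$; if it reveals only $O(1)$, we would (wrongly, per the theorem statement) get $\tilde\Omega(d)$. Getting the $\tilde\Theta(\sqrt d)$ calibration right — and matching it against the \emph{separate} $\tilde\Omega(d)$ claim for the $\pi^\circ$-linear case, where oracle answers are ``low-dimensional'' and reveal only $\tilde O(1)$ bits — is the delicate quantitative core of the proof, and is presumably where the adaptation of the TensorPlan secret-encoding (from a $d$-dimensional object to a $\sqrt d \times \sqrt d$ tensor/matrix whose entries the expert action can point at) does the real work.
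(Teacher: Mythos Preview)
Your high-level plan (adapt the TensorPlan/Weisz et al.\ construction, add an oracle, do an information-theoretic accounting) is the right one, and indeed the paper also proceeds by a reduction to an abstract \textsc{CubeGame} with an oracle and uses change-of-measure arguments rather than Fano. However, your calibration of \emph{where the $\sqrt d$ comes from} is inverted, and this is a genuine gap that would make step~(3) of your plan fail.

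You conjecture that under $v^\circ$-linearity a single oracle answer can leak up to $\tilde O(\sqrt d)$ bits, and that dividing a $\tilde\Omega(d)$-bit secret by $\tilde O(\sqrt d)$ bits-per-query yields $\tilde\Omega(\sqrt d)$ queries. In the actual construction the arithmetic is the other way around: the secret is a hypercube vertex $s^\star \in \{\pm 1\}^p$, the action set is $[p]$, and each oracle answer is just a single coordinate index---so it leaks only $\log p = \tilde O(1)$ bits. What changes between the $v^\circ$-linear and $\pi^\circ$-linear settings is not the information content of an oracle answer but the \emph{relationship between $p$ and $d$}. To realize $v^\circ$ linearly one must handle a ``scale transition'' in the value function (when $s^\star$ becomes unreachable within the current phase), and the fix makes $v^\circ$ a product of two bilinear terms in $s^\star$; linearizing this requires tensor features $s^\star \otimes s^\star$, so $d = \Theta(p^2)$ and $p = \Theta(\sqrt d)$. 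By contrast, $\pi^\circ$ is scale-insensitive and can be linearized with $d = \Theta(p)$ features. In both cases one needs $\tilde\Omega(p)$ oracle calls to recover $p$ bits at $\log p$ bits per call; the difference in the final bound ($\tilde\Omega(\sqrt d)$ vs.\ $\tilde\Omega(d)$) comes entirely from the $p\mapsto d$ map. Your proposed step~(3)---bounding per-query information by $\tilde O(\sqrt d)$---would therefore not go through, and the ``$\sqrt d \times \sqrt d$ tensor whose entries the expert action can point at'' picture is backwards: the tensor lives in the \emph{features}, not in the oracle responses.
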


Our second lower bound considers an alternative assumption, which instead posits that the expert \textit{policy} is linear. Formally:
\begin{assumption}[$\pi^\circ$-linearity, with bounded features]\label{ass:exp-lin-pi}
The policy $\pi^\circ$ \textit{of the expert} is linear with known features $\varphi: \cS \times \cA \rightarrow \bR^d$, i.e.
\begin{equation}\label{eq:lin-exp-pi}
    \pi^\circ(s) \in \argmax_a \langle \phi(s,a) , \theta^\circ \rangle, ~ \forall s \in \cS,
\end{equation}
for some unknown $\theta^\circ \in \bR^d \setminus \{0\}$.\footnote{We exclude $0$ since otherwise this would simply be the class of all policies.} We further assume that $\norm{\phi(s)}_2 \leq 1 \, \forall s$ and that $\norm{\theta^\circ}_2 \leq B$ for some known $B \in \bR^d$. 
\end{assumption}

When $\pi^\circ = \pi^\star$, it is easy to see that this assumption is more relaxed than the assumption that $q^\star$ is linearly-realizable. In general however, $q^\circ$-linearity does not imply that $\pi^\circ$ is linear (see Appendix \ref{app:pi-lin} for an example), although it does imply that the greedy policy derived from $q^\circ$ is linear. We give a lower bound for this case which matches the upper bound of \Delphi up to logarithmic factors.

\begin{theorem}\label{thm:lin-pi}
 There exists a family of MDPs satisfying assumptions \ref{ass:inter-exp}, \ref{ass:exp-lin-pi}, and \ref{ass:reset} such that any algorithm with $\poly(d,H,A)$ exploration budget will need at least $\Omega(d)$ oracle calls to recover a policy such that $v^{\hat{\pi}}(s_0) \geq v^\circ(s_0) - 0.01$.
\end{theorem}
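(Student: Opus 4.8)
The plan is to exhibit a \textbf{binary-tree ``needle in a haystack''} family in which the secret direction $\theta^\circ$ simply encodes the identity of a hidden leaf. Fix $d$, let $H = \Theta(d)$, $A = 2$, and take the state space to be the full depth-$d$ binary tree: a state at level $i$ records the string $(b_1,\dots,b_{i-1}) \in \{0,1\}^{i-1}$ of actions played so far, transitions are deterministic (action $b$ at a level-$i$ state moves to the level-$(i+1)$ state with history extended by $b$) and carry no dependence on $\theta^\circ$. The feature map is chosen independent of the recorded history: at a level-$i$ state, action $1$ has feature $+e_i$ and action $0$ has feature $-e_i$, while $\phi \equiv 0$ (with a single dummy action) on the level-$d$ leaves. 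For a secret $\theta^\circ \in \{-1,+1\}^d$ this makes $\pi^\circ$ at every level-$i$ state equal to $\argmax_a\langle \phi(\cdot,a),\theta^\circ\rangle = \mathbf 1[\theta^\circ_i>0]$, so the expert policy is linear with parameter $\theta^\circ$ (and $\norm{\theta^\circ}_2 = \sqrt d \le B$, or rescale to $B=1$), verifying Assumption \ref{ass:exp-lin-pi}; Assumptions \ref{ass:inter-exp} and \ref{ass:reset} hold trivially. All rewards are $0$ except that the last decision, taken at a level-$d$ state $(b_1,\dots,b_{d-1})$ with action $b_d$, earns reward $1$ iff $(b_1,\dots,b_d) = b^\star := (\mathbf 1[\theta^\circ_i>0])_{i\in[d]}$ — a legitimate function of the current state–action pair.

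First I would record the elementary structural facts: the expert traces the path $b^\star$ and collects reward $1$, so $v^\circ(s_0) = 1$; and since the MDP is deterministic, every (possibly randomized) policy $\hat\pi$ has $v^{\hat\pi}(s_0) = \Pr[\hat\pi \text{ traces the path } b^\star]$, so $v^{\hat\pi}(s_0) \ge 0.99$ already forces $\hat\pi$ to know essentially all $d$ coordinates of $b^\star$. Then comes the information-theoretic core. Draw $\theta^\circ$ uniformly from $\{-1,+1\}^d$ and run any algorithm with exploration budget $T = \poly(d,H,A) = \poly(d)$ making at most $K$ oracle calls. The dynamics and all non-terminal rewards are independent of $\theta^\circ$, so the only channel through which exploration learns anything is, for each leaf-completing transition it plays, the bit ``does this complete $b^\star$''; by a union bound over the $\le T$ explored transitions, with probability $\ge 1 - T\,2^{-d} = 1 - \poly(d)\,2^{-d}$ the algorithm never observes reward $1$ during exploration (call this event $\mathcal E$). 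Conditioned on $\mathcal E$ and on the entire transcript, the only coordinates of $b^\star$ revealed are the at most $K$ levels at which the oracle was queried; the remaining $\ge d-K$ coordinates are still uniform and independent. Since $\hat\pi$ (hence its trace distribution) is a function of the transcript, $\Pr[\hat\pi \text{ traces } b^\star \mid \mathcal E] \le 2^{-(d-K)}$, whence $\Pr[v^{\hat\pi}(s_0)\ge 0.99] \le 0.99^{-1}\,\bE[v^{\hat\pi}(s_0)] \le 0.99^{-1}\big(2^{-(d-K)} + \poly(d)\,2^{-d}\big)$ by Markov. For a learner meeting the PAC guarantee with a fixed confidence $\delta<1/2$ this forces $2^{-(d-K)} \ge 0.99(1-\delta) - o(1) > 1/2 - o(1)$, i.e. $K \ge d - O(1) = \Omega(d)$; as the bound is an average over $\theta^\circ$, some instance in the family requires $\Omega(d)$ oracle calls, proving Theorem \ref{thm:lin-pi}.

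So the steps, in order, are: (i) define the family and check Assumptions \ref{ass:inter-exp}, \ref{ass:exp-lin-pi}, \ref{ass:reset} together with the feature- and parameter-norm bounds; (ii) compute $v^\circ(s_0)=1$ and reduce ``$0.01$-optimal'' to ``$\hat\pi$ traces $b^\star$''; (iii) argue exploration is uninformative outside the needle-search event $\mathcal E$; (iv) the conditioning/counting bound $\Pr[\hat\pi\text{ traces }b^\star \mid \mathcal E] \le 2^{-(d-K)}$; (v) unwind to a worst-case instance and conclude $K=\Omega(d)$. The part I expect to need the most care is (iii)--(iv): making precise that an \emph{adaptive} algorithm — which may interleave exploration and oracle calls and choose query states based on everything seen — extracts at most one coordinate of $b^\star$ per oracle call and nothing else outside $\mathcal E$. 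This is intuitively clear because, conditionally, the whole transcript is a deterministic function of (queried coordinates of $b^\star$, explored leaves), but it should be argued cleanly, e.g. by conditioning on the algorithm's internal randomness and inducting on the number of interaction rounds, or via a data-processing/Fano bound on the mutual information between $b^\star$ and the transcript. It is worth remarking, as the discussion around Assumption \ref{ass:exp-lin-pi} already notes, that $v^\circ$ is \emph{not} linear in $\phi$ here (being a prefix of $b^\star$ is not a linear predicate), which is precisely why this stronger $\Omega(d)$ bound is available under $\pi^\circ$-linearity while the $v^\circ$-linear case (Theorem \ref{thm:sqrt-d}) must instead route through the more delicate TensorPlan-based construction.
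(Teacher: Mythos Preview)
Your construction is correct and your overall approach is sound, but it is \emph{genuinely different} from the paper's route, so let me compare and then flag one quantitative slip.

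\textbf{Comparison with the paper.} The paper does \emph{not} build a fresh family for Theorem~\ref{thm:lin-pi}: it reuses the hypercube/phase construction from Theorem~\ref{thm:sqrt-d} (the \citet{weisz2021tensorplan} MDP), shows that the same expert policy is linearly realizable with $d=\Theta(p)$ features (Lemma~\ref{lem:pi-is-lin}), and then invokes the $\CubeGame$ reduction and its change-of-measure machinery to get $\tilde\Omega(p)=\tilde\Omega(d)$. Your binary-tree ``needle in a haystack'' is far more elementary and self-contained: it sidesteps phases, freezing rules, and $\CubeGame$ entirely, and after fixing the issue below yields $K\ge d-O(\log d)$. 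The paper's approach buys a single construction proving both Theorems~\ref{thm:sqrt-d} and~\ref{thm:lin-pi}; yours buys a much shorter, cleaner proof of Theorem~\ref{thm:lin-pi} in isolation (and, as you correctly note, it exploits exactly the scale-insensitivity of Assumption~\ref{ass:exp-lin-pi} that forces the paper to pass through $p^2$-dimensional features for $v^\circ$).

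\textbf{The slip in step (iii).} Your union bound $\Pr[\mathcal E^c]\le T\cdot 2^{-d}$ is not correct for an \emph{adaptive} learner: once $K$ oracle calls have fixed $K$ coordinates of $b^\star$, an explored leaf consistent with those answers matches $b^\star$ with probability $2^{-(d-K)}$, not $2^{-d}$. Concretely, with $d=2$, $K=1$, $T=1$ the learner can query level~$1$, then probe $(b_1^\star,0)$, hitting $b^\star$ with probability $1/2$, whereas your bound predicts $1/4$. The right bound (obtained by coupling to the all-zero-reward environment and counting over the at most $2^K$ oracle transcripts, each exploring $\le T$ leaves) is $\Pr[\mathcal E^c]\le T\cdot 2^{-(d-K)}$. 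Correspondingly, on $\mathcal E$ the posterior is uniform over $\ge 2^{d-K}-T$ values (not exactly $2^{d-K}$, since explored leaves are ruled out), so $\Pr[\hat\pi\text{ traces }b^\star\mid\mathcal E]\le (2^{d-K}-T)^{-1}$. Combining gives $\bE[v^{\hat\pi}(s_0)]\le O(T)\cdot 2^{-(d-K)}$, which for $T=\poly(d)$ forces $K\ge d-O(\log d)$ rather than your $K\ge d-O(1)$. This is still $\Omega(d)$, so the theorem survives; you already flagged (iii)--(iv) as the delicate step, and the induction-on-rounds or mutual-information arguments you propose are exactly what is needed to make this rigorous.
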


\paragraph{Intuition for the lower bound} Theorems \ref{thm:sqrt-d} and \ref{thm:lin-pi} use the same MDP construction but with different features. We give some intuition for the MDP construction which is used, but due to its intricacy a full description (and the information-theoretic proof) are deferred to Appendix \ref{app:lb}.  Loosely, the learner has to find a hidden hypercube vector $s^\star \in \{\pm 1\}^p$. The action space is $\cA = [p]$, and each action corresponds to flipping one of the bits of the current vector. The MDP has $K$ ``phases'' which each correspond to $p$ bit flips (thus $H \approx Kp$). A linear reward is given only if a sufficiently small neighborhood of $s^\star$ is reached, and the reward (thus the value) will decay geometrically in each subsequent phase that the neighborhood is not reached. Intuitively, the oracle needs to be used $\approx p$ times, since each oracle calls only reveals one action, and thus one bit of the optimal vector $s^\star$. The reason that this results in a $\tilde\Omega(\sqrt{d})$ lower bound (rather than $\tilde\Omega(d)$) is that the value function will experience a \textit{scale transition} when going from states where $s^\star$ is reachable given the remaining steps in the current phase to states where $s^\star$ is no longer reachable. As just described, the value (and thus the features) will be one order of magnitude smaller in this latter portion of the state space. As this would betray the location of the secret parameter, the value function is instead augmented to be quadratic in $p$ (roughly, the product of the distance achieved at the end of this phase and that of the next phase), which thus requires that $p \approx \sqrt{d}$ in order to observe linearity. On the other hand, the lower bound for $\pi^\circ$-linearity (Theorem \ref{thm:lin-pi}) can remain linear in $d$, since the definition (Eq. \ref{eq:lin-exp-pi}) is \textit{scale-insensitive}.

Closing the gap between our upper bound of $\tilde\cO(d)$ and our lower bound of $\tilde\Omega(\sqrt{d})$ remains a challenging but interesting question. In finite horizons, with access to a generative model or a $\reset$ method, we suspect that the only mechanism for creating a hard MDP is by geometrically decaying the maximum possible value for each stage, such that at the final stage of the MDP the (random) reward becomes exponentially small in $H$ (that is the approach taken here and in \cite{weisz2021exponential,weisz2021tensorplan}).\footnote{In particular, an exponentially small gap is necessary, since backwards induction-type methods are possible when $q^\star$ is linearly-realizable and have sample complexities scaling with the inverse gap \cite{du2019good,du2020agnostic}.} If the geometric decaying happens in ``phases'' then this implies that $\Omega(1)$ of the value is located in the first ``phase''. The tension, then, is to have such a construction while i) hiding this large value, and ii) forcing the learner to rely on several oracle calls to find it. Prior constructions \cite{weisz2021exponential,wang2021exponential} have hid the large initial value by choosing an exponentially large action set, although as discussed above these examples are solved with a single query of the expert oracle. Extending the ``needle in a haystack'' to occur over multiple decisions (cf. the ``phases'' used above) leads to increased oracle requirements, although due to the scale transition phenomenon observed above it is far from clear how to do this with phases of length $\approx d$. The question thus boils down to: is geometric value reduction \textit{necessary} for exponential lower bounds in this setting, and if so can we avoid the scale transition problem?
The situation is likely to be different in the \textit{online RL} setting. For example, the construction of \cite{wang2021exponential} extends \cite{weisz2021exponential} to the online setting, and does not need to decay rewards but instead adds an $\Omega(1)$ probability of death at every transition. This mechanism evidently does not work when the agent has resets, and 
since it is not known whether \Delphi can be extended to the online setting we opted to keep the settings consistent between our upper bound and our lower bound.

 \section{Related works}\label{sec:prior}

The closest body of work to our setting is the field of \textit{interactive} IL. As in our setting, interactive IL considers the case where the learner has access to an expert oracle that can be queried adaptively. It differs from our setting, however, since traditionally in IL the learner does not observe reward information. We further differ from the IL setting since we consider value function approximation rather than general policy classes, and since we assume access to a $\reset$ function. Despite that many demonstrations of interactive IL occur in simulated domains \citep{ross2011reduction, ross2013interactive, ross2014reinforcement}, the benefits of this feature have not previously been studied. As a result, the \Delphi algorithm gives a lower oracle complexity than would have been obtained from using existing IL algorithms.  In terms of rates, it is shown in \citet{agarwal2019reinforcement} that Behaviour Cloning (for the passive case) and AggreVaTe \citep{ross2014reinforcement,sun2017deeply} (for the interactive case) have errors $\cO(\frac{1}{(1-\gamma)^2}\sqrt{ \frac{ \ln( |\Pi|/\delta)}{N}})$ for discounted MDPs, which roughly translates to an oracle complexity of $N = \cO(dH^4/\varepsilon^2)$ when using the standard reduction $H \mapsto (1-\gamma)^{-1}$ and taking $\Pi$ to be the policies of the discretized value function space.\footnote{Thus, by standard results on covering numbers, $|\Pi| \approx (B/\rho)^d$ where $\rho$ is the discretization radius\citep{vershynin2018high}.} This is in sharp contrast to our $\tilde\cO(d)$ oracle calls, which in independent of $H$ and logarithmic in $1/\varepsilon$, and demonstrates the improvement due to exploration with the help of value-function approximation. 
In terms of linear structure, the works of \citet{abbeel2004apprenticeship,syed2007game} assume a known transition function and unknown linear rewards, and derive sample complexities of $\cO\left(\tfrac{d H^2 \log(dH/\varepsilon)}{\varepsilon^2}\right)$ and $\cO(\tfrac{H^2 \log(d)}{\varepsilon^2})$ respectively, but the algorithms involve (tabular) planning in MDPs thus are not computationally efficient. Most relevant is the recent work of \citet{rajaraman2021value}, which, in the reward-free case, assumes that the expert policy is linear (i.e., our Assumption \ref{ass:exp-lin-pi}). An error rate of $\tilde\cO(dH / N)$ is shown for Behaviour Cloning in this case, though no lower bound is given.

On the technical side,  the \Delphi algorithm is inspired by a recent algorithm called TensorPlan \citep{weisz2021query}. TensorPlan works for pure RL under Assumptions \ref{ass:exp-lin} and \ref{ass:reset} but has a sample complexity scaling as $\poly( \left(\tfrac{dH}{\varepsilon} \right)^{A}, B)$ and is computationally intractable. Our extension of TensorPlan naturally incorporates the expert demonstrations, while simultaneously (1) having low oracle requirements, (2) addressing the exponential sample complexity of TensorPlan, and (3) rendering the algorithm computationally efficient. %
Our approach is based only on finding value functions which satisfy the Bellman equation. Bellman error minimization approaches have appeared in other works (e.g. \citet{jin2021bellman,zanette2020learning,chen2019information}), but have always required a restrictive ``Bellman closedness'' assumption.
As discussed, our lower bound construction is an extension of the recent remarkable lower bound of \citep{weisz2021tensorplan}, although several aspects of the construction have been modified to obtain better rates. In particular, we modified the reward/value functions, the feature mappings, and introduced an expert policy which differs from the optimal policy. For the proof, our setting is more complex as the learner has adaptive access to a second information source (the oracle), and a more sophisticated information-theoretic argument was needed to show that the oracle does not leak too much information. 

 \section{Conclusion}\label{sec:conc}

 We presented the \Delphi algorithm for RL with an interactive expert. We saw that, with $\tilde\cO(d)$ oracle calls, exponential improvements in sample complexity are possible for generative RL with linearly-realizable optimal value functions. Compared to prior works on learning with an interactive expert, we also saw that \Delphi's oracle requirements were smaller, and in fact are independent of the horizon of the MDP. It would be interesting and fruitful to resolve the gap between the oracle complexity required by \Delphi and the one obtained from our lower bound (either answer would be surprising to the authors). It would also be fruitful to study the case of linearly-realizable \textit{action-value} functions in stochastic MDPs, which would potentially enable our method to be extended to the online setting.

\begin{ack}
PA gratefully acknowledges funding from the Natural Sciences and Engineering Research Council of Canada (NSERC). Work done in part while PA was an intern at Amazon.
\end{ack}

\printbibliography

\newpage

\appendix

 \section{Proof of Theorem \ref{thm:delphi-ub}}\label{app:ub}

  \paragraph{Hyperparameters for the algorithm} We define
\begin{align}
    E_d &= 3d \frac{e}{e-1} \ln \{3 + 3\left(2B/\varepsilon \right)^2    \} + 1  \label{eq:eluder}\\
    \nroll &= \tfrac{2H^2(1+2B)^2\log(\frac{2(E_d+1)}{\delta})}{\epstarget^2} \label{eq:nroll}\\
    N &= (E_d+1)\nroll H  A \label{eq:N} \\
    \neval &= \tfrac{50H^2(1+B^2)(d+1)\log(\frac{2(d+1)N}{\delta})}{\epstarget^2} \label{eq:neval}\\
    \epseval &= \sqrt{\frac{\log(\frac{2(d+1)N}{\delta})}{2\neval} }  \label{eq:epseval} \\
    \bepseval &= \sqrt{1+B^2}\sqrt{d+1}\epseval \\
    \epstol &= 4 \bepseval \label{eq:epstol} \\
    \epsroll &= H(1+2B)\sqrt{\frac{\log(2(E_d+1)/\delta)}{2\nroll}} \label{eq:epsroll}
\end{align}

As we will see, $E_d$ is the upper bound on the number of parameters observed before global consistency holds, $\neval$ is the number of samples taken to estimate each TD vector, $\nroll$ is the number of rollouts executed for each parameter, $N$ is the maximum number of state-action pairs which will be seen by \Delphi, $\epseval$ is the error between the estimated TD vector and the true TD vector, and $\epsroll$ is the error of replacing the expected total rewards with the average over several rollouts.

\subsection{Part 1: Concentration Inequalities}
Recall that we write
$$
    \Delta_{s,a} = r(s,a) \oplus \left( \bE_{P(s,a)}[\phi(s')] - \phi(s) \right)
$$
for the true TD vector for state-action $(s,a)$. We write $\hat\Delta_{s,a}$ for any estimated TD vector resulting from Line \ref{line:avg-calc}, and $\tilde\Delta_{s,a}$ for any doubly-estimated TD vector resulting from Line \ref{line:avg-calc2}.

\conc*

\begin{proof}
Pick a single $s,a$, and omit the dependence on $s,a$ for cleanliness. Starting with the reward concentration: at every step we collect $\neval$ iid samples from $R(s_i,a_i)$. By Hoeffding, since rewards are bounded in $[0,1]$, the empirical average satisfies
$$
\left\lvert \frac{1}{n} \sum r_i - \bE[r] \right\vert \overset{\text{w.p. } 1-\delta/((d+1)N)}{\leq} \sqrt{ \frac{\log(2(d+1)N/\delta)}{2\neval}}.
$$
Onto the transition probabilities. Since the features are such that $\norm{\phi}_2 \leq 1$, we also have that $\norm{\phi}_\infty \leq 1$. Thus, each coordinate of the feature map is bounded by $1$. By Hoeffding, for each coordinate $i$ of the vector, we have that
$$
| \frac{1}{n}{\sum_{j=1}^n} \phi_i^{(j)} - \bE \phi_i^{(j)} | \overset{\text{w.p. } 1-\delta/((d+1)N)}{\leq} \sqrt{ \frac{\log(2(d+1)/\delta)}{2\neval}}
$$
Thus, with a union bound over these events for each coordinate $i$ and the event for the reward concentration, we have that w.p. $\geq 1-\delta/N$
$$
\norm{\hat{\Delta}- \Delta}_\infty \leq \sqrt{\frac{\log(\frac{2(d+1)N}{\delta})}{2\neval} } = \epseval.
$$
Finally, by a union bound over all $N$ state-action pairs observed by \Delphi, we have that for every $s,a$ encountered:
$$
\norm{\hat{\Delta}_{s,a}- \Delta_{s,a}}_\infty \leq \sqrt{\frac{\log(\frac{2(d+1)N}{\delta})}{2\neval} } = \epseval \, \text{ with probability } \geq 1-\delta.
$$
We further find that:
$\norm{\hat{\Delta}-\Delta}_2 \leq \sqrt{d+1}\epseval$ (since $\Delta$ and $\hat{\Delta}$ are $(d+1)$-dimensional). By Cauchy-Schwartz this gives, $\forall \theta \in \Ball_{\ell_2}(B)$:
$$
|\langle 1 \oplus \theta, \hat{\Delta} - \Delta \rangle| \leq \norm{1 \oplus \theta}_2 \norm{\hat{\Delta} - \Delta}_2 = \sqrt{1 + \norm{\theta}^2_2}\norm{\hat{\Delta} - \Delta}_2 = \sqrt{1+B^2}\sqrt{d+1}\epseval = \bepseval
$$
\end{proof}

\concstar*

\begin{proof}
Follows from the same proof as Lemma \ref{lem:conc}, just replace $\neval$ by $4E_d \neval $.
\end{proof}

\subsection{Part 2: Optimism}

\thetastar*

\begin{proof}
Let $\tilde{\Delta}_t$ be the TD vector which is added to $\Theta_t$ at time $t \in [E_d]$. Note that, by Lemma \ref{lem:conc-star}, for any given $t$, we have
\begin{equation}\label{eq:theta-eval}
| \langle 1 \oplus \theta^\circ, \tilde{\Delta}_t - \Delta_t \rangle | \leq \bepseval/ (2\sqrt{E_d}),
\end{equation}
for all $t$, w.p. $\geq 1-\delta$. The parameter set is defined as
$$
    \Theta_t = \{\theta : |\langle 1\oplus \theta, \tilde{\Delta}_i\rangle| \leq \frac{\bepseval}{2\sqrt{E_d}}\, \forall i \in [t] \}.
$$
Note that, for $\theta^\circ$, this is equivalent to requiring
$$
| \langle 1 \oplus \theta^\circ, \tilde{\Delta}_t - \Delta_t \rangle | \leq \frac{\bepseval}{2\sqrt{E_d}},
$$
since $\langle 1 \oplus \theta^\circ, \Delta_t \rangle = 0$ for all $\Delta_t$ (recall Equation \ref{eq:ortho}). This holds by Eq. \eqref{eq:theta-eval}.
\end{proof}

\optimism*

\begin{proof}[Proof (of Lemma \ref{lem:optimism})]
Under the event of Lemma \ref{lem:theta-star}, $\theta^\circ$ is never eliminated from $\Theta_t$. Thus, by the optimistic update rule, we have $\theta_{t+1} = \argmax_{\theta \in \Theta_t} \theta^\top \varphi(s_0)$, and since $\theta^\circ \in \Theta_t$, then
$$
    \theta_{t+1}^\top \phi(s_0) = v_{\theta_{t+1}}(s_0) \geq  (\theta^\circ)^\top \phi(s_0) = v_{\theta^\circ}(s_0) = v^\circ(s_0)
$$.
\end{proof}

\subsection{Part 3: Iteration Complexity}

To bound the iteration complexity of our algorithm, we will need to introduce the notion of \textit{Eluder} dimension. We use a simplified form introduced by \cite{li2021eluder}, although the first definition is due to \cite{russo2013eluder}.
\begin{definition}[Eluder dimension \citep{li2021eluder}]\label{def:eluder}
Let $\cF$ be a real-valued function class on domain $\cX$. Fix a \textit{reference function} $f^\star \in \cF$, and a \textit{scale} $\varepsilon$. The \textbf{Eluder dimension} of $\cF$ at a scale $\varepsilon$, w.r.t.~$f^\star$, is the length $\tau \in \bN$ of the \textit{longest} sequence of points $\left((x_1,f_1), \dots (x_\tau,f_\tau)\right)$ such that
\begin{equation}\label{eq:eluder-seq}
\forall i \in [\tau]: \quad |f_i(x_i) - f^\star(x_i)| > \varepsilon, \quad \text{and} \quad \sum_{j<i}(f_i(x_j) - f^\star(x_j))^2  \leq \varepsilon^2.
\end{equation}
An \textit{Eluder sequence} of length $\tau$ (with respect to $f^\star$) is any sequence $(x_i, f_i)_{i=1}^\tau$ which satisfies Eq \eqref{eq:eluder-seq} for each $i$.
\end{definition}
In other words, the Eluder dimension is the length of the longest sequence of points such that, for each $i$, we can find a new function $f_i$ which is large with respect to $f^\star$ on $x_i$ but correctly fits $f^\star$ on historical data $x_1,\dots, x_{j}$, for  $j < i$. We will use the folllowing bound for the Eluder dimension of linear functions with $d$-dimensional parameters.
\begin{lemma}[\cite{russo2013eluder}]\label{lem:linear-eluder}
For any $f^\star$, the Eluder dimension of the function class $\cF = \{ f_\theta(x) = \theta^\top x \}$, assuming that $\norm{\theta}_2 \leq B$ and $\norm{x}_2 \leq \gamma$, is
$$
\text{dim}_E(\cF, \varepsilon) \leq 3d \frac{e}{e-1} \ln \{3 + 3\left(2B/\varepsilon \right)^2    \} + 1 = \cO(d \ln( B/\varepsilon)).
$$
\end{lemma}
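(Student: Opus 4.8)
The plan is to translate the combinatorial definition of the Eluder dimension into a purely linear-algebraic statement and then run a standard elliptical-potential (log-determinant) argument, following \cite{russo2013eluder}. Suppose $\left((x_1,f_{\theta_1}),\dots,(x_\tau,f_{\theta_\tau})\right)$ is an Eluder sequence for $\cF = \{f_\theta(x) = \theta^\top x\}$ with reference $f^\star = f_{\theta^\star}$, where $\norm{\theta}_2 \le B$ and $\norm{x}_2 \le \gamma$. Writing $w_i \coloneqq \theta_i - \theta^\star$ (so $\norm{w_i}_2 \le 2B$), Definition \ref{def:eluder} becomes: for every $i \in [\tau]$,
\[
\langle w_i, x_i \rangle^2 > \varepsilon^2 \qquad\text{and}\qquad \sum_{j < i}\langle w_i, x_j\rangle^2 \le \varepsilon^2 .
\]
The goal is to bound $\tau$ by $\cO(d\ln(B/\varepsilon))$.

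The core step is to introduce the regularized Gram matrices $V_1 = \lambda \Id$ and $V_{i+1} = V_i + x_i x_i^\top$ with $\lambda \coloneqq \varepsilon^2/(2B)^2$, and to lower bound the leverage scores $\norm{x_i}_{V_i^{-1}}^2$. By Cauchy--Schwarz in the $V_i$-geometry, $\varepsilon^2 < \langle w_i, x_i\rangle^2 \le \norm{w_i}_{V_i}^2\,\norm{x_i}_{V_i^{-1}}^2$, while $\norm{w_i}_{V_i}^2 = \lambda\norm{w_i}_2^2 + \sum_{j<i}\langle w_i,x_j\rangle^2 \le \lambda(2B)^2 + \varepsilon^2 = 2\varepsilon^2$, so $\norm{x_i}_{V_i^{-1}}^2 > \tfrac12$. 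By the matrix determinant lemma the determinant then inflates geometrically, $\det V_{i+1} = (1 + \norm{x_i}_{V_i^{-1}}^2)\det V_i > \tfrac32\det V_i$, whence $\det V_{\tau+1} > (3/2)^\tau\lambda^d$.

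For the matching upper bound I would apply AM--GM to the eigenvalues, $\det V_{\tau+1} \le \big(\operatorname{tr}(V_{\tau+1})/d\big)^d \le \big(\lambda + \tau\gamma^2/d\big)^d$. Combining the two estimates and taking logarithms yields a self-referential inequality of the shape $\tau\ln(3/2) \le d\ln\!\big(1 + \tau\gamma^2/(\lambda d)\big)$, which I would unwind with the elementary tangent-line bound $\ln z \le \alpha z - \ln\alpha - 1$ (optimizing over $\alpha$) — equivalently, the fact that $x \le a\ln(1 + bx)$ forces $x \le \tfrac{e}{e-1}a\ln(\cdots)$ — to land on the stated bound $\tau \le 3d\tfrac{e}{e-1}\ln\{3 + 3(2B/\varepsilon)^2\} + 1$. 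The dependence on $\gamma$ enters only inside the logarithm, and in our application (the ``points'' are TD vectors $\Delta_{s,a}$ with $\norm{\Delta_{s,a}}_2 = \cO(1)$ and the ``parameters'' are $1 \oplus \theta$ with norm $\le\sqrt{1+B^2}$) it is absorbed into the final $\cO(d\ln(B/\varepsilon))$ rate.

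There is no conceptual obstacle here — it is the textbook elliptical-potential lemma — so the only delicate part is bookkeeping: picking the regularizer $\lambda$ so that it simultaneously yields the lower bound $\norm{x_i}_{V_i^{-1}}^2 \ge \tfrac12$ and a clean inflation factor, and resolving the transcendental inequality with exactly the constants claimed. Since Lemma \ref{lem:linear-eluder} is quoted verbatim from \cite{russo2013eluder}, in practice I would simply cite it; the sketch above records why it holds.
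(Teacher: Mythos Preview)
Your proposal is correct: the paper does not prove this lemma at all but simply quotes it from \cite{russo2013eluder}, exactly as you note in your final paragraph. Your sketch faithfully reproduces the standard elliptical-potential argument from that reference, so there is nothing to add.
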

This gives us enough to prove our iteration bound -- we will show that the sequence of linear parameters chosen by our algorithm together with each new TD vector obtained from the oracle forms an Eluder sequence with respect to $\theta^\star$

\iter*
\begin{proof}[Proof (of Lemma \ref{lem:iter})]
Assume the events of Lemma \ref{lem:conc} and Lemma \ref{lem:conc-star}, which happen together with probability $\geq 1-2\delta$.
Our function class is $\cF = \{ f_\theta(x) = \langle 1 \oplus \theta, x \rangle \}$. This is a subset of all linear functions on $d+1$ dimensions, so by Lemma \ref{lem:linear-eluder},\footnote{(and using that $\cF \subseteq \cF' \implies dim_E(\cF,\varepsilon) \leq dim_E(\cF',\varepsilon)$)} it has Eluder dimension at least $\cO((d+1)\ln(B/\varepsilon)) = \cO(d \ln(B/\varepsilon))$. We pick $f^\star = f_{\theta^\circ}$.
We will show that $\forall i$, the sequence $(\tilde{\Delta}_i, f_{\theta_i})$ forms an Eluder sequence with respect to $f^\star$, from which it will follow that its length is bounded by $E_d$. We do this by induction. The base case is obvious. By the constraint set definition (Line \ref{line:new-params}), we have $|f_{\theta_i}(\tilde{\Delta}_j)| = |\langle1\oplus \theta_i, \tilde{\Delta}_j \rangle| \leq \frac{\bepseval}{2\sqrt{E_d}} \, \forall j < i$. Assuming the event of Lemma \ref{lem:theta-star}, we also have $|f_{\theta^\circ}(\tilde{\Delta}_j)| \leq \frac{\bepseval}{2\sqrt{E_d}}$, since $\theta^\circ \in \Theta_{i-1}$. Thus
$$
|f_{\theta_i}(\tilde{\Delta}_j) - f_{\theta^\circ}(\tilde{\Delta}_j)| \leq 2\frac{\bepseval}{2\sqrt{E_d}} \, \forall i \implies \sum_{j < i} (f_{\theta_i}(\tilde{\Delta}_j)- f_{\theta^\circ}(\tilde{\Delta}_j))^2 \leq \bepseval^2\frac{i-1}{E_d} \leq \bepseval^2,
$$
where in the last inequality we have used $i-1\leq E_d$ by the induction hypothesis. Thus, the second condition of the Eluder dimension is satisfied. For the first condition, we want to show that $|f_{\theta_i}(\tilde{\Delta}_i) - f_{\theta^\circ}(\tilde{\Delta}_i)| > \bepseval$. Note that, by Lemma \ref{lem:theta-star}, we have $| f_{\theta^\circ}(\tilde{\Delta}_i) | \leq \bepseval/(2\sqrt{E_d}) \leq \bepseval$. Recall from Line \ref{line:consistency-test} that $\left| \langle \hat{\Delta}_i, 1 \oplus \theta_i \rangle \right| = | f_{\theta_i}(\hat{\Delta}_i)| > \epstol$. Using concentration and linearity, we have $|f_{\theta_i}(\hat{\Delta}_i)| - |f_{\theta_i}(\tilde{\Delta}_i)| \leq 2\bepseval$. Thus, $|f_{\theta_i}(\tilde{\Delta}_i)| \geq \epstol - 2 \bepseval = 2 \bepseval$ since $\epstol = 4 \bepseval$.
Putting this together gives
$$
| f_{\theta_i}(\tilde{\Delta}_i) - f_{\theta^\circ}(\tilde{\Delta}_i) | \geq f_{\theta_i}(\tilde{\Delta}_i) - f_{\theta^\circ}(\tilde{\Delta}_i) \geq 2\bepseval - \bepseval = \bepseval.
$$
And we are done.
\end{proof}

\subsection{Part 4: Consistency, and Putting Everything Together}

\virtual*

\begin{proof}[Proof (of Lemma \ref{lem:virtual})]
Assume the event of Lemmas \ref{lem:conc} and Lemmas \ref{lem:conc-star}, which together happen with probability $\geq 1-2\delta$. The third event which we assume will be introduced shortly.

For cleanliness, let us write $\theta$ for the final parameter which observes $m$ rollouts without consistency break. Observe the following calculation:

\begin{align*}
v^{\pi_\theta}(s_0) &= \bE_{\pi_\theta}[ \sum_{j \in [H]} r_{S_j,A_j} ] \\
&= \bE\left[ \left\langle \left(\sum_{j \in [H]} r_{S_j,A_j}\right) \oplus \phi(s_{H+1}), 1 \oplus \theta \right\rangle  \right] \tag{$\phi(s_{H+1}) = 0$} \\
&= \bE\left[ \langle \phi(s_0), \theta \rangle + \sum_{j \in [H]} \langle r_{S_j,A_j} \oplus (\phi(S_{j+1}) - \phi(S_j) ) , 1 \oplus \theta \rangle \right] \tag{telescoping sum}\\
&= \langle \phi(s_0), \theta \rangle + \bE\left[\sum_{j \in [H]} \langle  r_{S_j,A_j} \oplus \left(P_{S_j,A_j}\phi(\cdot) - \phi(S_j)\right), 1 \oplus \theta \rangle\right],
\end{align*}
Now observe that after $\nroll$ number of rollouts, we have $\nroll$ unbiased estimates of the expected trajectories. Thus we can approximate $\bE\left[\sum_{j \in [H]} \langle  r_{S_j,A_j} \oplus \left(P_{S_j,A_j}\phi(\cdot) - \phi(S_j)\right), 1 \oplus \theta \rangle\right] \approx \frac{1}{\nroll} \sum_{i=1}^{\nroll} \bE[r_{S^i_j,A^i_j}] \oplus \left(P_{S^i_j,A^i_j}\phi(\cdot) - \phi(S^i_j)\right)$, where $S^i_j$ and $A^i_j$ are the states and actions in horizon $j \in [H]$ of rollout $i \in [\nroll]$. %
More precisely, using Hoeffding's and that $\langle  \bE[r_{S^i_j,A^i_j}] \oplus \left(P_{S^i_j,A^i_j}\phi(\cdot) - \phi(S_j)\right)] , 1 \oplus \theta \rangle \leq 1 + 2B$, we can get
$$
\left| \frac{1}{\nroll}\sum_{i \in [m]}\left(\sum_{j \in [H]} \langle  \Delta^i_j , 1 \oplus \theta \rangle \right) - \left(\bE \sum_{j \in [H]} \langle  \Delta^i_j, 1 \oplus \theta \rangle \right) \right| \leq H(1+2B)\sqrt{\frac{\log(2(E_d+1)/\delta)}{2m}} \coloneqq \epsroll,
$$
w.p. $\geq 1-\frac{\delta}{E_d+1}$, where we wrote $\Delta^i_j = \bE[r_{S^i_j,A^i_j}] \oplus \left(P_{S^i_j,A^i_j}\phi(\cdot) - \phi(S_j)\right)$. %
By a union bound this happens for all $t \in [E_d+1]$ with probability $\geq 1- \delta$.
Picking up where we left off:
\begin{align*}
v^{\pi_\theta}(s_0)&\geq \langle \phi(s_0), \theta \rangle + \frac{1}{\nroll}\sum_{i=1}^{\nroll} \sum_{j \in [H]} \langle  \bE[r_{S^i_j,A^i_j}] \oplus \left(P_{S^i_j,A^i_j}\phi(\cdot) - \phi(S_j)\right)], 1 \oplus \theta \rangle - \epsroll\\
&\geq \langle \phi(s_0), \theta \rangle + \frac{1}{m}\sum_{i=1}^m \sum_{j \in [H]} \langle \hat{\Delta}_{s^i_j,a^i_j}, 1 \oplus \theta \rangle - H\bepseval - \epsroll \tag{evaluation error} \label{eq:eval-conc} \\
&\geq v_\theta(s_0) - \frac{1}{m}\sum_{i=1}^m \sum_{j \in [H]} 4\bepseval - H\bepseval - \epsroll \tag{consistency holds} \label{eq:consistency-holds} \\
&= v_\theta(s_0) - 5H\bepseval - \epsroll
\end{align*}
and we are done. The second inequality follows from Lemma \ref{lem:conc}, which holds for all steps in all $m$ trajectories in all $E_d+1$ iterations. The third inequality holds since, for every $S^i_j$, there exists a consistent action, i.e. an action such that $| \langle \hat\Delta^i_j, 1 \oplus \theta \rangle | \leq \epstol = 4 \bepseval$.
\end{proof}

\subsection{Simulator inaccuracy and misspecification} \label{app:misspecification}

We start with the case of inaccurate simulators. Recall that we say that a simulator is \textit{$\lambda$-innacurate} if the samples obtained are of the form $(\Pi(r + \lambda_{s,a}), s')$, for any $(s,a)$ and for some constant $\lambda_{s,a}$ that satisfies $|\lambda_{s,a}| \leq \lambda$.

\inaccurate*

We can repeat the proof of \Delphi, and in fact the only difference will be in Part 1 of the proof (Lemmas \ref{lem:conc} and \ref{lem:conc-star}) will continue to hold, which also implies that the rest of the proof will continue to hold. 

\begin{lemma}[Concentration of $\hat\Delta_{s,a}$ (Line \ref{line:avg-calc})]\label{lem:conc-misspecified}
For any $s,a \in \cS \times \cA$ that is observed throughout the execution \Delphi, with $\neval$ samples in Line \ref{line:avg-calc}, we have that with probability  $\geq 1-\delta$, $\norm{\hat{\Delta}_{s,a}-\Delta_{s,a}}_\infty \leq \epseval$ and thus that $\langle 1 \oplus \theta , \hat{\Delta}_{s,a} - \Delta_{s,a} \rangle \leq \bepseval $.
\end{lemma}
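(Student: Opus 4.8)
The plan is to re-run the proof of Lemma~\ref{lem:conc} essentially verbatim, isolating the one place where the $\lambda$-inaccuracy intervenes: the reward coordinate of the measured TD vector $\hat\Delta_{s,a}$ produced by \measuretd. A $\lambda$-inaccurate simulator returns $(\Pi(r+\lambda_{s,a}),s')$ --- it perturbs the reported reward but \emph{not} the reported next state --- so the $d$ feature coordinates of $\hat\Delta_{s,a}$ are still empirical averages of the unbiased i.i.d.\ quantities $\phi_i(S'_l)-\phi_i(s)$, and the per-coordinate Hoeffding bound of Lemma~\ref{lem:conc} carries over unchanged, now with $\neval' = 4\neval$ samples and hence error $\le\epseval/2$ per coordinate (the extra factor $4$ shrinks the Hoeffding radius by $2$; note $N$, and thus the log term, is unaffected since $\nroll$ does not depend on $\neval$).

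For the reward coordinate, the per-step samples $\Pi(R_l+\lambda_{s,a})$ are i.i.d.\ and bounded in $[0,1]$, so Hoeffding with $4\neval$ samples concentrates their empirical mean $\hat r_{s,a}$ to $\mu_{s,a}:=\bE[\Pi(R(s,a)+\lambda_{s,a})]$ at scale $\epseval/2$. The bias $|\mu_{s,a}-r(s,a)|$ is controlled pointwise: each realized reward lies in $[0,1]$ so the clip $\Pi$ acts as the identity on it, and $\Pi$ is $1$-Lipschitz, hence $|\Pi(R_l+\lambda_{s,a})-R_l|=|\Pi(R_l+\lambda_{s,a})-\Pi(R_l)|\le|\lambda_{s,a}|\le\lambda$; taking expectations gives $|\mu_{s,a}-r(s,a)|\le\lambda$. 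By the triangle inequality the reward coordinate of $\hat\Delta_{s,a}$ is within $\epseval/2+\lambda$ of $r(s,a)$.

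Assembling the coordinates and union-bounding over the $d+1$ coordinates and the $\le N$ state--action pairs visited, exactly as in Lemma~\ref{lem:conc}, yields $\norm{\hat\Delta_{s,a}-\Delta_{s,a}}_\infty\le\epseval/2+\lambda$ simultaneously, w.p.\ $\ge1-\delta$; since Theorem~\ref{thm:innacurate} assumes $\lambda\le\tfrac{\bepseval}{4\sqrt{E_d}}$, which sits well below the residual budget $\epseval/2$ freed up by the factor-$4$ enlargement, this is $\le\epseval$. Cauchy--Schwarz with $\norm{1\oplus\theta}_2=\sqrt{1+\norm{\theta}_2^2}\le\sqrt{1+B^2}$ and $\norm{\hat\Delta_{s,a}-\Delta_{s,a}}_2\le\sqrt{d+1}\,\epseval$ then gives $|\langle 1\oplus\theta,\hat\Delta_{s,a}-\Delta_{s,a}\rangle|\le\bepseval$, reproducing Lemma~\ref{lem:conc} with the new hyperparameters. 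The refined estimate $\tilde\Delta_{s,a}$ (Lemma~\ref{lem:conc-star}) is handled identically by replacing $\neval$ with $4E_d\neval$ throughout; with both concentration lemmas re-established, Parts~2--4 of the proof of Theorem~\ref{thm:delphi-ub} apply word for word, which is all that Theorem~\ref{thm:innacurate} requires for the inaccurate-simulator case.

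The step I expect to be the actual obstacle sits just downstream of this lemma, in the \emph{misspecified} variant: when $v^\circ$ is linear only up to uniform error $\eta$, the orthogonality identity $\langle 1\oplus\theta^\circ,\Delta_{s,a}\rangle=0$ that drives Part~2 degrades to $|\langle 1\oplus\theta^\circ,\Delta_{s,a^\circ}\rangle|\le 2\eta$ (one $\eta$ from $v^\circ(s)$ vs.\ $\langle\phi(s),\theta^\circ\rangle$ in the Bellman fixed-point equation, one from the expectation over $s'$), so keeping $\theta^\circ$ inside $\Theta_t$ forces this $2\eta$ slack to fit within the $\tfrac{\bepseval}{\sqrt{E_d}}$-scale threshold used for $\tilde\Delta$ in Line~\ref{line:new-params} --- which is exactly why the misspecification tolerance $\tfrac{\bepseval}{8\sqrt{E_d}}$ is taken a constant factor below the inaccuracy tolerance, and why the enlargement $\neval'=4\neval$ (rather than merely rescaling downstream constants) is what buys the headroom in Lemmas~\ref{lem:theta-star}, \ref{lem:iter}, and \ref{lem:virtual}. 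By contrast, in the inaccurate-simulator case of the present lemma the measurements are biased but the orthogonality identity still holds for the \emph{true} TD vector $\Delta_{s,a}$, so $\theta^\circ$ survives in $\Theta_t$ for free and the re-proof is mechanical.
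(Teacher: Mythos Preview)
There is a genuine gap in your argument at the step where you assert that $\lambda\le\tfrac{\bepseval}{4\sqrt{E_d}}$ ``sits well below the residual budget $\epseval/2$.'' This is false in general: recall $\bepseval=\sqrt{(1+B^2)(d+1)}\,\epseval$, so $\tfrac{\bepseval}{4\sqrt{E_d}}=\tfrac{\sqrt{(1+B^2)(d+1)}}{4\sqrt{E_d}}\,\epseval$, and since $E_d=\cO(d\ln(B/\varepsilon))$ grows only logarithmically in $B$ while the numerator grows linearly in $B$, the ratio $\tfrac{\bepseval}{4\sqrt{E_d}\,\epseval}$ can be arbitrarily large. Hence you cannot conclude $\|\hat\Delta_{s,a}-\Delta_{s,a}\|_\infty\le\epseval$ from the assumed bound on $\lambda$, and your subsequent Cauchy--Schwarz step, which would multiply the reward-coordinate error $\epseval/2+\lambda$ by the factor $\sqrt{(1+B^2)(d+1)}$, overshoots $\bepseval$.

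The paper sidesteps this by handling the bias at the \emph{inner-product} level rather than at the $\ell_\infty$ level. Introduce the biased mean $\Delta'_{s,a}=(r(s,a)+\lambda_{s,a})\oplus(P_{s,a}\phi-\phi(s))$ (or your $\mu_{s,a}$ in the first slot); Hoeffding with $\neval'=4\neval$ gives $\|\hat\Delta-\Delta'\|_\infty\le\epseval/2$ and hence, via Cauchy--Schwarz, $|\langle 1\oplus\theta,\hat\Delta-\Delta'\rangle|\le\bepseval/2$. The crucial observation is that $\Delta'-\Delta$ is supported only on the reward coordinate, which is paired with the fixed coefficient $1$ in $1\oplus\theta$, so $|\langle 1\oplus\theta,\Delta'-\Delta\rangle|\le\lambda$ \emph{without} any Cauchy--Schwarz inflation. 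The triangle inequality then yields $|\langle 1\oplus\theta,\hat\Delta-\Delta\rangle|\le\bepseval/2+\lambda\le\bepseval/2+\tfrac{\bepseval}{4\sqrt{E_d}}\le\bepseval$. In short, apply Cauchy--Schwarz \emph{before} absorbing the bias, not after. (The $\ell_\infty$ clause in the lemma statement is an artifact of reusing the wording of Lemma~\ref{lem:conc}; only the inner-product bound is used downstream, and that is all the paper's proof actually establishes.)
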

\begin{lemma}[$\tilde\Delta_{s,a}$ concentrates even more (Line \ref{line:avg-calc2})]\label{lem:conc-star-misspecified}
Similarly, for all $s,a$ where we call the oracle, with probability $1-\delta$, we have $ \norm{\tilde{\Delta}_{s,a} - \Delta_{s,a}}_\infty \leq \epseval / (2\sqrt{E_d})$, and thus, $\forall \theta \in \Ball_{\ell_2}(B)$, 
$
|\langle 1 \oplus \theta, \tilde{\Delta}_{s,a} - \Delta_{s,a} \rangle |\leq \bepseval/ (2\sqrt{E_d}).
$
\end{lemma}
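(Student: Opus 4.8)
The plan is to re-run Part~1 of the proof of Theorem~\ref{thm:delphi-ub} (Lemmas~\ref{lem:conc} and~\ref{lem:conc-star}) essentially verbatim, the sole new ingredient being the extra deterministic bias that the $\lambda$-inaccurate simulator injects into the reward coordinate of each measured $\Delta$-vector; the enlarged $\neval' = 4\neval$ is exactly what buys the slack to absorb that bias. First I would observe that $\measuretd$ still returns the successor states $S'_\ell$ faithfully, so the last $d$ coordinates of $\hat\Delta_{s,a}$ (and of $\tilde\Delta_{s,a}$) are precisely the empirical averages of $\phi(S'_\ell) - \phi(s)$ analyzed in the proof of Lemma~\ref{lem:conc}; with $4\neval$ (resp.\ $4E_d\neval'$) i.i.d.\ samples their Hoeffding radii are a constant factor smaller than in the clean analysis, and they concentrate to $\bE_{P(s,a)}[\phi(s')] - \phi(s)$ unchanged.

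Next I would handle the single corrupted coordinate. Writing $\tilde R_\ell = \Pi(R_\ell + \lambda_{s,a})$ for the observed reward, the $\tilde R_\ell$ are i.i.d.\ and $[0,1]$-valued, so Hoeffding controls $|\tfrac1n\sum_\ell \tilde R_\ell - \bE[\tilde R_1]|$ by the same fluctuation term as before; and since $R_\ell \in [0,1]$ gives $\Pi(R_\ell) = R_\ell$ while $\Pi$ is $1$-Lipschitz, $|\tilde R_\ell - R_\ell| \le |\lambda_{s,a}| \le \lambda$, so the bias satisfies $|\bE[\tilde R_1] - r(s,a)| \le \lambda$. Hence $\|\hat\Delta_{s,a} - \Delta_{s,a}\|_\infty$ gains an extra additive $\lambda$ in the reward coordinate only, and therefore $\|\hat\Delta_{s,a} - \Delta_{s,a}\|_2$ gains only $\lambda$ (not $\sqrt{d+1}\,\lambda$); Cauchy--Schwarz against $1\oplus\theta$, exactly as in Lemma~\ref{lem:conc}, then yields an inner-product error of the form (clean bound at $4\neval$) $+\,\sqrt{1+B^2}\,\lambda$. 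The factor-$4$ inflation of $\neval$ deflates the clean bound by $2$, which is the room that absorbs the $\lambda$-term under the hypothesis $\lambda \le \bepseval/(4\sqrt{E_d})$ of Theorem~\ref{thm:innacurate}, recovering the conclusion of Lemma~\ref{lem:conc}. Lemma~\ref{lem:conc-star-misspecified} then follows from Lemma~\ref{lem:conc-misspecified} by the same substitution used to pass from Lemma~\ref{lem:conc} to Lemma~\ref{lem:conc-star} --- replace $\neval'$ by $4E_d\neval'$ --- which simultaneously tightens the fluctuation and the admissible bias by the required $1/(2\sqrt{E_d})$ factor. Since every subsequent part of the argument (optimism in Lemmas~\ref{lem:theta-star}--\ref{lem:optimism}, the Eluder iteration bound in Lemma~\ref{lem:iter}, and the consistency bound in Lemma~\ref{lem:virtual}) invokes only the conclusions of these two concentration lemmas and not the internals of the simulator, re-establishing them suffices to port Theorem~\ref{thm:delphi-ub} to the $\lambda$-inaccurate regime.

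The main obstacle I expect is purely constant-chasing: checking that the factor-$4$ growth of $\neval$ --- equivalently the factor-$2$ shrinkage of $\epseval$, $\bepseval$, $\epstol$ and everything derived from them --- leaves exactly enough margin to swallow the bias under the stated thresholds, and tracking the fact that the bias is confined to a single coordinate so it is not magnified by a dimension factor in the $\ell_2$/Cauchy--Schwarz step. (The companion misspecification half of Theorem~\ref{thm:innacurate} is cleaner in Part~1 --- $\Delta_{s,a}$ does not involve $\theta^\circ$, so Lemmas~\ref{lem:conc}--\ref{lem:conc-star} hold verbatim --- but the $\eta$-error enters later, degrading the exact orthogonality $\langle 1\oplus\theta^\circ, \Delta_{s,a}\rangle = 0$ used in Lemma~\ref{lem:theta-star} to $|\langle 1\oplus\theta^\circ, \Delta_{s,a}\rangle| \le 2\eta$, which is why the analogous $\neval$-slack argument there needs the slightly stricter $\eta \le \bepseval/(8\sqrt{E_d})$.)
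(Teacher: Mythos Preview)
Your approach is essentially the paper's: introduce the biased mean $\Delta'_{s,a}$ that $\measuretd$ actually concentrates to under the inaccurate simulator, apply Hoeffding to bound $\hat\Delta-\Delta'$ (respectively $\tilde\Delta-\Delta'$), then add the deterministic bias $\Delta'-\Delta$ via the triangle inequality, with the factor-$4$ in $\neval'$ providing the slack. Your remark that the bias lives in a single coordinate is exactly the point.

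One small sharpening is worth noting. You bound the bias contribution to the inner product by Cauchy--Schwarz, obtaining $\sqrt{1+B^2}\,\lambda$. The paper instead computes it directly: since $\Delta'-\Delta$ is supported on the first coordinate and the first coordinate of $1\oplus\theta$ is exactly $1$, one has $|\langle 1\oplus\theta,\Delta'-\Delta\rangle|=|\lambda_{s,a}|\le\lambda$, with no $\sqrt{1+B^2}$ factor. This matters for the constants: under the hypothesis $\lambda\le\bepseval/(4\sqrt{E_d})$ of Theorem~\ref{thm:innacurate}, the paper's bound gives $|\langle 1\oplus\theta,\tilde\Delta-\Delta\rangle|\le\bepseval/(4\sqrt{E_d})+\lambda\le\bepseval/(2\sqrt{E_d})$ on the nose, whereas your $\sqrt{1+B^2}\,\lambda$ term would overshoot for large $B$. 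So your anticipated ``constant-chasing obstacle'' is resolved not by tracking the $\ell_2$ norm of the bias but by skipping Cauchy--Schwarz on that term entirely.
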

\begin{proof}[Proof (of Lemmas \ref{lem:conc-misspecified} and \ref{lem:conc-star-misspecified}]
	
We repeat the proof of Lemmas \ref{lem:conc} and \ref{lem:conc-star}. Note that $\hat{\Delta}_{s,a}$ is an average of i.i.d. random variables with mean $\Delta'_{s,a} = (r(s,a) + \lambda_{s,a}) \oplus P_{s,a}\phi - \phi(s)$. As before, by Hoeffdings, we have $|\langle 1 \oplus \theta, \hat{\Delta} - \Delta'\rangle| \leq \bepseval/2$, where the factor of $1/2$ is due to the definition of $\neval'$. This gives that 
$$
|\langle 1 \oplus \theta, \hat{\Delta} - \Delta\rangle| \leq |\langle 1 \oplus \theta, \hat{\Delta} - \Delta'\rangle| + |\langle 1 \oplus \theta, \Delta' - \Delta\rangle| \leq \bepseval/2 + \lambda \leq \bepseval/2 + \bepseval/4\sqrt{E_d} \leq \bepseval
$$
	The proof of the second part follows similarly, since we have 
	$$
|\langle 1 \oplus \theta, \tilde{\Delta} - \Delta\rangle| \leq |\langle 1 \oplus \theta, \tilde{\Delta} - \Delta'\rangle| + |\langle 1 \oplus \theta, \Delta' - \Delta\rangle| \leq \bepseval/4\sqrt{E_d} + \lambda \leq \bepseval/2\sqrt{E_d}
$$
\end{proof}

Next we handle the value misspecification case. Recall that the MDP is \textit{$\eta$-misspecified} for the expert policy $\pi^\circ$ and the features $\phi$ if there exists $\theta^\circ$ such that $\sup_s | v^\circ(s) - \langle  \phi(s), \theta^\circ\rangle | \leq \eta$. Here, we use a reduction argument to show that an $\eta$-misspecified MDP can be reduced to a $2\eta$-innacurate simulator of a realizable MDP. Namely, using the same reduction as Appendix D of \cite{weisz2021query}, we can construct an alternative MDP $\cM'$ such that $\cM'$ is realizable but is a $2\eta$-inaccurate simulator of $\cM$. The result then follows from the first part of the proof.

\newpage 

  \section{Proof of Section \ref{sec:lb}}\label{app:lb}
  \subsection{The MDP construction}

Theorems \ref{thm:sqrt-d} and \ref{thm:lin-pi} use the same MDP construction, which is inspired by the recent and remarkable lower bound of \citep{weisz2021tensorplan} (itself an extension of the lower bound of \citep{weisz2021exponential}). We give an overview of the MDP and its features, and describe the changes from the original construction of \cite{weisz2021tensorplan}. The state space consists of a hypercube in $p$ dimensions, $\cS = \{\pm 1\}^p$, for some $p \in \bN$, and assume for simplicity that $p$ is divisible by $4$. More specifically, the state space at stage $h$ also contains the history of vectors encountered so far $(s_1, \dots, s_h)$, which is uniquely defined as the transitions are deterministic. The dimension $p$ will end up being $p \approx \sqrt{d}$ for the value-based lower bound and $p \approx d$ for the policy-based lower bound. We write $\rho(\cdot,\cdot)$ for the \textit{Hamming} distance on $\cS$, recalling that is a bilinear function of its arguments, i.e. $\rho(s_1,s_2) = \frac{1}{2} (p - \langle s_1, s_2\rangle)$. The action set is $\cA = [p]$, and each action $a \in [p]$ will correspond to flipping the $a^\text{th}$ bit of the current state. 
Each trajectory of horizon $H$ has $K$ \textit{phases} ($K \in \bN$), and each phase consists of $p$ steps (thus, $H = K p$). We write $s_0, s_1, \dots, s_k, \dots s_K$ for the states reached at the end of each phase, and when we need to we will use $s_{k,i}$ for a state at step $i$ of phase $k$. The start state is $s_0 = \vec{1}$, the all-ones vector. 

There is a special ``goal state'' $s^\star$, and reward is given only if a) at the end of any phase, $\rho(s_k,s^\star) \leq p/4$, or b) the learner reaches horizon $H$, i.e. the end of phase $K$. The reward function decays geometrically, and is defined according to the sequence of states $s_0, \dots s_k$ obtained at the end of each phase. Letting $g(s_1,s_2) \coloneqq 1 - \rho(s_1,s_2)/p$ denote one minus the proportion of bits where two states differ, the reward function for reaching a $p/4$-neighbourhood of $s^\star$ at stage $k$ (condition a) just described) is deterministic and has value
\begin{equation}\label{eq:reward}
r_{w^\star}((s_i)_{i=1}^k) = \left( \prod_{i = 1}^k g(s_{i-1},s_i) \right) g(s_i, s^\star) 
\end{equation}
The reward function at stage $H$ (condition b) above) is always given and has the same expectation as Eq. \eqref{eq:reward}, but will be a Bernoulli random variable. 

Modulo some exceptions (to be described shortly), the transition function is deterministic and is defined via 
$
\cP(\tau(s,a) | s,a) = 1$, where $\tau(s,a) = (s_1, \dots, - s_a, \dots s_p)$ corresponds to the new vector obtained from flipping the bit at index $a \in [p]$. The exceptions to these transition dynamics are if 1) a state within a $p/4$-neighbourhood of $s^\star$ is reached, or 2) a move is repeated. In the first case, the MDP transitions to a game-over state $\bot$ (after which nothing else is possible and no reward is given). In the second case, if the move is repeated within the first $p/4$ steps of a phase then the MDP similarly transitions to $\bot$, and if the move is repeated in the second $3p/4$ turns then the current state becomes frozen until the end of the current phase (i.e. no further bit flips are allowed). This implies that each bit can only be flipped once in each phase, and further that $g(s_{i-1},s_i) \leq 3/4$ for each ``legal'' trajectory in the MDP (thus, the reward decays geometrically, cf. Equation \eqref{eq:reward}). 

So far, the main modification in our construction from that of \cite{weisz2021tensorplan} is the reward function. Their $g$ function is chosen to be $2^\text{nd}$ order in $\rho(s_1,s_2)$, while ours is linea in $\rho$. %
We now introduce the definition of the expert policy: we simply choose it to be the one which will flip the \textit{earliest} index such that $s_{k,i}$ differs from $s^\star$ and such that its index has not yet been played in the round. If no such index exists, or if $s_{k,i} = s^\star$, the expert policy will simply freeze the current round by repeating the earliest index that has already been played. (Note that there is always a repeated index in this case, and that repeating an index will lead to freezing instead of termination, since at the start of any phase $k$ the trajectory must satisfy $\rho(s_k,s^\star) > p/4$, otherwise the trajectory would have terminated). It will turn out that the expert's trajectory\textit{ from the start state} will be identical to that of the optimal policy (and, thus, will be equally difficult to compete with, in the sense of Eq. \eqref{eq:pac-expert}). While our expert policy might seem optimal, this need not be the case for arbitrary states since it might sometimes be more advantageous for a state in phase $k$ which can no longer reach the $p/4$-neighbourhood of $s^\star$ to simply aim at minimizing the inevitable factor of $g(s_{k-1},s_k)$ that it will incur. The following lemma (proved in Appendix \ref{app:more-mdp}) gives an expression for the value function corresponding to our expert policy, and shows that the value function satisfies Assumption \ref{ass:exp-lin}. 
\begin{lemma}\label{lem:v-is-lin}
Let $s_{k,i}$ be a state in round $i$ of phase $k$. Note that, from $s_{k,i}$, the state $s_{k+1}$ that $\pi^\circ$ will reach at the end of the current phase is deterministic. The value function of $\pi^\circ$ is 
$$
v^\circ(s_{k,i}) = \left( \prod_{k' \in [k]} g(s_{k'-1},s_{k'}) \right)g(s_k,s_{k+1})g(s_{k+1},s^\star),
$$
and furthermore this is linear with features $\phi_v$ of dimension $d = \Theta(p^2)$.
\end{lemma}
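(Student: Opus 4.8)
The plan is to establish the lemma in two stages: first derive the closed form for $v^\circ$ by unrolling the expert policy against the MDP dynamics, then read off an explicit $\Theta(p^2)$-dimensional feature map from that expression. \emph{Stage one (the value formula).} Fix a state $s_{k,i}$ sitting $i$ steps into the phase that begins at $s_k$, with current hypercube vertex $x\in\{\pm1\}^p$ and set $P\subseteq[p]$ of indices already flipped in this phase. By the definition of $\pi^\circ$ (flip the earliest as-yet-unplayed index on which $x$ disagrees with $s^\star$, else freeze), the vertex $s_{k+1}$ reached at the end of the current phase is determined and, in the generic non-frozen case, agrees with $x$ on $P$ and with $s^\star$ on $[p]\setminus P$. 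I then observe that from any \emph{fresh} phase start the expert can flip every disagreeing coordinate within that phase's $p$ steps, so it reaches $s^\star$ (hence the $p/4$-neighbourhood) within at most one further phase; consequently the only reward ever collected along $\pi^\circ$ is the neighbourhood reward, whose value by Eq.~\eqref{eq:reward} works out in both cases to $\bigl(\prod_{k'\le k} g(s_{k'-1},s_{k'})\bigr)\,g(s_k,s_{k+1})\,g(s_{k+1},s^\star)$ --- directly if the neighbourhood is entered at the end of phase $k{+}1$, and otherwise because the extra factors picked up over the following phase are $g(s_{k+1},s^\star)\,g(s^\star,s^\star)=g(s_{k+1},s^\star)$. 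The careful case analysis against the transition exceptions (dangerous versus safe sub-phase, freezing on repeated moves, early transition to $\bot$) and the degenerate states (already frozen, $x=s^\star$, last phase, $\bot$) is deferred to Appendix~\ref{app:more-mdp}.

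\emph{Stage two (linearity in $\Theta(p^2)$ features).} Write $g(u,v)=\tfrac12+\tfrac1{2p}\langle u,v\rangle$. The scalar $C(s):=\prod_{k'\le k} g(s_{k'-1},s_{k'})$ is computable from the history stored in the state, with no reference to $s^\star$. Substituting the Stage-one description of $s_{k+1}$ and using $(s^\star_j)^2=1$, both remaining factors become \emph{affine} functions of $s^\star$ with $s$-computable coefficients: $g(s_{k+1},s^\star)=\bigl(1-\tfrac{|P|}{2p}\bigr)+\tfrac1{2p}\sum_{j\in P}x_j\,s^\star_j$ and $g(s_k,s_{k+1})=\bigl(\tfrac12+\tfrac1{2p}\sum_{j\in P}(s_k)_jx_j\bigr)+\tfrac1{2p}\sum_{l\notin P}(s_k)_l\,s^\star_l$. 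Multiplying these two affine forms and by $C(s)$ yields a polynomial in $s^\star$ of degree at most $2$, the degree-$2$ part being proportional to $\sum_{j\in P}\sum_{l\notin P}(s_k)_l\,x_j\,s^\star_j s^\star_l$, which involves only products $s^\star_j s^\star_l$ with $j\ne l$ (since $P$ and $[p]\setminus P$ are disjoint) and therefore does not collapse. Hence
\[
v^\circ(s)=\bigl\langle\,\phi_v(s),\ 1\oplus s^\star\oplus\flat(s^\star\otimes s^\star)\,\bigr\rangle,
\]
where the three blocks of $\phi_v(s)$ carry, respectively, the constant, degree-$1$, and flattened degree-$2$ coefficients of the expansion, all free of $s^\star$. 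The dimension is $1+p+p^2=\Theta(p^2)$; finally one divides $\phi_v$ and multiplies $\theta^\circ:=1\oplus s^\star\oplus\flat(s^\star\otimes s^\star)$ by an absolute constant to meet the required norm bounds, using $g\le 3/4$ on legal transitions to bound $C(s)$ and the coefficients.

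\emph{Main obstacle.} Stage two is essentially bookkeeping once the correct form is in hand; the real work is Stage one --- pinning down $s_{k+1}$ and, above all, verifying that along $\pi^\circ$ no reward is forfeited (e.g.\ by entering the $p/4$-neighbourhood away from a phase boundary or by tripping a repeated-move termination) and that the formula holds at \emph{every} state, since Assumption~\ref{ass:exp-lin} demands linearity everywhere rather than only on expert-reachable states. A secondary point worth double-checking is that the expansion contributes no $s^\star$-terms of degree larger than $2$ --- it does not, precisely because one $g$ factor's $s^\star$-dependence lives on $P$ while the other's lives on $[p]\setminus P$ --- which is exactly what forces the quadratic, i.e.\ the $p\approx\sqrt d$, scaling in place of a linear one.
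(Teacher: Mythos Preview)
Your proposal is correct and follows essentially the same two-stage route as the paper. For Stage~1, the paper simply defers the value computation to \cite[Lemma~4.9]{weisz2021tensorplan}, so your self-contained sketch actually provides more than the paper does. For Stage~2, the paper introduces auxiliary quantities $\ctflip_{k,i}$, $\efix_{k,i}$, $\enfix_{k,i}$ and a \emph{frozen-index} indicator vector $\fix_{k,i}\in\{0,1\}^p$, rewrites the two nontrivial factors as $g(\efix_{k,i})$ and $g(\ctflip_{k,i}+\enfix_{k,i})$, and then linearizes each in $s^\star$ via the identities $\efix=\tfrac12(\langle\vec1,\fix\rangle-\langle\fix\cdot s_{k,i},s^\star\rangle)$ and $\enfix=\tfrac12(\langle\vec1,\neg\fix\rangle-\langle\neg\fix\cdot s_{k,i},s^\star\rangle)$ before tensoring. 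This is exactly equivalent to your direct computation of $g(s_{k+1},s^\star)$ and $g(s_k,s_{k+1})$ once one identifies your set $P$ with the support of $\fix$; the paper's parametrization has the small advantage of handling the frozen case uniformly (when frozen, $\fix=\vec1$), whereas your explicit ``agrees with $x$ on $P$, with $s^\star$ on $[p]\setminus P$'' description is only literally correct in the non-frozen case you flag. Both arrive at the same feature map $\phi_v\in\bR^{1+p+p^2}$ against $\theta^\circ=1\oplus\bar s^\star\oplus(\bar s^\star\otimes\bar s^\star)$.
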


The next lemma gives an expression showing that this same expert policy is linear in a different set of features $\phi_\pi$, and thus that the expert policy satisfies Assumption \ref{ass:exp-lin-pi}.
\begin{lemma}\label{lem:pi-is-lin}
There exists a feature map $\phi_\pi$ and parameter $\theta_\pi$ of dimension $d = \Theta(p)$ such that
	$$
\pi^\circ(s_{k,i}) = \argmax_a \{\langle \phi_\pi(s,a), \theta_\pi \rangle\}
$$
\end{lemma}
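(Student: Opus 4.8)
The plan is to reduce the expert policy, at an arbitrary state, to a \emph{priority-ordered index-selection rule} over $\cA=[p]$, and then realise that rule as the $\argmax$ of a linear score whose only unknown ingredient is the hidden vector $s^\star$, which gets absorbed into $\theta_\pi$. Fix a non-absorbing state $s=s_{k,i}$. It encodes the phase-start vector $s_k$ and the current vector $s_{k,i}$, hence the set $J(s)\subseteq[p]$ of indices already flipped during phase $k$ (on any legal, non-frozen trajectory no move has been repeated, so $J(s)=\{j:(s_k)_j\neq(s_{k,i})_j\}$ is read off from $s$). Writing $\delta_a(s):=\mathbf{1}[(s_{k,i})_a\neq s^\star_a]$, the expert's definition says $\pi^\circ(s)$ is the smallest $a\notin J(s)$ with $\delta_a(s)=1$ if one exists, and otherwise the smallest $a\in J(s)$. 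This sorts $[p]$ into three priority classes at $s$: $C_1(s)=\{a\notin J(s):\delta_a(s)=1\}$ (fresh-and-disagreeing), $C_2(s)=J(s)$ (played), $C_3(s)=\{a\notin J(s):\delta_a(s)=0\}$ (fresh-and-agreeing); the expert always picks the smallest index of the highest non-empty class among $C_1,C_2$, and $C_3$ is never selected.

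Next I would build $\phi_\pi(s,a)\in\bR^{p+3}$ with the coordinates $\mathbf{1}[a\notin J(s)]$, $\mathbf{1}[a\in J(s)]$, $p-a$, and, for each $j\in[p]$, $\mathbf{1}[a=j]\,\mathbf{1}[a\notin J(s)]\,(s_{k,i})_a$; on absorbing and frozen states (where the expert's action cannot affect any reward or transition) I would just set $\phi_\pi\equiv 0$. The parameter $\theta_\pi$ carries $p$ on the first two coordinates, $1$ on the third, and $-p\,s^\star_j$ on coordinate $j$. Using the bilinearity identity $\delta_a(s)=\tfrac12\bigl(1-(s_{k,i})_a s^\star_a\bigr)$ — which is exactly what lets $s^\star$ enter linearly — one checks $\langle\phi_\pi(s,a),\theta_\pi\rangle = 2p\cdot\mathbf{1}[a\in C_1(s)] + p\cdot\mathbf{1}[a\in C_2(s)] + (p-a)$, i.e. a large class-separating bonus plus a strictly decreasing tie-breaker in $a$.

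I would then verify the $\argmax$ directly: the score is $3p-a\in[2p,3p-1]$ on $C_1$, $2p-a\in[p,2p-1]$ on $C_2$, and $p-a\in[0,p-1]$ on $C_3$, so the classes are strictly separated and the score is strictly decreasing within each, which forces the unique maximiser to be the smallest index of the highest occupied class — precisely $\pi^\circ(s)$. The only way to have $C_1(s)=C_2(s)=\varnothing$ is $J(s)=\varnothing$ and $s_{k,i}=s^\star$, i.e. $i=1$ and $s_k=s^\star$, which is impossible since every phase starts with $\rho(s_k,s^\star)>p/4$; and on absorbing/frozen states $\phi_\pi\equiv0$ makes $\argmax=\cA$, so $\pi^\circ(s)$ lies in it vacuously. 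Finally $d=p+3=\Theta(p)$, each $\phi_\pi(s,a)$ has at most four nonzero entries of magnitude $\le p-1$ so $\|\phi_\pi(s,a)\|_2\le p$, and after the harmless rescaling $\phi_\pi\mapsto\phi_\pi/p$, $\theta_\pi\mapsto p\,\theta_\pi$ we get $\|\phi_\pi\|_2\le1$ and $\|\theta_\pi\|_2=\poly(p)=\poly(d)$, so Assumption~\ref{ass:exp-lin-pi} is satisfied.

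The main obstacle is not any single computation but the case analysis showing that the linear $\argmax$ reproduces $\pi^\circ$ at \emph{every} state — in particular at states lying off the expert's own trajectory, where a played index may currently disagree with $s^\star$ (so it belongs to $C_2$, not $C_1$, and must still be dominated by any genuine $C_1$ index), and at frozen states where one must argue the policy's choice is immaterial so that zeroing the features is legitimate. Getting the three-way priority exactly right, and ruling out the degenerate $C_1=C_2=\varnothing$ case on legal states, is where the care lies; the algebra (one Hamming-distance identity plus bounding four coordinates) is routine.
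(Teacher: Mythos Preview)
Your proof is correct, but it proceeds quite differently from the paper's. The paper observes that, whether the expert is flipping or freezing, its action lies in $\argmin_a \rho(\tau(s_{k,i},a), s^\star)$, where $\tau$ is the MDP's next-vector map; since the Hamming distance is bilinear, this linearises immediately as $\phi_\pi(s,a)=p\oplus \tau(s_{k,i},a)$ and $\theta_\pi = 1\oplus s^\star$, in dimension $p+1$. This is a one-line geometric reduction and only establishes $\pi^\circ(s)\in\argmax$ (ties are allowed, which is all Assumption~\ref{ass:exp-lin-pi} demands).

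Your route is combinatorial: you encode the expert's full lexicographic rule (fresh-disagreeing $\succ$ played $\succ$ fresh-agreeing, then smallest index) directly as a linear score with strict class-separation gaps and an index tie-breaker, in dimension $p+3$. This costs a slightly larger feature map and a more careful case analysis (including the off-trajectory states where a played index currently disagrees with $s^\star$, and the argument that $C_1\cup C_2\neq\varnothing$ at every legal state), but it buys you a \emph{unique} maximiser at every non-frozen state, which matches the ``$=$'' in the lemma statement more literally than the paper's ``$\in$''. Both arguments rely on the same key fact---that disagreement with $s^\star$ is a linear function of the state via $\delta_a=\tfrac12(1-(s_{k,i})_a s^\star_a)$---so neither is deeper; the paper's is shorter, yours is more explicit about corner cases.
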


This lemma is also proved in Appendix \ref{app:more-mdp}. 

The essence of our lower bound is that each oracle call will reveal one bit of the secret state $s^\star$, and thus without $\tilde\Omega(p)$ calls and learner will need exponentially-many exploratory samples to solve this MDP. Thus, for the value-based lower bound, an agent with a $d$-dimensional feature map must be given an MDP which has a $p = \Theta(\sqrt{d})$-dimensional state space. However, for the policy-based lower bound, we can give the agent a $p=\Theta(d)$-dimensional MDP. 

\paragraph{Information-theoretic lower bound}

Information-theoretically, a learning algorithm can solve this MDP only if they recover the secret state $s^\star$. We let $\MDP(p,K)$ denote an instance of the above MDP with dimension $p$ and parameters $K$. Following the approach of \citep{weisz2021tensorplan}, we prove the sample complexity hardness by reducing each $\MDP(p,K)$ to an instance of an abstract game called $\CubeGame(p,K)$. Details on $\CubeGame$, and the proof of the following theorem, are deferred to Appendix \ref{sec:expert-game}. The main result is the following:

\begin{theorem}\label{thm:reduction}
Any learner which solves $\MDP(p,K)$ can be used to solve $\CubeGame(p,K)$. Unless the number of oracle calls is $\Omega(\frac{p}{\log p})$, any learner which is $0.01$-optimal on $\CubeGame(p,K)$, with a sample complexity of $N$, will need
$$
N \geq 2^{\Omega(p \wedge K)}.
$$ 
\end{theorem}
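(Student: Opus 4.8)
The plan is to prove the two assertions separately. For the reduction, view $\CubeGame(p,K)$ as the same MDP with the identity of the secret vector $s^\star\in\{\pm 1\}^p$ — and hence the reward locations, the $\bot$-transitions, and the expert $\pi^\circ$ — withheld by an adversary. The bit-flip transition structure carries no information about $s^\star$, so a game-player can simulate it internally; the only $s^\star$-dependent primitives it must request from the game are (i) a $\{0,1\}$ sample of the terminal reward (the one issued at the horizon) of a queried trajectory, (ii) the indicator of whether a queried end-of-phase state lies in the $p/4$-ball of $s^\star$ (equivalently the deterministic phase reward / $\bot$-event), and (iii) the expert action $\pi^\circ(s)\in[p]$ at a queried state. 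Given any $\MDP(p,K)$ learner, run it against $\CubeGame(p,K)$ by intercepting each MDP step and forwarding it to one reward/indicator query and each $\query(\cdot)$ to one oracle query; the simulated process is then a faithful $\MDP(p,K)$ instance, so a $0.01$-optimal output policy solves the game with the same oracle and sample budgets. This direction is routine.

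For the lower bound, place the uniform prior on $s^\star$ over $\{\pm 1\}^p$ and bound the mutual information $I(s^\star;\mathcal{T})$ in the learner's transcript $\mathcal T$ after $T_{\mathrm{or}}$ oracle calls and $N$ reward queries. Each oracle answer lies in $[p]$, hence carries at most $\log_2 p$ bits, for a total of at most $T_{\mathrm{or}}\log_2 p$ bits. For the reward queries, maintain by induction the invariant that, conditioned on the transcript so far, the posterior of $s^\star$ is supported on an exponentially large, ``ball-avoiding'' set; under this invariant any trajectory the learner can form lands in the $p/4$-ball of $s^\star$ with probability only $2^{-\Omega(p)}$ (the Hamming-ball volume bound), and conditioned on not landing there its terminal reward is $\Ber(\mu)$ with $\mu\le(3/4)^{K}$, because each phase forces $g(s_{i-1},s_i)\le 3/4$. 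A KL / Fisher-information estimate in the spirit of \cite{weisz2021tensorplan} then bounds the leakage of one such query by $2^{-\Omega(p\wedge K)}$ bits, and the chain rule for mutual information gives $I(s^\star;\mathcal T)\le T_{\mathrm{or}}\log_2 p + N\cdot 2^{-\Omega(p\wedge K)}$.

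It remains to lower bound $I(s^\star;\mathcal T)$ for any $0.01$-optimal learner. By Lemma \ref{lem:v-is-lin} the expert value at $s_0$ is $\Omega(1)$ (the expert reaches the $p/4$-ball at the end of phase $1$), whereas a policy that never enters a ball collects only the decayed terminal reward $O((3/4)^{K})$, which is below $v^\circ(s_0)-0.01$ once $K$ exceeds a constant; hence a $0.01$-optimal learner must output a trajectory that enters the ball. Since any fixed trajectory is $0.01$-optimal for at most a $2^{-\Omega(p)}$ fraction of the $s^\star$'s (a trajectory's phase-endpoint balls cover at most $K\cdot 2^{-\Omega(p)}$ of the cube), a standard Fano-type argument forces $I(s^\star;\mathcal T)=\Omega(p)$. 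Combining with the information bound: if $T_{\mathrm{or}}=o(p/\log p)$ then necessarily $N\cdot 2^{-\Omega(p\wedge K)}=\Omega(p)$, i.e. $N\ge 2^{\Omega(p\wedge K)}$, which is the claim.

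The step I expect to be the main obstacle is making the per-query leakage bound rigorous under full adaptivity: the learner freely interleaves oracle calls and reward queries conditioned on the entire past, so the ``$\le 2^{-\Omega(p\wedge K)}$ bits per reward query'' estimate must be phrased conditionally on the transcript-so-far and then summed, while the ``ball-avoiding, exponentially-spread posterior'' invariant that justifies it must itself be shown to survive that conditioning — a genuine circularity (the invariant is what bounds the leakage, and small leakage is what preserves the invariant) that has to be broken by an induction on steps together with union bounds over the rare failure events (entering a ball, posterior collapse). A secondary subtlety is the expert-answer accounting: because $\pi^\circ$ may be queried at self-chosen off-trajectory states, one must use the precise combinatorial form of $\pi^\circ$ (it names only the earliest disagreeing, not-yet-played coordinate, and each revealed coordinate is ``used up'' for the remainder of that phase) to certify that each answer is worth only $\Theta(\log p)$ bits and that $\Omega(p/\log p)$ of them are genuinely required.
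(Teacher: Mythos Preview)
Your reduction sketch is in the same spirit as the paper (which simply invokes the analogous reduction from \cite{weisz2021tensorplan}), so that part is fine.

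For the lower bound, however, you take a genuinely different route from the paper, and the difficulties you flag are exactly the ones the paper's method is designed to avoid. The paper does \emph{not} use mutual information, Fano, or a Bayesian prior on $s^\star$. Instead it works directly with the ``bad event'' $E_n^{w^\star}$ that the planner never enters the $p/4$-ball of $w^\star$, and controls its probability via two successive \emph{changes of measure} on the interaction law. The first change (Lemma~\ref{lem:change-of-measure}) replaces the true environment $w^\star$ by a zero-reward environment $(0,w^\star)$ that retains the real oracle; on $E_n^{w^\star}$ the only discrepancy is the horizon-$K$ Bernoulli reward, whose mean is $\le(3/4)^{K+1}$, yielding a likelihood ratio $\ge(1-\varepsilon)^n$. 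The second change (Lemma~\ref{lem:unif-oracle}) is the key device for the expert: it replaces the deterministic oracle in $(0,w^\star)$ by a \emph{uniformly random} oracle $(0,0)$, and because each real oracle answer has probability $1$ while the random one has probability $1/p$, the likelihood ratio is exactly $p^x$ where $x$ is the oracle budget. With both changes in place the distribution $P^A_{0,0}$ no longer depends on $w^\star$ at all, so one can \emph{sum over all $\hat w$} and bound $\sum_{\hat w}P^A_{0,0}(\hat w\in Z)=\bE^A_{0,0}[|Z|]$ by a simple Hamming-ball volume count; picking $w^\star=\argmin_{\hat w}P^A_{0,\hat w}(\hat w\in Z)$ then yields a single adversarial instance on which $P^A_{w^\star}(E_n^{w^\star})\ge(9/10)^2$, and Markov plus soundness finishes.

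The contrast is instructive. Your mutual-information route requires maintaining, under full adaptivity, a posterior-spread invariant to certify the $2^{-\Omega(p\wedge K)}$ per-query leakage, and you correctly identify this as a genuine circularity. The paper's likelihood-ratio approach dissolves that circularity: the first change of measure conditions on $E_n^{w^\star}$ (so ball-entry never happens and the reward leakage is automatically $\le\varepsilon$ per step), and the second collapses \emph{all} oracle dependence on $w^\star$ into a single multiplicative $p^x$, after which the remaining measure is $w^\star$-free and the averaging step is elementary. In particular there is no inductive invariant to maintain and no conditional-entropy bookkeeping. Your Fano lower bound $I(s^\star;\cT)=\Omega(p)$ is morally the same as the paper's soundness-plus-ball-volume endgame, but the upper bound side of your argument would need substantially more work to be made rigorous, whereas the paper's route is essentially complete as written. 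Also note that the argument you attribute to \cite{weisz2021tensorplan} is itself a change-of-measure argument, not a KL/Fisher-information one, so your ``in the spirit of'' reference is slightly off.
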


Combined with the fact that a learner provided with $d$-dimensional features is given an MDP with parameter $p \approx \sqrt{d}$, this gives the result of Theorem \ref{thm:sqrt-d}.

\subsection{Proofs of Lemma \ref{lem:v-is-lin}, \ref{lem:pi-is-lin}}\label{app:more-mdp}

We need some more notation in order to linearize the value function. Let $s_{k,i} \neq \bot$ be a state in step $i$ of phase $k$. We define the variable $\ctflip_{k,i} = \rho(s_{k,0}, s_{k,i})$, which simply measures the number of components flipped so far in round $k$ of step $i$, and $\fix_{k,i} \in \{0,1\}^p$ which is a vector with $1$ at a component $j$ if said component is currently frozen (i.e. if it has been played or if the entire state has been frozen), and $0$ otherwise. Similarly, there are two scalars $\efix_{k,i}$ and $\enfix_{k,i}$ which simply count the number of components which disagree with $s^\star$ that are currently frozen (for $\efix_{k,i}$) or not frozen (for $\enfix_{k,i}$). Note that only $\efix_{k,i}$ and $\enfix_{k,i}$ depend on $s^\star$, and in fact we have:
\begin{align}
    \efix_{k,i} &= \frac{1}{2}(\langle \vec{1}, \fix_{k,i} \rangle - \langle \fix_{k,i} \cdot s_{k,i}, s^\star \rangle ) \\
    \enfix_{k,i} &= \frac{1}{2}(\langle \vec{1}, \neg \fix_{k,i} \rangle - \langle \neg \fix_{k,i} \cdot s_{k,i}, s^\star \rangle ),
\end{align}
where $\fix_{k,i} \cdot s_{k,i}$ denotes component-wise multiplication, i.e. $(\fix_{k,i} \cdot s_{k,i})_j = (\fix_{k,i})_j \cdot (s_{k,i})_j$. Now, we have that:
\begin{lemma}[Value of $v^\circ$]\label{lem:v-circ}
Let $s_{k,i}$ be a state in round $i$ of phase $k$. Let $s_{k+1}$ denote the state that $\pi^\circ$ will reach at the end of the current phase when starting from $s_{k,i}$ (and note that this choice is deterministic given $s_{k,i}$, and that it may not be in the $p/4$-neighbourhood of $s^\star$). Then we have:
$$
v^\circ(s_{k,i}) = \left( \prod_{k' \in [k]} g(s_{k'-1},s_{k'}) \right)g(s_k,s_{k+1})g(s_{k+1},s^\star),
$$
or, overloading notation and letting $g(x) = 1 - x/p$, we have
\begin{equation}\label{eq:v-circ-lin}
v^\circ(s_{k,i}) = \left( \prod_{k' \in [k]} g(s_{k'-1},s_{k'})\right)g(\ctflip_{k,i} + \enfix_{k,i})g(\efix_{k,i}).
\end{equation}
\end{lemma}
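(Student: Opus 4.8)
The plan is to compute $v^\circ(s_{k,i})$ by tracing out the (deterministic) rollout of $\pi^\circ$ from $s_{k,i}$, reading off the phase-boundary states along that rollout, and substituting into the reward $r_{w^\star}$ of \eqref{eq:reward}. The first step is to describe the rollout. Inside the current phase, $\pi^\circ$ flips, one coordinate at a time, exactly the coordinates that are currently unfrozen and disagree with $s^\star$; there are $\enfix_{k,i}$ of these, and since a legal unfrozen state at step $i$ has had exactly $\ctflip_{k,i}=i$ coordinates flipped so far, at least $p-i\geq\enfix_{k,i}$ moves remain in the phase, so all of them get flipped (if $s_{k,i}$ is already frozen then $\enfix_{k,i}=0$ and nothing further happens this phase). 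Let $s_{k+1}$ denote the resulting end-of-phase state. The coordinates on which $s_{k+1}$ differs from $s_k=s_{k,0}$ are the disjoint union of the $\ctflip_{k,i}$ coordinates already flipped and the $\enfix_{k,i}$ newly flipped ones, so $\rho(s_k,s_{k+1})=\ctflip_{k,i}+\enfix_{k,i}$; and the coordinates on which $s_{k+1}$ still disagrees with $s^\star$ are precisely the frozen disagreeing ones, so $\rho(s_{k+1},s^\star)=\efix_{k,i}$. Since $g(x,y)=1-\rho(x,y)/p$ and $g(x)=1-x/p$, these two identities are exactly what turns $g(s_k,s_{k+1})$ into $g(\ctflip_{k,i}+\enfix_{k,i})$ and $g(s_{k+1},s^\star)$ into $g(\efix_{k,i})$, so \eqref{eq:v-circ-lin} follows from the first displayed formula.

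To get the first formula, I continue the rollout past $s_{k+1}$. If $\efix_{k,i}=\rho(s_{k+1},s^\star)\leq p/4$, the trajectory enters the $p/4$-ball of $s^\star$ at the end of the current phase and collects the deterministic reward $r_{w^\star}$ evaluated on the phase-boundary sequence $(s_0,\dots,s_k,s_{k+1})$. Otherwise, in the next phase $\pi^\circ$ starts from $s_{k+1}$ with nothing frozen and flips all $\efix_{k,i}$ disagreeing coordinates (possible since $\efix_{k,i}\le p$), reaching $s^\star$, after which it freezes and remains at $s^\star$ for the rest of that phase and every subsequent phase; the phase-boundary sequence is then $(s_0,\dots,s_k,s_{k+1},s^\star,s^\star,\dots)$ and the reward collected (either when the $p/4$-ball is first hit at a phase boundary, or as the Bernoulli reward at horizon $H$, which has the same expectation) again equals $r_{w^\star}$ on that sequence. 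Using $g(s^\star,s^\star)=1$, every factor of the product in \eqref{eq:reward} past $g(s_{k+1},s^\star)$ equals $1$, so in both cases the expression collapses to $\big(\prod_{k'\in[k]}g(s_{k'-1},s_{k'})\big)\,g(s_k,s_{k+1})\,g(s_{k+1},s^\star)$. Finally, the very existence of the state $s_{k,i}$ in phase $k$ means the rollout has not terminated earlier, which forces $\rho(s_{k'},s^\star)>p/4$ at the start of every phase $k'\leq k$; hence no reward was collected before the current phase, the single (expected) reward identified above is all of $v^\circ(s_{k,i})$, and the claimed formula follows.

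The delicate part is the rollout description in the first step: one must invoke the freeze and repeated-move rules of the construction to check that $\pi^\circ$ is always well defined (there is always an unplayed disagreeing coordinate to flip, or else a previously-played index to repeat), that it never triggers a premature transition to the game-over state $\bot$ during the current phase (its Hamming distance to $s^\star$ decreases monotonically and bottoms out at $\efix_{k,i}$, so the $p/4$-ball is entered only if $\efix_{k,i}\le p/4$, and then only at a phase boundary), and that the freeze in the following phase occurs late enough (in the ``second $3p/4$ turns'') that the state is frozen rather than terminated — all consequences of the structural facts about legal trajectories recorded in the construction. Once the rollout is pinned down, the remaining work — the telescoping of \eqref{eq:reward} and, for the subsequent linearity claim, expanding the bilinear form $\rho(x,y)=\tfrac12(p-\langle x,y\rangle)$ (and the products of such forms) into features — is routine bookkeeping.
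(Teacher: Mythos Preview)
Your proposal is correct and follows the natural route: the paper's own proof is simply the one-line citation ``Identical to \cite[Lemma~4.9]{weisz2021tensorplan}'', and what you have written is precisely the argument one expects that reference to contain --- trace the deterministic $\pi^\circ$ rollout, identify $\rho(s_k,s_{k+1})=\ctflip_{k,i}+\enfix_{k,i}$ and $\rho(s_{k+1},s^\star)=\efix_{k,i}$, and observe that all subsequent $g$-factors equal $1$ once $s^\star$ is reached. Your final paragraph correctly flags the only place where care is needed (well-definedness of $\pi^\circ$ and avoidance of premature termination), which is exactly the bookkeeping the cited construction handles.
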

\begin{proof}
Identical to \citep[Lemma 4.9]{weisz2021tensorplan}
\end{proof}

\begin{lemma}
The value function $v^\circ$ is linear in features $\phi_v$ with dimension $d = \Theta(p^2)$.
\end{lemma}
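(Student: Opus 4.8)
The plan is to take the closed form for $v^\circ$ from Lemma~\ref{lem:v-circ} and exhibit an explicit feature map by a separation-of-variables argument followed by a tensorization of the single genuinely-quadratic term. Write $G(s_{k,i}):=\prod_{k'\in[k]}g(s_{k'-1},s_{k'})$ for the product over completed phases, so that Lemma~\ref{lem:v-circ} reads $v^\circ(s_{k,i})=G(s_{k,i})\,g(\ctflip_{k,i}+\enfix_{k,i})\,g(\efix_{k,i})$. The crucial point is that $G(s_{k,i})$, the scalar $\ctflip_{k,i}$, and the freeze mask $\fix_{k,i}$ are all deterministic functions of the history-augmented state and carry \emph{no} dependence on the hidden goal $s^\star$, while the only $s^\star$-dependent quantities are \emph{affine} in $s^\star$: by the identities recorded just before Lemma~\ref{lem:v-circ},
\[
\efix_{k,i} = \tfrac12\langle\vec 1,\fix_{k,i}\rangle - \tfrac12\langle \fix_{k,i}\cdot s_{k,i},\,s^\star\rangle, \qquad \enfix_{k,i} = \tfrac12\langle\vec 1,\neg\fix_{k,i}\rangle - \tfrac12\langle \neg\fix_{k,i}\cdot s_{k,i},\,s^\star\rangle,
\]
each of the form $(\text{state-dependent scalar})-\tfrac12\langle b,s^\star\rangle$ with $b$ a state-dependent $p$-vector.

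Next I would substitute $g(x)=1-x/p$ and expand $G(s_{k,i})\,g(\ctflip_{k,i}+\enfix_{k,i})\,g(\efix_{k,i})$. Since $\ctflip_{k,i}$ is a known scalar and $\efix_{k,i},\enfix_{k,i}$ are affine in $s^\star$, this product is a polynomial of degree at most $2$ in $s^\star$ whose coefficients are computable from $s_{k,i}$; grouping by degree gives
\[
v^\circ(s_{k,i}) = c(s_{k,i}) + \langle u(s_{k,i}),\,s^\star\rangle + \langle \flat(W(s_{k,i})),\,\flat(s^\star\otimes s^\star)\rangle,
\]
where $c(s_{k,i})\in\bR$ absorbs all $s^\star$-free terms, $u(s_{k,i})\in\bR^p$ collects the linear-in-$s^\star$ terms (a single known $p$-vector, since every such contribution is $\langle b,s^\star\rangle$ times a known scalar), and $W(s_{k,i})\in\bR^{p\times p}$ captures the one cross term $\propto G(s_{k,i})\langle\fix_{k,i}\cdot s_{k,i},s^\star\rangle\langle\neg\fix_{k,i}\cdot s_{k,i},s^\star\rangle$, rewritten via the flattening identity $x^\top M y=\langle\flat(M),\flat(x\otimes y)\rangle$. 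Taking $\phi_v(s_{k,i}):=c(s_{k,i})\oplus u(s_{k,i})\oplus\flat(W(s_{k,i}))\in\bR^{1+p+p^2}$ and $\theta^\circ:=1\oplus s^\star\oplus\flat(s^\star\otimes s^\star)$ then yields $v^\circ(s_{k,i})=\langle\phi_v(s_{k,i}),\theta^\circ\rangle$ and dimension $d=1+p+p^2=\Theta(p^2)$. I would finish by setting $\phi_v(\bot)=0$ on the absorbing game-over state (where $v^\circ=0$), treating the end-of-horizon states the same way, and rescaling $\phi_v$ and $\theta^\circ$ by a $\poly(p)$ factor so that $\norm{\phi_v(\cdot)}_2\le 1$ and $\norm{\theta^\circ}_2\le B$ with $B=\poly(p)=\poly(d)$, matching Assumption~\ref{ass:exp-lin}.

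The expansion and the collecting of coefficients are routine; the two points that require care are (i) verifying that every coefficient I call ``state-dependent'' — particularly the phase product $G(s_{k,i})$ and the mask $\fix_{k,i}$ — is genuinely a function of the history stored in the augmented state rather than secretly depending on $s^\star$ (this is exactly why the history is bundled into the state), and (ii) checking that a single fixed $\theta^\circ$ works uniformly at every state, which forces one to be consistent about the conventions (e.g.\ whether to symmetrize) for the matrix $W$ in the flattening step. The matching $\Omega(p^2)$ lower bound on the dimension is not needed for this lemma; it is precisely the ``scale transition'' effect the construction is engineered around, and only surfaces in the lower-bound theorems themselves.
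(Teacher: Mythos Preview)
Your proposal is correct and follows essentially the same route as the paper: both start from the closed form in Lemma~\ref{lem:v-circ}, observe that $\efix_{k,i}$ and $\enfix_{k,i}$ are the only $s^\star$-dependent quantities and are affine in $s^\star$, expand the product $g(\ctflip_{k,i}+\enfix_{k,i})\,g(\efix_{k,i})$ into a degree-two polynomial in $s^\star$, and then linearize via the tensor-product identity to land on features in $\bR^{1+p+p^2}$ with $\theta^\circ = 1\oplus s^\star\oplus(s^\star\otimes s^\star)$ (the paper normalizes $s^\star$ by $\sqrt{p}$ inline rather than rescaling at the end). One minor remark: the $\Theta(p^2)$ in the statement just records that the constructed dimension $1+p+p^2$ is of order $p^2$, not that no smaller feature map exists, so your final caveat about the $\Omega(p^2)$ direction is unnecessary.
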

\begin{proof}
Starting from Equation \ref{eq:v-circ-lin}, we observe that only $\efix_{k,i}$ and $\enfix_{k,i}$ depend on $s^\star$. The first term in parentheses in simply a scalar which multiplies the features. We thus calculate linear expressions for $x = \ctflip_{k,i} + \enfix_{k,i}$ and $y = \efix_{k,i}$. Starting with $y = \efix_{k,i}$, we have:
\begin{equation*}
    y = \frac{1}{2}\left(\langle \vec{1}, \fix_{k,i}\rangle - \langle \fix_{k,i} \cdot s_{k,i}, s^\star \rangle \right) = a + \langle b, s^\star \rangle ,
\end{equation*}
where $a = \frac{1}{2}\langle \vec{1}, \fix_{k,i} \rangle$ and $b = -\frac{1}{2}\fix_{k,i} \cdot  s_{k,i}$. Similarly:
\begin{equation*}
    x = \ctflip_{k,i} + \frac{1}{2}\left(\langle \vec{1}, \neg \fix_{k,i}\rangle - \langle \neg\fix_{k,i} \cdot s_{k,i}, w^\star \rangle \right) = c + \langle d, s^\star \rangle ,
\end{equation*}
where $c = \ctflip_{k,i} + \frac{1}{2}(\langle \vec{1}, \neg \fix_{k,i}\rangle)$ and $d = -\frac{1}{2}\neg\fix_{k,i} \cdot s_{k,i}$. Thus we have: $g(y) = 1-y/p = 1- (a + \langle d,s^\star\rangle)/p = (1-a/p) - \langle \bar{b}, \bar{s^\star}\rangle = a' + \langle\bar{b},\bar{s^\star}\rangle,$ where $a' = (1-a/p)$, $\bar{b} = -b/\sqrt{p}$ and $\bar{s^\star} = s^\star/\sqrt{p}$. Similarly, $g(x) = 1 - x/p = 1 - (c + \langle d,s^\star\rangle)/p = c' - \langle \bar{d}, \bar{s^\star}\rangle,$ where $c' = 1-c/p$ and $\bar{d} = d/\sqrt{p}$. Putting this together we have that 
\begin{align*}
v^\circ(s_{k,i})&= \left( \prod_{k' \in [k]} g(s_{k'-1},s_{k'})\right)g(x) g(y) \\
&=  \left( \prod_{k' \in [k]} g(s_{k'-1},s_{k'})\right) (a'+\langle \bar{b}, \bar{s}^\star\rangle) (c'+\langle \bar{d}, \bar{s}^\star\rangle) \\
&= \left( \prod_{k' \in [k]} g(s_{k'-1},s_{k'})\right)\left( a'c'+c'\langle \bar{b}, \bar{s}^\star\rangle + a' \langle \bar{d}, \bar{s}^\star\rangle + \langle \bar{b}, \bar{s}^\star\rangle \langle \bar{d}, \bar{s}^\star\rangle \right) \\
&= \left( \prod_{k' \in [k]} g(s_{k'-1},s_{k'})\right)\left( a'c'+\langle c'\bar{b} + a' \bar{d}, \bar{s}^\star\rangle + \langle \bar{b} \otimes \bar{d}, \bar{s}^\star \otimes \bar{s}^\star \rangle \right),
\end{align*}
where we have use a property of the tensor product that $\langle a_1,b_1\rangle \langle a_2,b_2 \rangle = \langle a_1 \otimes a_2, b_1 \otimes b_2 \rangle$, where $a_1 \otimes a_2, b_1 \otimes b_2 \in \bR^{p \times p}$ and their inner product is interpreted as the inner product between the vectorized matrices. Thus, if we take 
$$
\theta_v = 1 \oplus \bar{s}^\star \oplus (\bar{s}^\star \otimes \bar{s}^\star) \in \bR^{1 + p + p^2}
$$
and 
$$
\phi(s_{k,i}) = \left( \prod_{k' \in [k]} g(s_{k'-1},s_{k'})\right)\left( a'c' \oplus \left(c'\bar{b} + a' \bar{d}\right) \oplus \left(\bar{b} \otimes \bar{d}\right) \right) \in \bR^{1+p+p^2},
$$
then we have $v^\circ(s_{k,i}) = \langle \phi(s_{k,i}),\theta_v \rangle$ as desired. Thus $v^\circ$ is linear with features in dimension $1 + p + p^2$. Note that the norm of the features and the parameter $\theta_v$ is also bounded by constants. 
\end{proof}

This completes the proof for $v^\circ$-linearity. We next tackle the analogous statement for $\pi^\circ$-linearity. 

\begin{lemma}
There exists a feature map $\phi_\pi$ and parameter $\theta_\pi$ of dimension $d \approx p$ such that
	$$
\pi^\circ(s_{k,i}) = \argmax_a \{\langle \phi_\pi(s,a), \theta_\pi \rangle\}
$$
\end{lemma}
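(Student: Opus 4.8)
The plan is to realize each expert action as the $\argmax$ of a carefully engineered \emph{priority score} that is affine in the hidden vector $s^\star$. Fix a state $s = s_{k,i}$; from the recorded history the learner observes the current cube vector $\sigma(s)\in\{\pm 1\}^p$ and the set $P(s)\subseteq[p]$ of indices already played in the current phase (write $u_a(s)=\mathbbm{1}[a\notin P(s)]$ for the ``unplayed'' indicator). By its definition, $\pi^\circ(s)$ is $\min G(s)$ where $G(s):=\{a : u_a(s)=1,\ \sigma_a(s)\neq s^\star_a\}$ whenever $G(s)\neq\emptyset$, and otherwise the earliest played index, which we take to be $\min P(s)$. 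The identity to exploit is $\mathbbm{1}[\sigma_a(s)\neq s^\star_a]=\tfrac12\big(1-\sigma_a(s)\,s^\star_a\big)$, which is affine in $s^\star$ with coefficients that are observable at $s$.

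First I would fix constants $B_1>B_2>B_3p$ (concretely $B_1=4$, $B_2=2$, $B_3=\tfrac1{2p}$) and set
\[
\mathrm{score}(s,a)\;=\;B_1\,u_a(s)\,\tfrac{1-\sigma_a(s)\,s^\star_a}{2}\;+\;B_2\,(1-u_a(s))\;+\;B_3\,(p-a).
\]
Expanding, $\mathrm{score}(s,a)=\alpha_a(s)+\beta_a(s)\,s^\star_a$ with $\alpha_a(s)=\tfrac{B_1}{2}u_a(s)+B_2(1-u_a(s))+B_3(p-a)$ and $\beta_a(s)=-\tfrac{B_1}{2}u_a(s)\sigma_a(s)$, both observable at $s$ and bounded by an absolute constant. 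Taking $\phi_\pi(s,a):=\alpha_a(s)\,e_0\oplus\beta_a(s)\,e_a\in\bR^{1+p}$ (an extra coordinate $e_0$ plus the $a$-th standard basis vector $e_a$ of $\bR^p$) and $\theta_\pi:=1\oplus s^\star\in\bR^{1+p}$ — which depends only on the MDP instance — yields $\langle\phi_\pi(s,a),\theta_\pi\rangle=\mathrm{score}(s,a)$. This has dimension $d=p+1=\Theta(p)$; since $\argmax$ is scale-invariant I would rescale $\phi_\pi$ so that $\|\phi_\pi(s,a)\|_2\le 1$ for all $s,a$, which forces $\|\theta_\pi\|_2=\Theta(\sqrt p)$, so Assumption~\ref{ass:exp-lin-pi} is met with $B=\Theta(\sqrt p)$; on the absorbing state $\bot$ the policy is vacuous and $\phi_\pi$ may be set arbitrarily.

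It remains to check $\argmax_a\mathrm{score}(s,a)=\pi^\circ(s)$. Since $G(s)$-membership forces $a\notin P(s)$, the three summands never reinforce one another, so the score sorts the actions into three tiers: value in $[B_1,B_1+B_3p)$ on $G(s)$ (``unplayed and disagreeing''), in $[B_2,B_2+B_3p)$ on $P(s)$, and in $[0,B_3p)$ on the remaining (``unplayed and agreeing'') indices; because $B_3p<B_2$ and $B_3p<B_1-B_2$ these intervals are disjoint, and within each tier the $B_3(p-a)$ term orders strictly toward smaller indices. Hence if $G(s)\neq\emptyset$ the maximizer is $\min G(s)=\pi^\circ(s)$; if $G(s)=\emptyset$ then $P(s)\neq\emptyset$ — otherwise all indices would be unplayed and agreeing, i.e. $\sigma(s)=s^\star$, impossible since any non-$\bot$ state lies strictly outside the $p/4$-ball around $s^\star$ — so the top non-empty tier is $P(s)$ and the maximizer is $\min P(s)=\pi^\circ(s)$ (and this argument also covers the ``frozen'' states, since the score reads only $P(s)$ and agreement with $s^\star$, exactly the quantities the expert rule branches on). I expect the main obstacle to be precisely this last bookkeeping — verifying the tier separation with explicit constants, confirming the invariant that every queried state is either $\bot$ or lies outside the $p/4$-ball (so that $G(s)=\emptyset\Rightarrow P(s)\neq\emptyset$), and pinning down the reading of ``earliest played index'' — while the remainder is a routine constant chase; this is also where $d=\Theta(p)$ rather than $\Theta(p^2)$, since Equation~\eqref{eq:lin-exp-pi} is scale-insensitive and needs no quadratic lift.
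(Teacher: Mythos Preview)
Your proof is correct and takes a genuinely different route from the paper's. The paper uses the much simpler feature map $\phi_\pi(s,a)\propto \tau(s_{k,i},a)$ (the cube vector with bit $a$ flipped) and $\theta_\pi = 1\oplus s^\star$, so that $\langle\phi_\pi(s,a),\theta_\pi\rangle$ is, up to an additive constant, $-\rho(\tau(s,a),s^\star)$; it then argues that in both the ``flip'' and ``freeze'' branches the expert's action lies in $\argmin_a\rho(\tau(s,a),s^\star)$, which suffices because Assumption~\ref{ass:exp-lin-pi} only asks for $\pi^\circ(s)\in\argmax_a\langle\phi_\pi(s,a),\theta_\pi\rangle$. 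In contrast, your tiered priority score bakes the played/unplayed indicator $u_a(s)$ and the lexicographic tie-break $B_3(p-a)$ directly into the features, so you obtain $\pi^\circ(s)$ as the \emph{unique} maximizer. This buys you robustness: the paper's argument in the freeze case leans on the transition semantics (playing a frozen index leaves the vector unchanged, whereas any unplayed index, necessarily agreeing, would increase the distance), and on the $\in\argmax$ relaxation to absorb ties among multiple disagreeing or multiple played indices; your construction sidesteps both, at the cost of a more elaborate feature map. One small point: your claim that ``any non-$\bot$ state lies strictly outside the $p/4$-ball'' is stronger than you need --- it suffices that $P(s)=\emptyset$ forces $s=s_{k,0}$, and the phase-start invariant $\rho(s_{k,0},s^\star)>p/4$ already rules out $\sigma(s)=s^\star$.
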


\begin{proof}
Let $s_{k,i}$ be a state of interest. Recall that $\pi^\circ$ will either 1) flip the earliest index which has not been fixed such that the value of $s_{k,i}$ at that index disagrees with $s^\star$, or 2) if no such index exists, freeze the current round by playing a frozen index.
First consider a state $s_{k,i}$ such that $\pi^\circ(s_{k,i})$ will flip an index. Since the index flipped was previously incorrect and will thereafter agree with $s^\star$ on that bit, this corresponds to minimizing the distance between $s'$ and $s^\star$ amongst all possible $s'$ which can be reached in one step from $s_{k,i}$ (i.e. amongst all possible $\{\tau(s_{k,i},a)\}_a$, recalling that $\tau(s_{k,i},a)$ is our notation for the transition function of the MDP). Thus, $\pi^\circ(s_{k,i}) \in \argmin_a\{ \rho(\tau(s_{k,i},a), s^\star) \}$. This can be written linearly as $\argmin_a \{\frac{1}{2}(p - \langle \tau(s_{k,i},a), s^\star \rangle)  \} = \argmax_a \{ \frac{1}{2}(\langle \tau(s_{k,i},a), s^\star \rangle - p)\} = \argmax_a \{ \frac{1}{2}(\langle p \oplus \langle \tau(s_{k,i},a), s^\star \rangle , 1 \oplus s^\star \rangle) \}$. %
The second case is that $\pi^\circ$ will freeze the round at the state $s_{k,i}$. This means that there are no indices which are incorrect that have not been frozen in this round. Again, this corresponds to minimizing the distance between $\tau(s_{k,i},a)$ and $s^\star$: all other choices will either result in a game over (which has $0$ value) or will flip an incorrect bit (which increases the distance). Thus, again we have $\pi^\circ(s_{k,i}) \in \argmin_a\{ \rho(\tau(s_{k,i},a), s^\star) \} = \argmax_a \{ \frac{1}{2}(\langle p \oplus \langle \tau(s_{k,i},a), s^\star \rangle , 1 \oplus s^\star \rangle) \}$.

Thus, in either case, we have that $\pi^\circ$ is linear with features $\phi_\pi(s,a) = p \oplus \tau(s_{k,i},a)$ and $\theta^\circ = 1 \oplus s^\star$. Since the definition of the $\argmax$ is scale-insensitive, we can further normalize to obtain that the features are bounded in magnitude by a constant. 
\end{proof}

\subsection{$\CubeGame$ with expert advice, and Proof of Theorem \ref{thm:reduction}}\label{sec:expert-game}

Following the approach of \citep{weisz2021tensorplan}, we give our lower bound by providing a reduction to an abstract game called $\CubeGame$. Any learning algorithm which can solve the MDPs in our construction can also be used to solve $\CubeGame$, and thus it follows that the learner will be subject to the same lower bound. For our setting, we modify the reward function $\CubeGame$ and augment the learner with the ability to query an expert, which will behave identically to the expert policy which we have defined in our MDPs (that is, it will flip the first bit which is incorrect or give a special actions to indicate if all of the bits are correct). For the rest of this section, when referring to $\CubeGame$ we are referring to our modified version. 

In \citep{weisz2021tensorplan} it is shown that any algorithm which outputs a $0.01-$optimal answer for the (expert-less) $\CubeGame$ will need a query complexity of $N \geq 2^{\Omega(p \wedge K)}$. In what follows, we will provide the analogous proof of this for our modified game. Thus, the main result of this section is the following, which states that if the learner is not given a budget of $\Omega(p/ \log p)$ expert queries, the sample complexity remains exponential.  A learning algorithm for $\CubeGame$ will be called a \textit{planner}, and a planner which returns a $0.01$-optimal answer at the end of $\CubeGame$ will be called \textit{sound}.

\paragraph{Rules of $\CubeGame$} $\CubeGame$ is a bandit-like game which is defined by two parameters: a length $K \in \bN_+$ and a dimension $p \in \bN_+$. The ``action space'' is $W = \{\pm 1\}^p$. Recall that $\rho(x,y) = \frac{1}{2}(p - x^\top y)$ is the Hamming distance between two vectors in $W$. The secret parameter which solves the game is housed in the set $W^\star = \{w \in W \mid \tfrac{p}{4} \leq \rho(\vec{1}, w) \leq \tfrac{3p}{4}\}$. The planner can only input sequences of vectors where each vector is sufficiently far from the previous one. Formally, for any $k \in [K]$, we let $W^{\circ k} = \{(w_i)_{i \in [k]} \in W^k \mid \forall i \in [k]: \rho(w_{i-1},w_i) \geq p/4 \}$, with $w_0 \coloneqq \vec{1}$. The action space is:
$
\cA = \cup_{k \in [K]} W^{\circ k},
$
thus the planner can input any sequence of length $k \leq K$ satisfying that $(w_i) \in W^{\circ k}$. 

The reward function is defined by 
\begin{align*}
f_{w^\star}\left( (w_i)_{i \in [k]} \right) &= \left( \prod_{i \in [k]} g(\rho(w_{i-1},w_i)) \right) g(\rho(w_k,w^\star)),  \\
\text{ where: }g(x) &= 1 - \frac x p,
\end{align*}
with base case $f( () ) = g(\rho(w_0,w^\star))$.

While the planner plays, it chooses sequence lengths $L_t \in [K]$ and input sequences $S_t = (w_i^t)_{i \in [L_t]} \in W^{\circ L_t}$. If it chooses to stop playing, it chooses an output $S_t = (w_i^t)_{i \in [8]} \in W^{\circ 8}$ (we distinguish this case by letting $L_t = 0$ denote that the planner has chosen to terminate). Thus the number of actions taken is $N = \min\{t \in \bN_+ \mid L_t = 0\}$.

After any action $S_t$, the reward is given if either $\rho(w_{L_t}^t,w^\star) < p/4$ or if $L_t = K$. In both cases the reward is $\Ber(f_{w^\star}(S_t))$, a Bernoulli random variable with mean $f_{w^\star}(\cdot)$. Similarly, if the planner is done (i.e. the input is $S_N \in W^{\circ 8}$), then the reward given is $R = f_{w^\star}((w_i^N)_{i \in [k^\star]})$, where $k^\star = \min\{8, \min\{k \mid \rho(w_k^N, w^\star) < p/4\}\}$.

The last thing to specify is the oracle. Here, we allow the planner to query the oracle \textit{part-way} through a sequence. Namely, if the planner chooses to input a sequence of length $L_{t} < K$, then the oracle can be queried at the end of the sequence, and a second sequence of length $L^2_{t} \leq K - L_t$ can be inputted. This can be repeated as many times as desired, given that the total sequence length remains $\leq K$. The oracle will simulate the expert policy from before: upon being called at a vector $w_{k,i}$, it will either return the index of the first bit which does not agree with $w^\star$, or it will return a special action indicating that all bits are correct. 

We are now ready for the main theorem. 
\begin{theorem}\label{thm:cube-game}
Any sound planner for $\CubeGame$ which has less than $\Omega(p/\log p)$ oracle queries, will have a sample complexity
$$
N \geq 2^{\Omega(p \wedge K)}
$$ 
\end{theorem}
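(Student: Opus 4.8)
The plan is to run an information-theoretic argument on $\CubeGame(p,K)$ directly, reusing the reward-signal analysis of \citep{weisz2021tensorplan} essentially as a black box and dealing with the newly added expert oracle by a separate, deliberately crude accounting. Fix the hard prior $\nu$ on the hidden parameter $w^\star$ from \citep{weisz2021tensorplan}: it is supported on a set of size $2^{\Omega(p)}$, so $\Pr_\nu[w^\star \in S] \le |S|\,2^{-\Omega(p)}$ for every set $S$. The first ingredient, unchanged from \citep{weisz2021tensorplan}, is the reduction from soundness to identification: since $g(\rho(w_{i-1},w_i)) \le 3/4$ on every legal move (as $\rho(w_{i-1},w_i)\ge p/4$) and the last factor of the payoff is $g(\rho(w_k,w^\star))$, a $0.01$-optimal output sequence must enter the $p/4$-ball of $w^\star$ at one of its phases, and the set of such sequences determines $w^\star$ up to $2^{o(p)}$ possibilities; by a Fano-type argument, and averaging over $\nu$ to pass to a worst-case $w^\star$, this forces any sound planner to produce a transcript $\cT$ — the entire record of rewards and oracle answers — with $I(w^\star;\cT) = \Omega(p)$. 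Adding the oracle only supplies more information, so this necessary condition is unaffected.

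The next step is to split the information in $\cT$ between oracle answers and reward samples. Writing $\cT = (O_1,O_2,\dots)$ with each $O_t$ either an oracle answer or one Bernoulli reward, the chain rule gives $I(w^\star;\cT) \le \sum_t I(w^\star;O_t \mid O_{<t})$. An oracle answer is a symbol in $[p]\cup\{\text{``all bits correct''}\}$, so it contributes at most $\log(p+1)$; over $q$ oracle calls this totals at most $q\log(p+1)$, which is $o(p)$ as soon as $q = o(p/\log p)$. It then remains to show that the reward observations contribute only $o(p)$ unless $N$ is exponentially large; combined with the first paragraph this proves the theorem by contradiction.

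The crux is bounding the reward information by a hitting argument, exactly as in \citep{weisz2021tensorplan}, but with the oracle-conditioning carried through the induction. Couple the true game with a ``null'' game whose reward samples are replaced by i.i.d.\ draws from a fixed, $w^\star$-independent law. By the reward structure (geometric decay across the $K$ phases together with the $p/4$-ball threshold) a reward sample is non-negligibly informative about $w^\star$ only on an event $E_t$ — the submitted sequence ``catching up'' to $w^\star$ — which, under any posterior obeying the small-set bound above, has probability at most $2^{-\Omega(p\wedge K)}$; this is precisely the quantity extracted in \citep{weisz2021tensorplan}. One then shows by induction on $t$ that, conditioned on $\overline{E_1},\dots,\overline{E_{t-1}}$, the planner's posterior on $w^\star$ is $\nu$ conditioned only on the at most $q$ oracle answers seen so far; since those answers partition the support of $\nu$ into at most $(p+1)^q = 2^{o(p)}$ classes, with overwhelming $\nu$-probability the conditioned posterior still obeys a small-set bound at scale $2^{-(\Omega(p)-o(p))} = 2^{-\Omega(p)}$, hence $\Pr[E_t \mid \overline{E_{<t}}] \le 2^{-\Omega(p\wedge K)}$. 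A union bound over the $N$ reward samples then gives $\Pr[\exists t : E_t] \le N\,2^{-\Omega(p\wedge K)}$; off this event the transcript coincides with the null transcript, which is independent of $w^\star$, so after accounting for the exceptional probability $I(w^\star;\cT) = o(p)$ whenever $N \le 2^{c(p\wedge K)}$ for a small enough constant $c > 0$. This contradicts $I(w^\star;\cT) = \Omega(p)$, so every sound planner using fewer than $\Omega(p/\log p)$ oracle calls must have $N \ge 2^{\Omega(p\wedge K)}$. Combined with the already-established reduction $\MDP(p,K) \to \CubeGame(p,K)$ this yields Theorem~\ref{thm:reduction}, and hence Theorems~\ref{thm:sqrt-d} and~\ref{thm:lin-pi} via the choices $p = \Theta(\sqrt d)$ and $p = \Theta(d)$ respectively.

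The main obstacle is keeping the induction in the third step free of circularity: one needs ``the posterior stays close to the oracle-conditioned prior because no hit has occurred'' and ``hits are rare because the posterior is close to that prior'' to hold simultaneously, which is the delicate change-of-measure argument of \citep{weisz2021tensorplan}. The genuinely new point is propagating the adaptive oracle-conditioning through this argument: a single oracle answer can in principle certify a long exact prefix of $w^\star$, so one must check that conditioning on all $q$ answers degrades the relevant small-set scale only from $2^{-\Omega(p)}$ to $2^{-(\Omega(p) - q\log p)}$ — and it is exactly because we charge each answer the crude $\log(p+1)$ bits, rather than following the prefix structure precisely, that the bound reads $\Omega(p/\log p)$ oracle calls rather than $\Omega(p)$.
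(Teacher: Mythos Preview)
Your approach is correct and reaches the same conclusion, but it is packaged differently from the paper's proof. The paper never invokes Fano or mutual information explicitly. Instead it performs a two-stage change of measure: first from the real game to an ``abstract game'' $(0,w^\star)$ in which all rewards are set to zero but the oracle still answers truthfully (this absorbs the length-$K$ Bernoulli rewards at cost $(1-\varepsilon)^n$ with $\varepsilon = (3/4)^{K+1}$), and then from $(0,w^\star)$ to a game $(0,0)$ in which the oracle is replaced by a uniformly random index (this costs exactly a multiplicative $p^x$, where $x$ is the oracle budget). Summing $P^A_{0,\hat w}(\hat w \in Z)$ over all $\hat w$ and bounding $\bE_{0,0}[|Z|]$ by hypercube counting then produces, by averaging, a single adversarial $w^\star$ on which the planner misses the $p/4$-ball with probability at least $0.81$; soundness plus Markov finishes. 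Your Fano-based route is the Bayesian dual of this same argument: your $(p+1)^q$ degradation of the small-set bound is precisely the paper's $p^x$ likelihood ratio, and your ``hit event'' union bound is the counterpart of $\bE_{0,0}[|Z|]$. What each buys: the paper's route is more self-contained and yields explicit constants without needing to verify which variant of Fano applies with hit probability only $\approx 0.25$ (the list-decoding form is needed here, since naive Fano requires success probability near $1$); your route is more modular and makes the information budget of the oracle ($q\log(p+1)$ bits) transparent, which cleanly explains why the threshold is $\Omega(p/\log p)$ rather than $\Omega(p)$.

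One small imprecision worth flagging: your inductive claim that conditioned on $\overline{E_1},\dots,\overline{E_{t-1}}$ the posterior is ``$\nu$ conditioned only on the oracle answers'' is not literally true --- the misses themselves also prune the posterior by removing the balls around past queries. This only removes $O(n)\cdot 2^p e^{-p/8}$ points, so it is harmless for the small-set bound, but it should be stated. The paper sidesteps this by working in the frequentist picture (fixed $w^\star$, change of measure) rather than tracking a posterior.
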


The proof comes in 5 lemmas, two of which (Lemmas \ref{lem:prop-f-star} and \ref{lem:hypercube-counting}) are analogous to properties from the expert-less $\CubeGame$. The other 3 lemmas are information-theoretic and are specific to the oracle setting. 

First, some properties about the reward function of $\CubeGame$.
\begin{lemma}[Properties of $f_{w^\star}$]\label{lem:prop-f-star}
For any $w^\star \in W^\star, k \in \bN$, $s = (w_{k'})_{k' \in [k]} \in W^{\circ k}$, we have
\begin{align*}
    \frac{1}{4} &\leq f_{w^\star}(()) \leq \frac{3}{4}\\
    0 &< f_{w^\star}(s) \leq \left(\frac{3}{4}\right)^{k + \mathbbm{1}[\rho(w_k, w^\star) \geq p/4]} 
\end{align*}
\end{lemma}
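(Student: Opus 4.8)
The plan is to establish the two displayed assertions separately, each by directly unwinding the definitions of $f_{w^\star}$, of $g(x) = 1 - x/p$, of the Hamming distance $\rho$, and of the sets $W^\star$ and $W^{\circ k}$. Both parts are elementary: the first is a one-line evaluation of the base case, and the second is a product bound with a short case analysis on the indicator.

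For the first line, I would simply evaluate the base case. By definition $f_{w^\star}(()) = g(\rho(w_0,w^\star)) = 1 - \rho(\vec{1},w^\star)/p$, using $w_0 = \vec{1}$. The hypothesis $w^\star \in W^\star$ says precisely that $p/4 \le \rho(\vec{1},w^\star) \le 3p/4$; dividing by $p$ and subtracting from $1$ yields $1/4 \le f_{w^\star}(()) \le 3/4$, which is exactly the claim.

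For the second line, write $f_{w^\star}(s)$ as the product of the $k$ ``step factors'' $g(\rho(w_{i-1},w_i))$, $i \in [k]$, times the single ``goal factor'' $g(\rho(w_k,w^\star))$. Each factor has the form $1 - \rho(\cdot,\cdot)/p$ with $\rho(\cdot,\cdot) \in \{0,1,\dots,p\}$, hence lies in $[0,1]$; this already gives $f_{w^\star}(s) \ge 0$, and since $g(x)>0$ for $x \le p-1$, strict positivity holds whenever no two consecutive entries of $s$ are antipodal and $w_k \ne -w^\star$. For the upper bound I would invoke the defining constraint of $W^{\circ k}$: every $i \in [k]$ satisfies $\rho(w_{i-1},w_i) \ge p/4$, so $g(\rho(w_{i-1},w_i)) \le 3/4$, and multiplying these $k$ factors gives $\prod_{i \in [k]} g(\rho(w_{i-1},w_i)) \le (3/4)^k$. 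It then remains to bound the goal factor: we always have $g(\rho(w_k,w^\star)) \le 1$, and additionally $g(\rho(w_k,w^\star)) \le 3/4$ exactly when $\rho(w_k,w^\star) \ge p/4$. Combining the two cases according to the value of $\mathbbm{1}[\rho(w_k,w^\star) \ge p/4]$ gives $f_{w^\star}(s) \le (3/4)^{k + \mathbbm{1}[\rho(w_k,w^\star) \ge p/4]}$, which is the claim.

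I do not anticipate a genuine obstacle: the statement is a bookkeeping fact about a product of numbers in $[0,3/4]$. The only points that need a moment's care are the case split on the indicator in the goal factor, and — if one wants the strict inequality $0 < f_{w^\star}(s)$ to hold verbatim for every $s \in W^{\circ k}$ — the degenerate configurations in which some Hamming distance attains its maximal value $p$; these return zero reward and are irrelevant to the downstream soundness argument, so they can simply be excluded.
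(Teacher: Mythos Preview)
Your proposal is correct and is exactly the elementary unwinding one would expect; the paper itself does not spell out the argument but simply states it is analogous to \cite[Lemma~4.2]{weisz2021tensorplan} with the first-order $g$ substituted, and your write-up is precisely that analogue. Your caveat about the strict inequality $0<f_{w^\star}(s)$ failing at antipodal configurations is well-taken and is the only subtlety; as you note, such sequences contribute zero reward and play no role downstream.
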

\begin{proof}
The proof is analogous to \citep[Lemma 4.2]{weisz2021tensorplan}, substituting our first-order $g$ function.
\end{proof}

The following parameters will control our sample complexity:
\begin{align}
n &= \min\left\{ \exp(p/8)p^{-x}/20 - 5, \left(\tfrac 1 \varepsilon - 1 \right) / 9.9 \right\}, \,\quad \text{ where } \label{eq:sample-comp} \\
x &= \frac{p}{16} \log p, \,\quad \text{ and } \label{eq:oracle-comp}\\
\varepsilon &= \left(\frac 3 4 \right)^{K+1} \label{eq:epsilon}
\end{align}
We will see that $N = \Omega(n)$ for any sound planner. 

In the original game of \cite{weisz2021tensorplan}, the interaction protocol is captured by $(X_t, Y_t)_{t \in [N]}$, where 
\begin{itemize}
    \item $N = \min\{t \in \bN_+ \mid L_t = 0\}$ is the interaction length
    \item $L_t \in [K]$ is the input length chosen,
    \item $S_t$ is the sequence $(w_i)_{i \in [L_t]}$ inputted, satisfying $\rho(w_{i-1},w_i) \geq \tfrac p 4$
    \item $X_t = (L_t, S_t)$,
    \item $U_t = \mathbbm{1}\{\rho(w_{L_t-1}, w^\star) < p/4 \}$,
    \item $V_t = \mathbbm{1}\{\rho(w_{L_t}, w^\star) < p/4 \}$, and
    \item $Z_t = 0$ unless $V_t = 1$ or $L_t = K$, in which case $Z_t = \text{Ber}(f_{w^\star}(S_t))$,
    \item $Y_t = (U_t, V_t, Z_t)$.
\end{itemize}

In our case, we need the sequence $S_t$ to include every bit flip, thus $S_t = (w_{k,i})_{k \in [L_t], i \in [p]}$%
We also have two new variables, namely $O_t = (o_{k,i})$ which is an indicator that the oracle was called at step $i$ of phrase $k$ and $E_t = (e_{k,i})$ which is the answer returned by the oracle. Thus our new interaction protocol is defined by $X_t = (L_t, S_t, O_t, E_t)$ and $Y_t = (U_t, V_t, Z_t)$, where $Y_t$ remains unchanged. 

The planner $A$, with $n$ interactions, in the environment defined by $w^\star$, defines a distribution over the environment: 
\begin{equation}\label{eq:factorize}
P^{A,n}_{w^\star}((X_t,O_t,E_t,Y_t)_t) = \prod_{i=1}^n p(x_i|x_{1:i-1},o_{1:i-1},e_{1:i-1},y_{1:i-1})p(o_i|x_{1:i},o_{1:i-1},e_{1:i-1},y_{1:i-1})p(e_i|x_i,o_i)p(y_i|x_i).
\end{equation}
Note that $p(x_i|x_{1:i-1},o_{1:i-1},e_{1:i-1},y_{1:i-1})$ and $p(o_i|x_{1:i},o_{1:i-1},e_{1:i-1},y_{1:i-1})$ are decisions made by the planner and $p(e_i|x_i,o_i)p(y_i|x_i)$ and $p(y_i|x_i)$ are obtained by querying the environment. We define the ``abstract game $(0,w^\star)$'' to always yield reward 0, and which has the same oracle as environment $w^\star$. It's distribution will be written as $P^A_{(0,w^\star)}$. Let $E_n^{w^\star}$ be the event that in the first $n$ steps the planner does not hit on any vector that is close to $w^\star$:
$$
E_n^{w^\star} = \cap_{t \in [n]} \left\{ t > N \text{ or } (t=N \text{ and } \min_{i \in [8]} \rho(w_i^N, w^\star) \geq \tfrac p 4) \text { or } (t < N \text{ and } \rho(w_{L_t-1}, w^\star) \geq \tfrac p 4 \text{ and } \rho(w_{L_t}, w^\star) \geq \tfrac p 4 )\right\}
$$
\begin{lemma}[A first change of measure]\label{lem:change-of-measure}
For any planner $A$ and any $w^\star \in W$, we have
$$
P^{A}_{w^\star}(E_n^{w^\star}) \geq \tfrac{9}{10} P^{A}_{(0,w^\star)}(E_n^{w^\star})
$$
\end{lemma}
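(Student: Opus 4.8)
The plan is to run a likelihood-ratio (change-of-measure) argument between the two environments $w^\star$ and $(0,w^\star)$. These differ only in the reward channel: in $(0,w^\star)$ every $Z_t$ is identically $0$, while in $w^\star$ the value $Z_t$ is a Bernoulli variable with mean $f_{w^\star}(S_t)$ whenever a reward is issued (and $Z_t=0$ otherwise). Every other quantity fed back to the planner --- the indicators $U_t,V_t$, the oracle-call records $O_t$, and the oracle answers $E_t$ --- is the same measurable function of $(X_t,w^\star)$ in both environments, since the oracle is identical. Hence in the factorization \eqref{eq:factorize} the planner's conditionals $p(x_i\mid\cdots)$, $p(o_i\mid\cdots)$ and the oracle conditional $p(e_i\mid x_i,o_i)$ all cancel between $P^A_{w^\star}$ and $P^A_{(0,w^\star)}$, and the only surviving discrepancy is in the $z_i$-marginal of $p(y_i\mid x_i)$.

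First I would pass to the sub-event $G$ that $Z_t=0$ for every $t\in[n]$, so that $P^A_{w^\star}(E_n^{w^\star})\ge P^A_{w^\star}(E_n^{w^\star}\cap G)$, and then rewrite $P^A_{w^\star}(E_n^{w^\star}\cap G)$ as an integral against $P^A_{(0,w^\star)}$ of the Radon--Nikodym derivative. On $G$ this derivative equals $\prod_{t}\bigl(1-f_{w^\star}(S_t)\mathbbm{1}[\text{a reward is issued at step }t]\bigr)$: for steps with no reward the $z_t$-marginals agree (both put unit mass on $0$), and for steps where a reward is issued the ratio of the probabilities of observing $Z_t=0$ is $(1-f_{w^\star}(S_t))/1$. (Note that $P^A_{w^\star}$ is not globally absolutely continuous w.r.t.\ $P^A_{(0,w^\star)}$, but its restriction to $G$ is, which is all that is used.)

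The quantitative heart of the argument is that on $E_n^{w^\star}$ every reward issued in the first $n$ steps has small mean. On $E_n^{w^\star}$ the phase endpoints never enter the $p/4$-ball of $w^\star$, so $V_t=0$ and a reward is issued at step $t$ only when $L_t=K$; but then $S_t=(w_1,\dots,w_K)$ with $\rho(w_{i-1},w_i)\ge p/4$ for all $i$ (forced by $S_t\in W^{\circ K}$) and $\rho(w_K,w^\star)\ge p/4$, so a direct bound on $f_{w^\star}$ (cf.\ Lemma~\ref{lem:prop-f-star}) gives $f_{w^\star}(S_t)\le(3/4)^{K+1}=\varepsilon$. Consequently the derivative is at least $(1-\varepsilon)^n$ on $E_n^{w^\star}\cap G$, and since $P^A_{(0,w^\star)}(G)=1$ we obtain
\begin{equation*}
P^A_{w^\star}(E_n^{w^\star})\;\ge\;(1-\varepsilon)^n\,P^A_{(0,w^\star)}(E_n^{w^\star}).
\end{equation*}
To finish I would use $(1-\varepsilon)^n\ge\exp\!\bigl(-n\varepsilon/(1-\varepsilon)\bigr)$ together with the bound $n\le(1/\varepsilon-1)/9.9$ from \eqref{eq:sample-comp}, which gives $n\varepsilon/(1-\varepsilon)\le 1/9.9$ and hence $(1-\varepsilon)^n\ge e^{-1/9.9}>9/10$.

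I expect the main difficulty to be bookkeeping rather than conceptual: one must set up the interaction measure on the first $n$ steps and its factorization \eqref{eq:factorize} carefully enough that the derivative computation is rigorous --- in particular checking that the oracle answers $E_t$ genuinely cancel (they depend on $(X_t,O_t,w^\star)$ only, never on past rewards), that randomized planners are covered by the conditional factorization, and that $E_n^{w^\star}$ is measurable with respect to the planner-observable history so that conditioning on $E_n^{w^\star}\cap G$ is legitimate. One also needs the minor observation that a single interaction $t$ contributes at most one reward term $Z_t$ even when the oracle is queried several times within that interaction, so the exponent in $(1-\varepsilon)^n$ never exceeds $n$.
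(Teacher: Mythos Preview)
Your proposal is correct and follows essentially the same change-of-measure argument as the paper: cancel the planner and oracle conditionals in the factorization \eqref{eq:factorize}, observe that on $E_n^{w^\star}$ rewards are issued only when $L_t=K$ and then $f_{w^\star}(S_t)\le(3/4)^{K+1}=\varepsilon$, and conclude $P^A_{w^\star}(E_n^{w^\star})\ge(1-\varepsilon)^n P^A_{(0,w^\star)}(E_n^{w^\star})$. The only cosmetic differences are that the paper decomposes $E_n^{w^\star}$ over individual histories (rather than introducing your event $G$, which is equivalent since $P^A_{(0,w^\star)}(G)=1$) and that the paper derives $(1-\varepsilon)^n>9/10$ via a monotonicity argument for $(1-\tfrac{1}{1+9.9n})^n$ rather than your log-inequality $(1-\varepsilon)^n\ge e^{-n\varepsilon/(1-\varepsilon)}$.
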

\begin{proof}
It will be shown that 
\begin{equation}\label{eq:1-eps}
    P^{A}_{w^\star}(E_n^{w^\star}) \geq (1-\varepsilon)^n P^{A}_{(0,w^\star)}(E_n^{w^\star}).
\end{equation}
Since $1-\varepsilon \geq 1 - \frac{1}{1+9.9n}$ by our definition of $n$ we have that $(1-\varepsilon)^n \geq (1-\frac{1}{1+9.9n})^n \geq \lim_{n \rightarrow \infty} (1 - \frac{1}{1+9.9.n})^n > 9/10$, and thus Equation \eqref{eq:1-eps} implies our result. 

Let $\cH$ be the set of all possible histories $(x_t,o_t,e_t,y_t)$ of length $n$, and $E_h = E_n^{w^\star} \cap \{H=h\}$. Note that $E_n^{w^\star}$ is the disjoint union of $E_h$, so it is enough to show that for each $h$ we have:
$$
\rho = \frac{P_{w^\star}^{A}[E_h]}{P_{(0,w^\star)}^{A}[E_h]} \geq (1-\varepsilon)^n,
$$
for each $h$ such that $P_{(0,w^\star)}^A(E_h) > 0$. 
So, let $h = (x_t, o_t, e_t, y_t)$ be such that $P^A_{(0,w^\star)}(E_h) > 0$. This implies in particular that $y_t = (0,0,0) \, \forall t$. 
Now, both $P^A_{w^\star}$ and $P^A_{(0,w^\star)}$ factorize according to Eq. \eqref{eq:factorize}, giving:
\begin{equation*}
    \frac{P^A_{w^\star}[E_h]}{P^A_{(0,w^\star)}[E_h]} = \prod_i \frac{p(x_i|x_{1:i-1},o_{1:i-1},e_{1:i-1},y_{1:i-1})p(o_i|x_{1:i},o_{1:i-1},e_{1:i-1},y_{1:i-1})p^{w^\star}(e_i|x_i,o_i)p^{w^\star}(y_i|x_i)}{p^A(x_i|x_{1:i-1},o_{1:i-1},e_{1:i-1},y_{1:i-1})p^A(o_i|x_{1:i},o_{1:i-1},e_{1:i-1},y_{1:i-1})p^{(0,w^\star)}(e_i|x_i,o_i)p^{(0,w^\star)}(y_i|x_i)}.
\end{equation*}
Since we are conditioning on the same fixed history, the terms involving decisions made by the planner will cancel, and similarly the variable $E_i$ also behaves the same in both environments (since the oracle for $w^\star$ is the same). We are left with:
\begin{equation*}
    \rho = \prod_i \frac{p^{w^\star}(y_i=(0,0,0)|x_i)}{p^{(0,w^\star)}(y_i=(0,0,0)|x_i)} 
\end{equation*}
The denominator always has probability $1$ in the environment $(0,w^\star)$, and since $U_t = V_t = 0$ under the set $E_n^{w^\star}$ (the planner is never close to $w^\star$), we have $Y_t = 0 \iff Z_t = 0$, so it remains to control 
$$
\rho = \prod_{i=1}^n P^A_{w^\star}(Z_t = 0 | x_t).
$$
Again since the planner is never close to $w^\star$, $Z_t = 1$ only if $l_t = K$, in which case we have $P^A_{w^\star}(Z_t =1 | x_t) \geq (3/4)^{K+1} = \varepsilon$ by definition of the reward $f_{w^\star}$ obtained from reaching level $K$.
\end{proof}
The next lemma simply bounds the number of vectors in $W$ which are close to any fixed vector. 
\begin{lemma}[Hypercube counting]\label{lem:hypercube-counting}
For $\tilde w \in W$, let $\Wclose(\tilde w) = \{w \in W \mid \rho(w,\tilde w) < p / 4 \}$. Then $|\Wclose(\tilde w)| \leq 2^p \exp(-p/8)$
\end{lemma}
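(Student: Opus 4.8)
The plan is to reduce the count to a binomial tail and then apply a Chernoff/Hoeffding bound. First I would observe that $|\Wclose(\tilde w)|$ does not depend on $\tilde w$: the componentwise-multiplication map $w \mapsto w \cdot \tilde w$ is a bijection of $W = \{\pm1\}^p$ onto itself, and it sends $\{w : \rho(w,\tilde w) < p/4\}$ onto $\{w : \rho(w,\vec 1) < p/4\}$, since $\rho(w,\tilde w) = \rho(w\cdot\tilde w, \vec 1)$. So it suffices to bound $|\Wclose(\vec 1)|$, the number of $\pm1$-vectors with fewer than $p/4$ entries equal to $-1$. That count is exactly $\sum_{j=0}^{\lceil p/4\rceil - 1} \binom{p}{j}$.

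Next I would pass to a probabilistic interpretation: if $X \sim \mathrm{Bin}(p,\tfrac12)$ then $\sum_{j=0}^{\lceil p/4\rceil-1}\binom{p}{j} = 2^p\,\Pr[X < p/4] \le 2^p\,\Pr[X \le p/4]$. Writing $X$ as a sum of $p$ i.i.d.\ $\mathrm{Ber}(1/2)$ variables (which are bounded in $[0,1]$) and applying Hoeffding's inequality with deviation $t = p/4$ below the mean $p/2$ gives
\begin{equation*}
\Pr[X \le p/2 - p/4] \le \exp\!\left(-\frac{2(p/4)^2}{p}\right) = \exp(-p/8).
\end{equation*}
Combining the two displays yields $|\Wclose(\tilde w)| = |\Wclose(\vec 1)| \le 2^p\exp(-p/8)$, which is the claim.

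There is no real obstacle here; the only points requiring a sliver of care are (i) the translation-invariance reduction, so that the bound is uniform in $\tilde w$, and (ii) keeping the strict-versus-non-strict inequality $\rho < p/4$ consistent when bounding by $\Pr[X \le p/4]$, which only loosens the estimate. If one preferred to avoid invoking Hoeffding's inequality as a black box, the same bound follows from the standard binomial Chernoff estimate $\sum_{j \le \alpha p}\binom{p}{j} \le 2^{p H_2(\alpha)}$ with $\alpha = 1/4$ together with $1 - H_2(1/4) \ge 1/(8\ln 2)$, but the Hoeffding route is cleanest and self-contained.
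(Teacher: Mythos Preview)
Your proof is correct. The paper itself does not give an independent argument but simply cites \cite[Lemma 4.4]{weisz2021tensorplan}; your reduction to a binomial tail followed by Hoeffding's inequality is exactly the standard proof of this fact and is almost certainly what the cited reference does as well.
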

\begin{proof}
Identical to \citep[Lemma 4.4]{weisz2021tensorplan}.
\end{proof}

Recall that $E_n^{w^\star}$ is the ``bad event'' for the planner. We study its complement, $(E_n^{w^\star})^c$, which satisfies $(E_n^{w^\star})^c \subset \{ w^\star \in Z\},$ where
$$
Z \coloneqq \cup_{t \in [n \wedge (N-1)]} \left(\Wclose(w^t_{L_t-1}) \cup \Wclose(w^t_{L_t}\right) \cup ( \cup_{i \in [8]} \Wclose(w_i^N)),
$$
i.e. the event that for some $t$ we have $\rho(w^t_{L_t-1},w^\star) < p/4$ or $\rho(w^t_{L_t},w^\star) < p/4$ or that for some $i \in [8]$ we have $\rho(w_i^N, w^\star) < p/4$.
We define the ``abstract game'' $(0,0)$, where the planner has access to an oracle but, when queried, rather than giving information about the ``true'' $w^\star$, the oracle will simply return a uniformly random bit in $[p]$.

\begin{lemma}[A second change of measure]\label{lem:unif-oracle}
 For any planner $A$ with an oracle budget of $x$, we have that:
$$
P^A_{0,\hat w}(\hat w \in Z) = p^x P^A_{0,0}(\hat w \in Z)
$$
\end{lemma}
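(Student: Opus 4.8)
The plan is to compare the two measures $P^A_{0,\hat w}$ and $P^A_{0,0}$ by a likelihood-ratio (change-of-measure) argument, exactly mirroring the structure of Lemma \ref{lem:change-of-measure}, and to observe that the \emph{only} factor in the product decomposition \eqref{eq:factorize} that differs between the two environments is the oracle-answer kernel $p(e_i \mid x_i, o_i)$. In the environment $(0,\hat w)$ the oracle deterministically returns the (unique) index of the first bit on which the current vector disagrees with $\hat w$ (or the special ``all correct'' symbol), whereas in $(0,0)$ it returns a uniformly random index in $[p]$. Since both games always yield reward $0$, the $Y_t$-kernels $p(y_i \mid x_i)$ are identical (both put all mass on $y_i = (0,0,0)$), and since we condition on the same fixed history the planner's decision kernels $p(x_i \mid \cdot)$ and $p(o_i \mid \cdot)$ cancel in the ratio, just as in Lemma \ref{lem:change-of-measure}.

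Concretely, I would first fix a history $h = (x_t, o_t, e_t, y_t)_{t\in[n]}$ in the event $\{\hat w \in Z\}$ (note $\hat w \in Z$ is measurable w.r.t.\ the $x_t$'s, so it factors through $h$) with $P^A_{0,0}(E_h) > 0$, and write out
$$
\frac{P^A_{0,\hat w}(E_h)}{P^A_{0,0}(E_h)} = \prod_{i=1}^n \frac{p^{\hat w}(e_i \mid x_i, o_i)}{p^{0}(e_i \mid x_i, o_i)},
$$
all other factors having cancelled. On a step where the oracle is \emph{not} queried ($o_i = 0$) both kernels are the trivial (no answer) kernel and contribute a factor of $1$. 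On a step where the oracle \emph{is} queried, the denominator is $1/p$ (uniform over $[p]$; I would treat the ``all correct'' symbol as an extra outcome or fold it in — this is a minor bookkeeping point), while the numerator is $1$ precisely because in the $(0,\hat w)$ environment the oracle's answer is deterministic given $(x_i, o_i)$, and the answer recorded in $h$ must be consistent with $\hat w$ for $E_h$ to have positive probability under $P^A_{0,\hat w}$ (if it is inconsistent, both sides are $0$ and the identity is vacuous on that history). Hence each queried step contributes a factor of exactly $p$, and with an oracle budget of $x$ queries the product is $p^x$. Summing over the disjoint histories $h$ comprising $\{\hat w \in Z\}$ gives $P^A_{0,\hat w}(\hat w \in Z) = p^x\, P^A_{0,0}(\hat w \in Z)$.

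The main subtlety — and the step I would be most careful about — is handling histories where the recorded oracle answers are \emph{inconsistent} with $\hat w$: there the numerator kernel vanishes, so $P^A_{0,\hat w}(E_h) = 0$, and one must check this does not break the claimed equality (it does not, since we only ever sum these histories into the left-hand side with weight $0$, and the right-hand side only counts, via the $p^x$ factor applied to $P^A_{0,0}$, the ``consistent'' histories — so one should really phrase the argument as: $P^A_{0,\hat w}$ is supported on $\hat w$-consistent histories, and on those the ratio is exactly $p^x$). A second minor point is that the number of queried steps could \emph{a priori} be less than the full budget $x$ on some histories; but since an unused query contributes a factor $1$, and the statement is an identity with $p^x$ on the right, I would either (i) assume w.l.o.g.\ that a sound planner exhausts its budget, or (ii) note the lemma is applied in the regime where it suffices to have $P^A_{0,\hat w}(\hat w\in Z) \le p^x P^A_{0,0}(\hat w\in Z)$, which holds regardless since each query contributes a factor of at most $p$. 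Everything else is routine telescoping of the factorization \eqref{eq:factorize}.
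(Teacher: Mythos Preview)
Your proposal is correct and follows essentially the same approach as the paper: both factor the joint distribution via \eqref{eq:factorize}, cancel the planner's decision kernels and the (identically zero) reward kernels, and are left with the product of oracle-answer ratios, each of which contributes a factor of $p$ per query. You are in fact more careful than the paper about the two subtleties you flag --- histories whose recorded oracle answers are inconsistent with $\hat w$, and histories using fewer than $x$ queries --- and your resolution is the right one: what is actually established (and all that is used downstream in the proof of the ``bad $w^\star$'' lemma) is the inequality $P^A_{0,\hat w}(\hat w\in Z)\le p^x P^A_{0,0}(\hat w\in Z)$, not the literal equality in the lemma statement.
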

\begin{proof}
As before, consider the set of histories $\cH$, and let $Z_h = Z \cap \{H = h\}$. Writing out the importance ratio gives:
\begin{equation*}
    \rho = \frac{P^A_{0,w^\star}[Z_h]}{P^A_{(0,0)}[Z_h]} = \prod_i \frac{p(x_i|x_{1:i-1},o_{1:i-1},e_{1:i-1},y_{1:i-1})p(o_i|x_{1:i},o_{1:i-1},e_{1:i-1},y_{1:i-1})p^{0,w^\star}(e_i|x_i,o_i)p^{(0,w^\star)}(y_i|x_i)}{p(x_i|x_{1:i-1},o_{1:i-1},e_{1:i-1},y_{1:i-1})p(o_i|x_{1:i},o_{1:i-1},e_{1:i-1},y_{1:i-1})p^{(0,0)}(e_i|x_i,o_i)p^{(0,0)}(y_i|x_i)}
\end{equation*}
Again, as before, all the terms involving the planner will cancel, since they are conditioned on the same history and thus the planner will make the same decisions. Similarly, in both games the reward is deterministically $0$ thus the $p^{(0,w^\star)}(y_i|x_i) = p^{(0,0)}(y_i|x_i)$. We are left with
$$
\rho = \prod_{i=1}^n \frac{ p^{(0,w^\star)}(E_i = e_i \mid x_i,o_i)}{ p^{(0,0)}(E_i = e_i \mid x_i,o_i)}
$$
Note that the top probability is deterministic (since the true expert is) and is only equal to $1$ at most $x$ times (recalling that $x$ is the total number of oracle calls allowed). We are simply left with the (inverse of the) probability that the random oracle returns any given answer, which is $1/p$. Thus we end up with $\rho = p^x$.
\end{proof}

\begin{lemma}[Finding a bad $w^\star$ for planner $A$]
For any abstract planner $A$ there exists $w^\star \in W^\star$ such that 
$$
P_{w^\star}^A(E_n^{w^\star}) \geq (9/10)^2
$$
\end{lemma}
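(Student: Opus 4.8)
The plan is to prove this by the probabilistic method over $w^\star \in W^\star$, combined with the two change-of-measure lemmas already established. First I would apply Lemma~\ref{lem:change-of-measure}: since it gives $P^A_{w^\star}(E_n^{w^\star}) \ge \tfrac{9}{10}\,P^A_{(0,w^\star)}(E_n^{w^\star})$ for every fixed $w^\star$, it suffices to exhibit $w^\star \in W^\star$ with $P^A_{(0,w^\star)}(E_n^{w^\star}) \ge \tfrac{9}{10}$. Next I would use the containment $(E_n^{w^\star})^c \subseteq \{w^\star \in Z\}$ for the random set $Z = \bigcup_{t \le n\wedge(N-1)}\big(\Wclose(w^t_{L_t-1}) \cup \Wclose(w^t_{L_t})\big) \cup \bigcup_{i\in[8]}\Wclose(w_i^N)$, which reduces the target further to finding $w^\star \in W^\star$ with $P^A_{(0,w^\star)}(w^\star \in Z) \le \tfrac{1}{10}$. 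Since the minimum over a set is at most its average, it is enough to show $\bE_{w^\star \sim \mathrm{Unif}(W^\star)}\big[P^A_{(0,w^\star)}(w^\star \in Z)\big] \le \tfrac{1}{10}$.

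To bound this average I would invoke Lemma~\ref{lem:unif-oracle}: because $A$ has an oracle budget of $x$ (Eq.~\eqref{eq:oracle-comp}), for each fixed $w^\star$ we have $P^A_{(0,w^\star)}(w^\star\in Z) \le p^x\, P^A_{(0,0)}(w^\star\in Z)$. The key point is that in the doubly-trivial environment $(0,0)$ the reward is identically $0$ and the oracle is pure noise, so the law $P^A_{(0,0)}$ of the interaction — and hence of $Z$, which is a function only of the planner's observable history — does not depend on $w^\star$. By Fubini this yields $\bE_{w^\star}\big[P^A_{(0,0)}(w^\star\in Z)\big] = \bE_{(0,0)}\big[|Z\cap W^\star|\big]/|W^\star| \le \bE_{(0,0)}\big[|Z|\big]/|W^\star|$. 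Now $Z$ is a union of at most $2n+8$ Hamming balls, each of size at most $2^p e^{-p/8}$ by Lemma~\ref{lem:hypercube-counting}, so $\bE_{(0,0)}[|Z|] \le (2n+8)2^p e^{-p/8}$; and applying the same counting bound to $\Wclose(\vec{1})$ and $\Wclose(-\vec{1})$ gives $|W^\star| \ge 2^p(1-2e^{-p/8})$. Combining, $\bE_{w^\star}\big[P^A_{(0,w^\star)}(w^\star\in Z)\big] \le \dfrac{p^x(2n+8)e^{-p/8}}{1-2e^{-p/8}}$.

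Finally I would plug in the definitions of $n$ and $x$ (Eqs.~\eqref{eq:sample-comp}, \eqref{eq:oracle-comp}): the bound $n+5 \le \exp(p/8)\,p^{-x}/20$ is exactly what forces $p^x(2n+8)e^{-p/8} \le \tfrac{1}{10}$, and the harmless prefactor $1/(1-2e^{-p/8})$ is absorbed by assuming $p$ exceeds an absolute constant. Hence the average is $\le \tfrac{1}{10}$, so some $w^\star \in W^\star$ satisfies $P^A_{(0,w^\star)}(w^\star\in Z) \le \tfrac{1}{10}$, whence $P^A_{(0,w^\star)}(E_n^{w^\star}) \ge \tfrac{9}{10}$ and, by Lemma~\ref{lem:change-of-measure}, $P^A_{w^\star}(E_n^{w^\star}) \ge (9/10)^2$. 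I expect the main obstacle to be the bookkeeping in the second change of measure — verifying that $Z$ (and the containment $(E_n^{w^\star})^c\subseteq\{w^\star\in Z\}$) depends only on the planner's observable interaction so that it is a fixed random variable under $P^A_{(0,0)}$ independent of $w^\star$, and that the $p^x$ cost really does survive the expectation over $w^\star$ while remaining small for the prescribed $x$; the remaining concentration and constant-chasing is routine.
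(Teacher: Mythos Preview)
Your proposal is correct and follows essentially the same approach as the paper: reduce via Lemma~\ref{lem:change-of-measure} to the $(0,w^\star)$ game, use the containment $(E_n^{w^\star})^c \subseteq \{w^\star \in Z\}$, apply Lemma~\ref{lem:unif-oracle} to pass to the $(0,0)$ game where the law of $Z$ no longer depends on $w^\star$, and then combine Fubini with the counting bound from Lemma~\ref{lem:hypercube-counting} on $|Z|$ and $|W^\star|$. The paper phrases the averaging step as picking $w^\star = \argmin_{\hat w \in W^\star} P^A_{(0,\hat w)}(\hat w \in Z)$ and then summing over all of $W$ (rather than averaging over $W^\star$ and bounding $|Z\cap W^\star|\le|Z|$ as you do), but this is the same probabilistic-method argument with cosmetically different bookkeeping.
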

\begin{proof}
Note that, by Lemma \ref{lem:change-of-measure}, it is sufficient to show that $P^A_{0,w^\star}((E_n^{w^\star})^c) \leq \frac{1}{10}$. Recall that for any $\hat{w} \in W^\star$ we have $(E_n^{\hat w})^c \subseteq \{\hat w \in Z\}, $ where 
$$
Z \coloneqq \cup_{t \in [n \wedge (N-1)]} \left(\Wclose(w^t_{L_t-1}) \cup \Wclose(w^t_{L_t}\right) \cup ( \cup_{i \in [8]} \Wclose(w_i^N))
$$
By a union bound and Lemma \ref{lem:hypercube-counting} we have that $|Z| \leq (2n+8)2^p \exp(-p/8)$. Since $W^\star = W \setminus \Wclose(1) \setminus \Wclose(-1)$, this also gives that $|W^\star| \geq 2^p(1-2\exp(-p/8))$. We pick $w^\star$ according to:
$$
w^\star = \argmin_{\hat w \in W^\star} P^A_{0,w^\star}(\hat w \in Z).
$$
Putting things together and using Lemma \ref{lem:unif-oracle} gives:
\begin{align*}
    &2^p(1-2\exp(-p/8)) P_{0, w^\star}^A(w^\star \in Z) \leq |W^\star| P_{0, \hat w}^A(w^\star \in Z) \\
    &\leq \sum_{\hat w \in W^\star} P_{0,\hat w}^A(\hat w \in Z) \leq \sum_{\hat w \in W}P^A_{0,\hat w}(\hat w \in Z) \leq \sum_{\hat w \in W} p^x P^A_{0,0}(\hat w \in Z) \\
    &= p^x \sum_{\hat w \in W} P^A_{0,0}(\hat w \in Z) = p^x \sum_{\hat w \in W} \bE_{0,0}^A \left[ \mathbbm{1}[\hat w \in Z]\right] = p^x \bE_{0,0}^A\left[\sum_{\hat w \in W} \mathbbm{1}[\hat w \in Z]\right] \\
    &= p^x \bE_{0,0}^A [|Z|] \leq p^x (2n+8)2^p\exp(-p/8)
\end{align*}
Rearranging gives that
$$
P^A_{0,w^\star}( (E_n^{w^\star})^c) \leq P^A_{0,w^\star}(w^\star \in Z) \leq \frac{(2n+8)2^p\exp(-p/8)p^x}{2^p(1-2\exp(-p/8)} \leq 2(n+5)p^x \exp(-p/8) \leq \frac{1}{10},
$$
where the last line followed from our bound on $n$ (Eq. \eqref{eq:sample-comp}).
\end{proof}

We are now ready to prove Theorem \ref{thm:cube-game}. In fact, there is not much left to do. 

\begin{proof}[Proof (of Theorem \ref{thm:cube-game})]
Let the planner $A$ be sound and have an expected query cost $\bar N$, and $w^\star$ the vector from the previous lemma. Then by Markov's inequality we have:
$$
P^A_{(0,w^\star)}[N-1 \geq n] \leq \frac{ \bar N }{ n }
$$
Letting $E' = E_n^{w^\star} \cap \{ N-1 < n\}$ we have
$$
P_{0,w^\star}^A[E'] \geq (9/10)^2 - \frac{ \bar N }{n}
$$
Under event $E'$, the output of the planner satisfies $\rho(w_i^N, w^\star) \geq p/4$ for all $i \in [8]$, so the reward at the end of the game is $R < (3/4)^9$. Combined with soundness this gives
\begin{align*}
    \frac{1}{4} - 0.01 \leq f_{w^\star}(()) - 0.01 \leq \bE_{w^\star}^A[R] &\leq (\frac 3 4)^9 + (1 - P^A_{0,w^\star}[E']) \frac 3 4\\
    &\leq (\frac 3 4)^9 + (1 - (9/10)^2) \frac 3 4+ \frac{\bar N }{ n} \frac 3 4 ,
\end{align*}
which requires $\bar N > 0.02 n $, namely
$$
N > 0.02 \min\left\{ \exp(p/8)p^{-x}/16 - 5, \left(\tfrac 1 \varepsilon - 1 \right) / 7.5 \right\}
$$
Lastly, note that when $x \leq \frac{p}{16\log p}$ we have 
$$
\log n = \log (\exp(p/8)) - \log(p^{x}) = \frac p 8 - x\log p \geq \frac p 8 - \frac{p}{16\log p}\log p = \frac{p}{16}, 
$$
thus $n = \Omega(\exp(p/16))$ and in particular $N = \Omega\{2^{p \wedge K}\}$.
\end{proof}

To prove Theorem \ref{thm:reduction}, the last thing to show is that a learner which solves the MDP can be used to solve $\CubeGame$. This reduction follows exactly as in Section 4.8 of \cite{weisz2021tensorplan}

\newpage

    \section{On $\pi^\circ$ linearity}\label{app:pi-lin}
    This section shows that,  when $\pi^\circ \neq \pi^\star$, $q^\circ$ can be linear with $d-$dimensional features yet these features do not realize $\pi^\circ$-linearity.

\renewcommand{\r}{\texttt{r}}

The MDP is as follows: the states are arranged in a binary tree of length $H$. The action space is $\{\ell, \r\}$, corresponding to the $\ell$eft and $\r$ight actions. Transitions are deterministic. The reward for every $\ell$eft action is $-1$, the reward for every $\r$ight action is $+1$. See Figure \ref{fig:tree}.

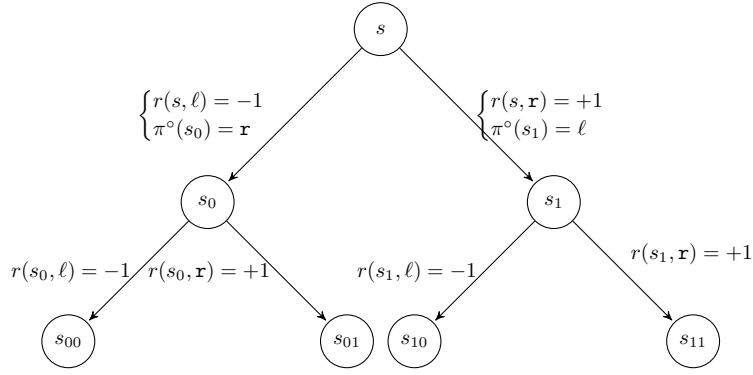
\begin{figure}[h]
    \centering
    \begin{tikzpicture}[>=stealth',node distance=1.5cm, scale=2,every node/.style={scale=0.8},shorten >=1pt,auto, every loop/.style={looseness=1}]
  \node[state] (root)      {$s$};
  
  \node[state] (0) [below left= 1in of root] {$s_{0}$};
  \node[state] (1) [below right= 1in of root] {$s_{1}$};
  
  \node[state] (00) [below left= 0.75in of 0] {$s_{00}$};
  \node[state] (01) [below right= 0.75in of 0] {$s_{01}$};
  \node[state] (10) [below left= 0.75in of 1] {$s_{10}$};
  \node[state] (11) [below right= 0.75in of 1] {$s_{11}$};
 
    \node[state,draw=none] (r00) [below = 0.3in of 00] {};
    \node[state,draw=none] (r01) [below = 0.3in of 01] {};
    \node[state,draw=none] (r10) [below = 0.3in of 10] {};
    \node[state,draw=none] (r11) [below = 0.3in of 11] {};
  
    \path[->] (root) edge [] node[scale=1,left,midway] {$\begin{cases} r(s,\ell)=-1 \\ \pi^\circ(s_0) = \r \end{cases}$} (0);
    \path[->] (root) edge [] node[scale=1,right,midway] {$\begin{cases} r(s,\texttt{r})=+1 \\ \pi^\circ(s_1) = \ell \end{cases}$} (1);
    
    \path[->] (0) edge [] node[scale=1,left,midway] {$r(s_0,\ell)=-1$} (00);
    \path[->] (0) edge [] node[scale=1,left,midway] {$r(s_0,\texttt{r})=+1$} (01);
    \path[->] (1) edge [] node[scale=1,left,midway] {$r(s_1,\ell)=-1$} (10);
    \path[->] (1) edge [] node[scale=1,midway] {$r(s_1,\texttt{r})=+1$} (11);
    
\end{tikzpicture} 
\caption{$q^\circ$-linearity does not imply $\pi^\circ$-linearity}
    \label{fig:tree}
\end{figure}

Note that we can identify every state with the action sequence that led to it (with the starting state corresponding to the empty sequence). The policy $\pi^\circ$ is defined such that, if $s = (a_0,\dots,\ell)$ then $\pi^\circ(s) = \r$ and otherwise if $s = (a_0, \dots, \r)$ then $\pi^\circ(s) = \ell$. Thus the policy will alternate the action taken at every step. This defines the $q^\circ$ function:
$$
q^\circ(s,a) = \begin{cases}
0, & \text{ if } h = 0 \text{ mod } 2 \\
-1, & \text{ if } h=1 \text{ mod } 2 \text{ and } a = \ell \\
+1, & \text{ if } h=1 \text{ mod } 2 \text{ and } a = \r
\end{cases}
$$

Note that we can linearize this in one dimension via the features $\phi^\circ(s,a) = q^\circ(s,a)$ and $\theta = 1$. However, these features do not linearly-realize $\pi^\circ$: since $\theta > 1$ then the argmax at every odd horizon will always be the right action (since $\phi^\circ(s,\r) = 1 $ and $\phi^\circ(s,\ell) = -1$).

Note that $\pi' = \argmax_a \{q^\circ(s,a) \}$, the greedy policy derived from $q^\circ$ is by definition linear with those features. For the special case where $\pi^\circ = \pi^\star$ then the greedy policy $\pi'$ lines up with the policy $\pi^\circ$, so we get linearity for free in that case.

\end{document}